\setlist[enumerate]{itemsep=0mm}
\setlist[itemize]{itemsep=0mm}
\newcommand{\yws}[1]{} % remove note
\newcommand{\rmm}[1]{}
\def\1{\bm{1}}
\DeclareMathAlphabet{\mathsfit}{\encodingdefault}{\sfdefault}{m}{sl}
\SetMathAlphabet{\mathsfit}{bold}{\encodingdefault}{\sfdefault}{bx}{n}
\def\gA{{\mathcal{A}}}
\def\gB{{\mathcal{B}}}
\def\gC{{\mathcal{C}}}
\def\gD{{\mathcal{D}}}
\def\gF{{\mathcal{F}}}
\def\gG{{\mathcal{G}}}
\def\gM{{\mathcal{M}}}
\def\gN{{\mathcal{N}}}
\def\gP{{\mathcal{P}}}
\def\E{\mathbb{E}}
\def\Var{\mathrm{Var}}
\def\tr{\mathrm{tr}}
\def\R{\mathbb{R}}
\def\cF{\mathcal{F}}
\def\MSE{{\mathrm{MSE}}}
\newcommand{\Z}{\mathbb{Z}}
\newcommand{\N}{\mathbb{N}}
\newcommand{\conv}{\textrm{conv}}
\DeclareMathOperator*{\argmin}{arg\,min}
\newtheorem{theorem}{Theorem}
\newtheorem{lemma}[theorem]{Lemma}
\newtheorem{proposition}[theorem]{Proposition}
\newtheorem{corollary}[theorem]{Corollary}
\newtheorem{claim}[theorem]{Claim}
\newtheorem{definition}{Definition}
\newtheorem{remark}{Remark}
\newcommand{\act}{\sigma}
\newcommand{\mat}[1]{\mathbf{#1}}
\newcommand{\vect}[1]{\boldsymbol{#1}}
\newenvironment{eqal}{
\begin{equation}
\begin{aligned}
}
{
\end{aligned}    
\end{equation}}
\newenvironment{eqal*}{
\begin{equation*}
\begin{aligned}
}
{
\end{aligned}    
\end{equation*}}
\def\equationautorefname~#1\null{%
  (#1)\null
}
\newcommand{\modif}[1]{#1}
\title{Deep Learning meets Nonparametric Regression: Are Weight Decayed DNNs Locally Adaptive?}
\author{%
%   David S.~Hippocampus\thanks{Use footnote for providing further information
%     about author (webpage, alternative address)---\emph{not} for acknowledging
%     funding agencies.} \\
%   Department of Computer Science\\
%   Cranberry-Lemon University\\
%   Pittsburgh, PA 15213 \\
%   \texttt{hippo@cs.cranberry-lemon.edu} \\
   Kaiqi Zhang \\
   Department of Electrical and Computer Engineering \\
   University of California, Santa Barbara\\
   \textit{kzhang70@ucsb.edu}
   \And
   Yu-Xiang Wang \\
   Department of Computer Science\\
   University of California, Santa Barbara\\
   \textit{yuxiangw@cs.ucsb.edu}
}
\begin{document}

\maketitle

\begin{abstract}
% A great amount of research has been trying to understand why neural networks can achieve superior performance on many machine learning tasks, such as understanding neural networks' adaptivity to the H\"{o}lders, Sobolev and Besov spaces. 
% However, many previous studies require tuning the architecture based on the function space and sample size to achieve the optimal error rate.
% In this work, we study a deep ReLU neural network's adaptivity achieved by a parallel neural network without tuning the architecture. 
% We show that with only weight decay, a parallel neural network is equivalent to an $\ell_p$-penalized regression problem where $0 < p < 1$.
% %additive model with $p$ norm constraint where $0 < p < 1$. 
% Using such equivalent model, we study the error rate when the target function is in the Besov or bounded variation (BV) space. 
% We prove that by tuning the weight decay, a deep parallel neural network can achieve close to minimax error rate. 
% Additionally, deep neural models achieve closer to the minimax error rate than shallow models when the sample size is large. 

We study the theory of neural network (NN) from the lens of classical nonparametric regression problems with a focus on NN's ability to \emph{adaptively} estimate functions with \emph{heterogeneous smoothness} --- a property of functions in Besov or Bounded Variation (BV) classes. 
Existing work on this problem requires tuning the NN architecture based on the function spaces and sample size. 
We consider a ``Parallel NN'' variant of deep ReLU networks and show that the standard $\ell_2$ regularization is equivalent to promoting the $\ell_p$-sparsity ($0<p<1$) in the coefficient vector of an end-to-end learned function bases, i.e., a dictionary.
Using this equivalence, we further establish that by tuning only the regularization factor, such parallel NN achieves an estimation error arbitrarily close to the minimax rates for both the Besov and BV classes. 
Notably, it gets exponentially closer to minimax optimal as the NN gets deeper. Our research sheds new lights on why depth matters and how NNs are more powerful than kernel methods.

\end{abstract}

\section{Introduction}
\label{sec:intro}
%Deep learning models \emph{work}.  From speech recognition~\citep{hinton2012deep} to image understanding~\citep{krizhevsky2012imagenet}, from natural languages processing~\citep{vaswani2017attention} to strategic decision making~\citep{silver2017mastering}, these models comprised of stacked blocks of complex computation with a large number of parameters have revolutionized artificial intelligence research and  resulted in applications that have reached almost every aspects of our daily life.

%What are mysterious: why NN is better than others
%But 

\emph{Why} do deep neural networks (DNNs) \emph{work} better? 
%This is a question many have attempted to answer. 
They are universal function approximators~\citep{cybenko1989approximation}, but so are splines and kernels.  They learn data-driven representations, but so are the shallower and linear counterparts such as matrix factorization. 
% There is surprisingly little theoretical understanding on why DNNs are superior to these classical alternatives. 
The theoretical understanding on why DNNs are superior to these classical alternatives is surprisingly limited.

In this paper, we study DNNs in nonparametric regression problems --- a classical branch of statistical theory and methods with more than half a century of associated literatures \citep{nadaraya1964estimating,de1978practical,wahba1990spline,donoho1998minimax,mallat1999wavelet,scholkopf2001learning,rasmussen2006gaussian}.   Nonparametric regression addresses the fundamental problem:
%\begin{itemize}
\begin{list}{$\bullet$}{\topsep=0.0ex \leftmargin=0.2in \rightmargin=0.in \itemsep =-0.0in}
	\item Let $y_i = f(x_i) + \text{Noise}$ for $i=1,...,n$. 
   How can we estimate a  function $f$ using data points $(x_1,y_1),...,(x_n,y_n)$  in conjunction with the knowledge that $f$  belongs to a function class $\gF$?
\end{list}
%\end{itemize}
%of estimating a smooth function using noisy observations of the function values without making assumptions about the form or shape of the function. 
Function class $\gF$ typically imposes only weak regularity assumptions such as 
% boundedness and 
smoothness, which makes nonparametric regression widely applicable to real-life applications under weak assumptions. % especially those with physical processes. 

\noindent\textbf{Local adaptivity.} 
We say a nonparametric regression technique is \emph{locally adaptive} if it can cater to local differences in smoothness, hence allowing more accurate estimation of functions with varying smoothness and abrupt changes. 
A subset of nonparametric regression techniques were shown to have the property of \emph{local adaptivity} \citep{mammen1997locally} in both theory and practice. These include wavelet smoothing \citep{donoho1998minimax}, locally adaptive regression splines \citep[LARS,][]{mammen1997locally}, trend filtering \citep{tibshirani2014adaptive,wang2014falling} and adaptive local polynomials \citep{baby2019online,baby2020higherTV}.  
In light of such a distinction, it is natural to consider the following question: %we focus on a subset of the question 
% \begin{center}
% 	\vspace{-.1cm}
% 	\textsf{
\emph{Are NNs \emph{locally adaptive}, i.e., optimal in learning functions with heterogeneous smoothness?}

% For example, a locally adaptive method will be able to estimate a function $f$ whose \emph{$m$-th order derivative} $f^{(m)}$ has bounded total variation 
% (i.e., when $\gF$ is a $m$-th order bounded variation class) 
% with an optimal mean square error (MSE) of $O(n^{-(2m+2)/(2m+3)})$, while \emph{linear estimators} such as kernel smoothing and smoothing splines have an MSE of $O(n^{-(2m+1)/(2m+2)})$.

\begin{wrapfigure}[12]{R}{0.45\textwidth}
   \centering
   \vspace{-5mm}
   \includegraphics[width=0.45\textwidth]{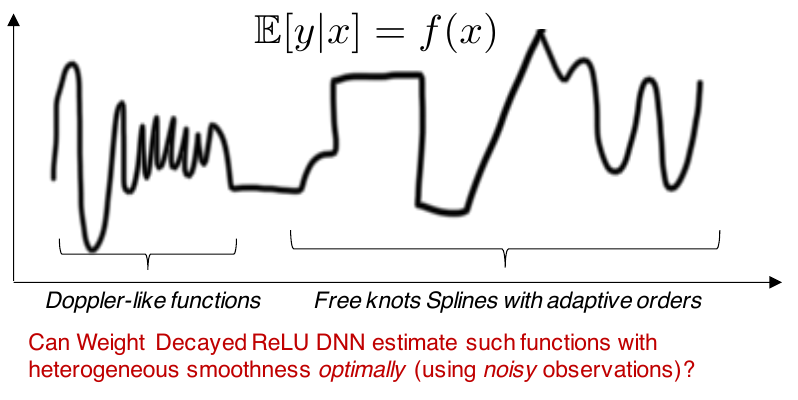}
   \vspace{-2em}
   \caption{Illustration of a function with heterogeneous smoothness and the problem of locally adaptive nonparametric regression. }\label{fig:illus}
\end{wrapfigure}

%for functions whose $k$th order derivative is in a total variation class, a locally adaptive method improves over classical techniques from $O(n^{-(2k+1)/(2k+2)})$ to the optimal $O(n^{-(2k+2)/(2k+3)})$.

%We start by asking the following question about one type of localized structure.

	%    Can NNs efficiently represent and estimate functions with spatially heterogeneous smoothness? }
% 	\vspace{-.1cm}
% \end{center}
%These functions include piecewise linear functions and Doppler-like functions that are very smooth in some regions, but wiggly in other regions. 

This is a timely question to ask, partly because the bulk of recent theory of NN leverages its asymptotic Reproducing Kernel Hilbert Space (RKHS) in the overparameterized regime \citep{jacot2018neural,belkin2018understand,arora2019exact}.
% \footnote{This asymptotic theory on applies to a specific initialization scheme and requires the number of hidden nodes $p \gg n$~\cite{arora2019exact,chizat2019lazy}.};
RKHS-based approaches, e.g., kernel ridge regression with any fixed kernels are \emph{suboptimal} in estimating functions with heterogeneous smoothness \citep{donoho1990minimax}. 
Therefore, existing deep learning theory based on RKHS does not satisfactorily explain the advantages of neural networks over kernel methods. 

We build upon the recent work of \citet{suzuki2018adaptivity} and \citet{parhi2021banach} who provided encouraging first answers to the question above.
%  about the local adaptivity of NNs. 
Specifically, \citet[Theorem 8]{parhi2021banach} showed that a two-layer \emph{truncated power function} activated neural network with a non-standard regularization is equivalent to the LARS.
This connection implies that such NNs achieve the minimax rate for the (high order) bounded variation (BV) classes. 
A detailed discussion is provided in \autoref{sec:warmup}.
\citet{suzuki2018adaptivity} showed that multilayer ReLU DNNs can achieve minimax rate for the Besov class, but requires the artificially imposed sparsity-level of the DNN weights to be calibrated according to parameters of the Besov class, thus is quite difficult to implement in practice.
% different from how DNNs are typically trained 
\modif{
\citep{oono2019approximation, liu2021besov} replaced the sparse neural network with Resnet-style CNN and achieved the same rate, but they similarly require carefully choosing the number of parameters for \emph{each} nonparametric class. We show that $\ell_2$ regularization suffices for \emph{mildly overparameterized} DNNs to achieve the optimal ``local adaptive'' rates for \emph{many} nonparametric classes at the same time. }

% In this paper, we aim at addressing the same \emph{locally adaptivity} question 
% by replacing the requirement on the data-dependent network architecture 
% with $\ell_2$ regularization.
% for a more commonly used neural network with standard $\ell_2$ regularized training.  

\textbf{Parallel neural networks.} We restrict our attention on a special network architecture called \emph{parallel neural network} \citep{haeffele2017global,ergen2021path} which learns an ensemble of subnetworks --- each being a multilayer ReLU DNNs. Parallel NNs have been shown to be more well-behaved both theoretically  \citep{haeffele2017global,zhang2019deep,ergen2021global, ergen2021path, ergen2021revealing} and empirically \citep{zagoruyko2016wide,veit2016residual}. On the other hand, many successful NN architectures such as SqueezeNet, ResNext and Inception (see \citep{ergen2021path} and the references therein) use the idea similar to a parallel NN. 

\modif{
\textbf{Weight decay,}
also known as square \textbf{$\ell_2$ regularization}, is one of the most popular regularization techniques for preventing overfitting in DNNs.
It is called ``weight decay'' because each iteration of the gradient descent (or SGD) shrinks the parameter towards 0 multiplicatively. 
%It is equivalent to penalizing the loss with the summation of squared Frobenius norm of the parameters. 
%Empirically, weight decay can help reduce overfitting. 
Many tricks in deep learning, including early stopping \citep{yao2007early}, quantization \citep{hubara2016binarized}, and dropout \citep{wager2013dropout} behaves like $\ell_2$ regularization. Thus even though we focus on the exact minimizer of the regularized objective, it may explain the behavior of SGD in practice. }
% In this paper, we make no assumption on the training method thus there is no (implicit) regularizers apart from $\ell_2$ regularization. 

%The same limitation of RKHS-based theory is discussed by the recent work of \citep{suzuki2018adaptivity} in which NNs were shown to achieve the optimal rates for 1D Besov class of functions, which separates NN from RKHS methods. Our proposed research in Task 1a further strengthens the results of \citep{suzuki2018adaptivity} in a number of dimensions.

%As neural networks have achieved great success in many machine learning tasks, relevant theoretical analysis has drawn much attention. 
\begin{table}[t]
   \vspace{-5mm}
   \caption{\modif{Comparison with the results in the literature}}
   \label{tab:comp}
   \resizebox{\columnwidth}{!}{
   \begin{tabular}{p{2.3cm}cp{2cm}p{2cm}p{2cm}p{4.2cm}}
      \hline
      & \# layers & Activation   & Function space & Minimax rate & Remark\\
      \hline
      \citet{parhi2021banach, parhi2021near} & 2 & truncated power & $\textrm{BV}^{m}$  & Yes & Non-standard activation and regularization (when $m>1$).\\
      \citet{schmidt2020nonparametric} & $\geq 3$ & ReLU & H\"older & Up to a log factor & With sparsity constraint.\\
      % \citet{parhi2021kinds} & $\geq 3$ & ReLU & compositions of Banach spaces & N/A & two linear layers each activation layer\\
      \citet{suzuki2018adaptivity} & $\geq 3$ & ReLU  & Besov \& m-Besov & Up to a log factor & With sparsity constraint.\\
      \textbf{Ours} & $\geq 3$ & ReLU & Besov \& BV & Up to $n^{o(1)}$ factor & Requires only $\ell_2$ regularization. \\
      \hline
   \end{tabular}
   }
   \vspace{-5mm}
\end{table}

\noindent\textbf{Summary of results.} Our main contributions are:
\begin{enumerate}
	\itemsep0em
	% \item We show that a two-layer truncated power activated neural network with $O(n)$ neurons and a higher-order weight decay regularization from \citet{parhi2021banach} is equivalent to the locally adaptive regression spline \citep{mammen1997locally}, thus achieves the minimax rate for the (higher order) bounded variation (BV) class. Here $n$ is the number of data points.  
	\item  
   %Going beyond 1D functions and two-layer NNs, 
   We prove that the (standard) $\ell_2$ regularization in training an $L$-layer \emph{parallel} ReLU-activated neural network is equivalent to a sparse $\ell_p$ penalty term (where $p = 2/L$) on the linear coefficients of a learned representation (\autoref{prop:eqmodel}).
	%applying $p$-norm regularizer to the coefficients in an additive model,
	%where $p = 2/L$ which induces sparsity in a particular transformed space.
   % \item We extend the generic statistical learning bound for the estimation error. 
   % We decompose the function space to a finite dimension unconstrained space, and an infinite dimension constrained space, and achieve an estimation error bound for this space.
   
   % We show that 
   %neural networks can approximate B-spline basis functions of any order without the need of choosing the order parameter manually. 
   % In other words, 
   % neural networks can adapt to functions of different order of smoothness, and even functions with different smoothness in different regions in their domain.
   \item We show that the estimation error of $\ell_2$ regularized parallel NN 
   % decreases polynomially with the number of samples up to a constant error for estimating functions 
   % % with heterogeneous smoothness 
   % in the both BV and Besov classes,
   % and the exponential term in the error rate
   can be close to the minimax rate for estimating functions in Besov space. 
   Notably, the method can adapt to different smoothness parameter, which is not the case for many other methods. 
	%The key novel technical component is a new approximation-theoretic result showing that one can approximate any Besov class functions with a linear combination of cardinal B-spline wavelets having an $\ell_p$-norm bounded coefficient vector.
	%We study the error rate of training a ReLU neural network with weight decay, and prove that one can achieve close to minimax rate with no need of tuning model architecture.
	\item We find that deeper models achieve closer to the optimal error rate. This result helps explain why deep neural networks can achieve better performance than shallow ones empirically. 
\end{enumerate}

\modif{
Besides, we have the following technical contributions which could be of separate interest:
\begin{itemize}
   \item We provide a way to bound the complexity of an overparameterized neural network. Specifically, we bound the metric entropy of a parallel neural network in \autoref{thm:pnncv}, and the bound does not depend on the number of subnetworks.
   \item We propose a method to handle unconstrained function subspace when bounding the estimation error as in Equation \autoref{eq:errdecom}.
\end{itemize}
}
The above results separate parallel NNs with any linear methods such as kernel ridge regression.
To the best of our knowledge, we are the first to demonstrate that standard techniques ($\ell_2$ regularization and ReLU activation) suffice for DNNs in achieving the optimal rates for estimating BV and Besov functions.
\modif{The comparison with previous works is shown in \autoref{tab:comp}.
More discussion about related works are shown in \autoref{sec:related}. }

\section{Preliminary}
\subsection{Notation and Problem Setup.}
%In the following discussion, 
We denote regular font letters as scalars, bold lower case letters as vectors and bold upper case letters as matrices. 
$a \lesssim b$ means $a \leq C b$ for some constant $C$ that does not depend on $a$ or $b$, and $a \eqsim b$ denotes $a \lesssim b$ and $ b\lesssim a$. See \autoref{tab:symb} for the full list of symbols used.

\begin{table}[b]
   \vspace{-3mm}
   \centering
\caption{Symbols used in this paper}
\label{tab:symb}
\begin{tabular}{cp{5.1cm}|cp{4.7cm}}
   \hline
   symbol & Meaning\\
   \hline
   $a/\vect a/ \mat A$ & scalars / vectors / matrices.
   & $[a, b]$ & $\{x \in \R: a \leq x \leq b\}$\\
   %$a, b,  \dots$ & scalars.\\
   %$\vect a, \vect b, \dots$ & vectors.\\
   %$\mat A, \mat B, \dots$ & matrices.\\
   $B_{p,q}^\alpha$ & Besov space.
   & $[n]$ & $\{x \in \N: 1 \leq x \leq n\}$.\\
   $|\cdot|_{B_{p,q}^\alpha}$ & Besov quasi-norm .
   & $\|\cdot\|_F$ & Frobenius norm.\\
   $\|\cdot\|_{B_{p,q}^\alpha}$ & Besov norm.
   & $\|\cdot\|_p$ & $\ell_p$-norm.\\
   $M_m(\cdot)$ & $m^{th}$ order  Cardinal B-spline bases.
   & $d$ & Dimension of input. \\
   $M_{m,k,\vect s}(\cdot)$ & $m^{th}$ order Cardinal B-spline basis 
   & $M$ & \# subnetworks in a parallel NN.\\
     & function of resolution $k$ at 
   & 
   $L$ & \# layers in a (parallel) NN.\\   
     & position $\vect s$.  
   & $w$ & Width of a subnetwork.\\
   $\act(\cdot)$ & ReLU activation function.
   & $n$ & \# samples. \\
   $\mat W^{(\ell)}_j, \vect b^{(\ell)}_j$ & Weight and bias in the $\ell$-th layer in the $j$-th subnetwork.
   & $\R, \Z, \N$ & Set of real numbers, integers, and nonnegative integers.\\
   \hline 
\end{tabular}
\vspace{-4mm}
\end{table}

% We use $f_0$ to denote the target function to be estimated. 
Let $f_0$ be the target function to be estimated. 
The training dataset is $\gD_n := \{(\vect x_i, y_i), y_i = f_0(\vect x_i) + \epsilon_i, i \in [n]\} $, 
where $x_i$ are fixed and $\epsilon_i$ are zero-mean, independent Gaussian noises with variance $\sigma^2$. In the following discussion, we assume $\vect x_i\in [0, 1]^d, f_0(x_i) \in [-1, 1], \forall i$.

We will be comparing estimators under the mean square error (MSE), defined as
$
\MSE(\hat{f}):= \E_{\gD_n}\frac{1}{n} \sum_{i=1}^n (\hat f(\vect x_i)-f_0(\vect x_i))^2.
$
The optimal worst-case MSE is described by 
$R(\mathcal F) := \min_{\hat{f}}\max_{f_0\in\gF} \MSE(\hat{f})$.
We say that $\hat{f}$ is optimal if $\MSE(\hat{f}) \lesssim R(\mathcal F)$.
The empirical (square error) loss is defined as 
%\begin{eqal*}
 $  \hat L(\hat{f}):= \frac{1}{n} \sum_{i=1}^n (\hat f(\vect x_i)- y_i)^2.$
%\end{eqal*}
The corresponding population loss is $L(\hat{f}) := \E[\frac{1}{n} \sum_{i=1}^n (\hat f(\vect x_i)- y'_i)^2 | \hat{f}]$ where $y'_i$ are new data points. It is clear that $\E[L(\hat{f})] = \MSE[\hat{f}] + \sigma^2$.

% In the following discussion, we assume $\vect x_i\in [0, 1]^d, f_0(x_i) \in [-1, 1], \forall i$.
% To evaluate the performance of an estimator, we compare the mean square error (MSE) against the ground truth:
% \begin{eqal*}
%   L(f) := \E_{\gD_n}\frac{1}{n} \sum_{i=1}^n (\hat f(\vect x_i)-f_0(\vect x_i))^2
% \end{eqal*}

%We use $M_m(\cdot)$ to denote $m$-th order Cardinal B-spline basis functions, and define $B_{m, k, \vect s}(\vect x) := B_m(2^k(\vect x - \vect s))$ as the multi-resolution B-spline basis functions.
% The meanings of symbols used are listed in \autoref{tab:symb}.

\subsection{Besov Spaces and Bound Variation Space}
\textbf{Besov space}, denoted as $B^\alpha_{p, q}$, is a flexible function class parameterized by $\alpha,p,q$ whose definition is deferred to \autoref{sec:besov}.
%plays an important role in nonparametric regression. 
Here $\alpha \geq 0$ determines the smoothness of functions, $1 \leq p \leq \infty$ determines the averaging (quasi-)norm over locations, $1 \leq q \leq \infty$ determines the averaging (quasi-)norm over scale which plays a relatively minor role. Smaller $p$ is more forgiving to inhomogeneity and loosely speaking, when the function domain is bounded, smaller $p$ induces a larger function space. 
On the other hand, it is easy to see from definition that 
$
   B^\alpha_{p, q} \subset B^\alpha_{p, q'}, \textrm{ if } q < q'.
$
Without loss of generalizability, in the following discussion we will only focus on $B^{\alpha}_{p, \infty}$.
%  We will not cover this case in this paper. 
When $p=1$, the Besov space allows higher inhomogeneity, and it is more general than the Sobolev or H\"older space.

\textbf{Bounded variation (BV) space} is a more interpretable class of functions with spatially heterogeneous smoothness \citep{donoho1998minimax}. It is defined through the total variation (TV) of a function. For $(m+1)$th differentiable function $f:[0,1]\rightarrow \R$, the $m$th order total variation is defined as
$
 TV^{(m)}(f) := TV(f^{(m+1)}) = \int_{[0,1]} |f^{(m+1)}(x)| dx,
$
and the corresponding $m$th order Bounded Variation class 
$
      BV(m) := \{f: TV(f^{(m)}) < \infty\}.
$
The more general definition is given in \autoref{sec:tv}.
Bounded variation class is tightly connected to Besov classes. Specifically \citep{devore1993constructive}:
\begin{equation}  
   \label{eq:bvbesov}
   B^{m+1}_{1, 1}\subset BV(m) \subset B^{m+1}_{1, \infty}
\end{equation}
This allows the results derived for the Besov space to be easily applied to BV space.

\textbf{Minimax MSE} It is well known that minimax rate for Besov and 1D BV classes are $O(n^{-\frac{2\alpha}{2\alpha+d}})$ and $O(n^{-(2m+2)/(2m+3)})$ respectively . The minimax rate for \emph{linear estimators} in 1D BV classes is known to be $O(n^{-(2m+1)/(2m+2)})$ \citep{mammen1997locally,donoho1998minimax}.

% See \autoref{sec:besov} for the detail

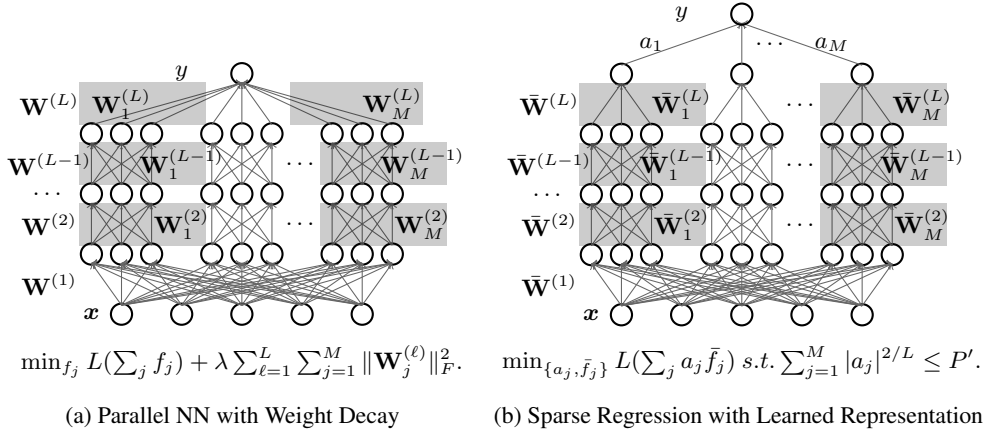
\begin{figure*}
   \centering
   \subcaptionbox{Parallel NN with $\ell_2$ regularization\label{fig:pnn}}{
   \begin{tikzpicture}[
      scale=0.75,
      roundnode/.style={circle, draw=black, thick, minimum size=8, inner sep=0},
      arrow/.style={->, draw=black!60},
      font=\small
   ]
      \foreach \x in {0, ..., 4}{
         \node [roundnode] at (\x,0) {};
      }

      \foreach \y in {1, ..., 3}{  
         \foreach \x in {0, ..., 2}{
            \node [roundnode] at (0.5*\x-0.5,\y) {};
         }
         \foreach \x in {0, ..., 2}{
            \node [roundnode] at (0.5*\x+1.5,\y) {};
         }
         \foreach \x in {0, ..., 2}{
            \node [roundnode] at (0.5*\x+3.5,\y) {};
         }
      }
      \foreach \xa in {0, ..., 4} {
         \foreach \xb in {0, ..., 2} {
            \draw [arrow] (\xa, 0.15) -- (0.5*\xb-0.5, 0.85);
            \draw [arrow] (\xa, 0.15) -- (0.5*\xb+1.5, 0.85);
            \draw [arrow] (\xa, 0.15) -- (0.5*\xb+3.5, 0.85);
         }      
      }
      \foreach \y in {2, ..., 3}{  
         \foreach \xa in {0, ..., 2}{
            \foreach \xb in {0, ..., 2}{
               \draw [arrow] (0.5*\xa-0.5, \y-0.85) -- (0.5*\xb-0.5, \y-0.15);
               \draw [arrow] (0.5*\xa+1.5, \y-0.85) -- (0.5*\xb+1.5, \y-0.15);
               \draw [arrow] (0.5*\xa+3.5, \y-0.85) -- (0.5*\xb+3.5, \y-0.15);
            }  
         }
      }
      \node [roundnode] at (2, 4) {};
      \foreach \x in {0, ..., 2} {
         \draw [arrow] (0.5*\x-0.5, 3.15) -- (2, 3.85);
         \draw [arrow] (0.5*\x+1.5, 3.15) -- (2, 3.85);
         \draw [arrow] (0.5*\x+3.5, 3.15) -- (2, 3.85);
      }
      \node at (-0.5, 0) {$\vect x$};
      \node at (-1.2, 0.5) {$\mat W^{(1)}$};
      \node at (-1.2, 1.5) {$\mat W^{(2)}$};
      \node at (-1.2, 2) {$\dots$};
      \node at (-1.2, 2.5) {$\mat W^{(L-1)}$};
      \node at (-1.2, 3.5) {$\mat W^{(L)}$};

      \draw [fill=black, draw=none, opacity=0.2] (-0.7, 1.15) rectangle (1.4, 1.85);
      \node at (1, 1.5) {$\mat W^{(2)}_1$};
      \draw [fill=black, draw=none, opacity=0.2] (3.3, 1.15) rectangle (5.4, 1.85);
      \node at (5, 1.5) {$\mat W^{(2)}_M$};
      \node at (3, 1.5) {$\dots$};
      \draw [fill=black, draw=none, opacity=0.2] (-0.7, 2.15) rectangle (1.4, 2.85);
      \node at (1, 2.5) {$\mat W^{(L-1)}_1$};
      \draw [fill=black, draw=none, opacity=0.2] (3.3, 2.15) rectangle (5.4, 2.85);
      \node at (5, 2.5) {$\mat W^{(L-1)}_M$};
      \node at (3, 2.5) {$\dots$};
      \draw [fill=black, draw=none, opacity=0.2] (-0.7, 3.15) rectangle (1.4, 3.85);
      \node at (0, 3.5) {$\mat W^{(L)}_1$};
      \draw [fill=black, draw=none, opacity=0.2] (2.8, 3.15) rectangle (5, 3.85);
      \node at (4.5, 3.5) {$\mat W^{(L)}_M$};
      \node at (1, 4) {$y$};
      \node at (2, -0.8) {$\min_{f_j} L(\sum_j f_j) + \lambda \sum_{\ell=1}^{L}\sum_{j=1}^M \|\mat W^{(\ell)}_j\|_F^2.$};
      %------------- block diagonal matrix --------
      \begin{scope}[shift={(-1.5,4.3)}, scale=0.8]
      \foreach \x in {0, ..., 3} {
         \draw [draw=black!40!white] (\x, 0) -- (\x, 3);
         \draw [draw=black!40!white] (0, \x) -- (3, \x);
      }
      \node at (-0.8, 1.5) {$\mat W^{(\ell)}$:};
      \filldraw[draw=none, fill=black, opacity=0.2](0, 2) rectangle (1,3);
      \node at (0.5, 2.5) {$\mat W^{(\ell)}_1$};
      \filldraw[draw=none, fill=black, opacity=0.2](1, 1) rectangle (2,2);
      \node at (1.5, 1.5) {$\mat W^{(\ell)}_2$};
      \filldraw[draw=none, fill=black, opacity=0.2](2, 0) rectangle (3,1);
      \node at (2.5, 0.5) {$\mat W^{(\ell)}_M$};
      \node at (1.5, 2.5) {0};
      \node at (2.5, 2.5) {0};
      \node at (2.5, 1.5) {0};
      \node at (0.5, 0.5) {0};
      \node at (0.5, 1.5) {0};
      \node at (1.5, 0.5) {0};
      \node [anchor=west] at (3, 1) {(c) Block diagonal weights};
      \end{scope}
      \draw [dashed] (-2, 1) rectangle (-0.6, 3);
      \draw [arrow] (-2, 2) to [out=120, in=-120] (-2, 5);
   \end{tikzpicture}
   }
   \subcaptionbox{Sparse Regression with Learned Representation\label{fig:eqnn}}[0.5\textwidth]{
      \begin{tikzpicture}[
        scale=0.75,
         roundnode/.style={circle, draw=black, thick, minimum size=8, inner sep=0},
         arrow/.style={->, draw=black!60},
         font=\small
      ]
         \foreach \x in {0, ..., 4}{
            \node [roundnode] at (\x,0) {};
         }
   
         \foreach \y in {1, ..., 3}{  
            \foreach \x in {0, ..., 2}{
               \node [roundnode] at (0.5*\x-0.5,\y) {};
            }
            \foreach \x in {0, ..., 2}{
               \node [roundnode] at (0.5*\x+1.5,\y) {};
            }
            \foreach \x in {0, ..., 2}{
               \node [roundnode] at (0.5*\x+3.5,\y) {};
            }
         }
         \foreach \xa in {0, ..., 4} {
            \foreach \xb in {0, ..., 2} {
               \draw [arrow] (\xa, 0.15) -- (0.5*\xb-0.5, 0.85);
               \draw [arrow] (\xa, 0.15) -- (0.5*\xb+1.5, 0.85);
               \draw [arrow] (\xa, 0.15) -- (0.5*\xb+3.5, 0.85);
            }      
         }
         \foreach \y in {2, ..., 3}{  
            \foreach \xa in {0, ..., 2}{
               \foreach \xb in {0, ..., 2}{
                  \draw [arrow] (0.5*\xa-0.5, \y-0.85) -- (0.5*\xb-0.5, \y-0.15);
                  \draw [arrow] (0.5*\xa+1.5, \y-0.85) -- (0.5*\xb+1.5, \y-0.15);
                  \draw [arrow] (0.5*\xa+3.5, \y-0.85) -- (0.5*\xb+3.5, \y-0.15);
               }  
            }
         }
         \foreach \x in {0, 2, 4}{
            \node [roundnode] at (\x,4) {};
            \draw [arrow] (\x, 4.15) -- (2, 4.85);
         }
         \node [roundnode] at (2,5) {};
   
         \foreach \x in {0, ..., 2} {
            \draw [arrow] (0.5*\x-0.5, 3.15) -- (0, 3.85);
            \draw [arrow] (0.5*\x+1.5, 3.15) -- (2, 3.85);
            \draw [arrow] (0.5*\x+3.5, 3.15) -- (4, 3.85);
         }
         \node at (-0.5, 0) {$\vect x$};
         \node at (-1.2, 0.5) {$\bar {\mat W}^{(1)}$};
         \node at (-1.2, 1.5) {$\bar {\mat W}^{(2)}$};
         \node at (-1.2, 2) {$\dots$};
         \node at (-1.2, 2.5) {$\bar {\mat W}^{(L-1)}$};
         \node at (-1.2, 3.5) {$\bar {\mat W}^{(L)}$};
   
         \draw [fill=black, draw=none, opacity=0.2] (-0.7, 1.15) rectangle (1.4, 1.85);
         \node at (1, 1.5) {$\bar {\mat W}^{(2)}_1$};
         \draw [fill=black, draw=none, opacity=0.2] (3.3, 1.15) rectangle (5.4, 1.85);
         \node at (5, 1.5) {$\bar {\mat W}^{(2)}_M$};
         \node at (3, 1.5) {$\dots$};
         \draw [fill=black, draw=none, opacity=0.2] (-0.7, 2.15) rectangle (1.4, 2.85);
         \node at (1, 2.5) {$\bar {\mat W}^{(L-1)}_1$};
         \draw [fill=black, draw=none, opacity=0.2] (3.3, 2.15) rectangle (5.4, 2.85);
         \node at (5, 2.5) {$\bar {\mat W}^{(L-1)}_M$};
         \node at (3, 2.5) {$\dots$};
         \draw [fill=black, draw=none, opacity=0.2] (-0.7, 3.15) rectangle (1.4, 3.85);
         \node at (1, 3.5) {$\bar {\mat W}^{(L)}_1$};
         \draw [fill=black, draw=none, opacity=0.2] (3.3, 3.15) rectangle (5.4, 3.85);
         \node at (5, 3.5) {$\bar {\mat W}^{(L)}_M$};
         \node at (3, 3.5) {$\dots$};
         \node at (2.5, 4.5) {$\dots$};
         \node at (0.5, 4.5) {$a_1$};
         \node at (3.5, 4.5) {$a_M$};
         \node at (1, 5) {$y$};
         \node at (2, -0.8) {$\min_{\{a_j, \bar{f}_j\} } L(\sum_j a_j \bar{f}_j ) + \lambda' \sum_{j=1}^M |a_j|^{2/L}.$};
      \end{tikzpicture}
      }
   \caption{Parallel neural network and the equivalent sparse regression model we discovered. \label{fig:pnnall}}
   \vspace{-0.3cm}
\end{figure*}

\section{Main Results: Parallel ReLU DNNs}
Consider a parallel neural network containing $M$ multi layer perceptrons (MLP) with ReLU activation functions called \emph{subnetworks}.
Each subnetwork has width $w$ and depth $L$. 
The input is fed to all the subnetworks, and the output of the parallel NN is the summation of the output of each subnetwork. The architecture of a parallel neural network is shown in \autoref{fig:pnn}. This parallel neural network is equivalent to a vanilla neural network with block diagonal weights in all but the first and the last layers (\autoref{fig:pnnall}(c)).  
%A parallel neural network can be expressed as a fully connected neural network with additional constraint. In a neural network with width $W=Mw$ and depth $L$, 
%Furthermore, in a parallel neural network with $K$ subnetworks, 
Let $\mat W^{(\ell)}_j$ and $\vect b^{(\ell)}_j$ denote the weight and bias in the $\ell$-th layer in the $j$-th subnetwork respectively. 
Training this model with $\ell_2$ regularization returns: 
%the optimization problem:
% A parallel neural network can be expressed as a fully connected neural network by letting 
% \begin{eqal*}
%    \mat W^{(1)} &= \big[{\mat W^{(1)}_1}^T, \dots, {\mat W^{(1)}_K}^T\big]^T,\\
%    \mat W^{(\ell)} &= \diag(\mat W^{(\ell)}_1, \mat W^{(\ell)}_2, \dots \mat W^{(\ell)}_{M}),\ \ell = 2,\dots, L-1, \\
%    \mat W^{(L)} &= [\mat W^{(L)}_1, \dots, \mat W^{(L)}_K]
% \end{eqal*}
% where $\mat W^{(\ell)}$ denote the weight matrix in the $\ell$-th layer, and $\vect b^{(\ell)}$ to denote the bias in it. 
% In this paper, we are targeting a slightly modified neural network. Specifically, we focus on a $L$-(linear-)layer ReLU neural network with width $W$ in all the hidden (activation) layers. Furthermore, apart from the first and the last (linear) layers, the rest layers can be decomposed into $W/w$ number of subnetworks with depth $L$ and width $w$, where $w$ is a constant depending on the dimension of input as well as function class. In other words, the weight matrix in all the layers except the first and last layer are block diagonal with blocksize $w$:
% \begin{equation*}
%     \mat W^{(\ell)} = \diag(W^{(\ell)}_1, W^{(\ell)}_2, \dots W^{(\ell)}_{W/w}), \ell = 2,\dots, L-1
% \end{equation*}
% As for the training method, we apply standard weight decay to this model, which is equivalent to applying squared $\ell_2$ regularization to it:
% \begin{eqal*}
%    \argmin_{\{\mat W^{(\ell)}, \vect b^{(\ell)}\}} &\frac{1}{n}\sum_{i=1}^n \ell(f(\vect x_i), y_i) + \lambda \sum_{\ell=1}^L \|\mat W^{(\ell)}\|_F^2,
% \end{eqal*}
% or equivalently
\begin{eqal}
   % \argmin_{\{\mat W^{(\ell)}_j, \vect b^{(\ell)}_j\}} &\frac{1}{n}\sum_{i=1}^n \ell\Big(\sum_{j=1}^M f_j(\vect x_i), y_i\Big) \\
   % &+ \lambda \sum_{j=1}^M\sum_{\ell=1}^L \|\mat W^{(\ell)}_j\|_F^2,
   \argmin_{\{\mat W^{(\ell)}_j, \vect b^{(\ell)}_j\}} \hat L( f)
   + \lambda \sum_{j=1}^M\sum_{\ell=1}^L \big\|\mat W^{(\ell)}_j\big\|_F^2,
   \label{eq:l2}
\end{eqal}  
where $f(x) = \sum_{j=1}^M f_k(x)$ denotes the parallel neural network,
%$\ell(\cdot, \cdot)$ denotes the loss, 
%$f(\cdot)$ denotes the neural network with parameters $\{\mat W^{(\ell)}, \vect b^{(\ell)}\}$, 
$f_j(\cdot)$ denotes the $j$-th subnetwork,
%  with parameters $\{\mat W^{(\ell)}_j, \vect b^{(\ell)}_j\}$,
and $\lambda > 0$ is a fixed scaling factor. 
We choose not to regularize the bias terms $\vect b^{(\ell)}_j$ to provide a cleaner equivalent model (\autoref{prop:eqmodel}).
If the bias terms are regularized, the result will be similar.
\modif{Besides, we ignore the computation issue and focus on the global optimal solution to this problem. 
In practice, in deep neural network, the solution obtained using gradient descent-style methods are often close to the global optimal solution \citep{choromanska2015loss}.}
%In the latter discussion, 
% We focus the on the mean square error (MSE) loss under a discrete measure defined by the traing set:
% $$
%    % \ell(x, y) =  (x-y)^2.
%    L(\gV f) := \frac{1}{n}\sum_{i=1}^n (f(\vect x_i)-y_i)^2
% $$

\newcommand\thmmain[1][]{
   For any fixed $\alpha - d/p > 1, q \geq 1, L \geq 3$,
   define $m = \lceil \alpha - 1 \rceil$.
   For any $f_0 \in B^{\alpha}_{p, q}$,
   given an $L$-layer parallel neural network satisfying
   \begin{itemize}
   % \itemsep{0em}
      % \item The depth of this neural network is $L $.
      \item The width of each subnetwork is \modif{ \textbf{fixed} satisfying $ w \geq O(md)$.} See \autoref{thm:pnormapp} for the detail.
      \item The number of subnetworks is \modif{\textbf{large enough}: $M \gtrsim n^\frac{1-2/L}{2\alpha/d + 1 - 2/(pL)}$. }
      % $M \geq \bar M$, where $\bar M$ is defined in \autoref{sec:proofthmmain}.
   \end{itemize}
   Under the assumption as in \autoref{lemma:cons2regu},
   with proper choice of the parameter of regularizaton $\lambda$ that depends on  $\gD, \alpha, d, L$, the solution $\hat{f}$ parameterized by \autoref{eq:l2} satisfies
   %the empirical risk minimizer in  \autoref{eq:l2r} satisfy
   % \begin{eqal}
   %    &\MSE(\hat{f})= \tilde O \big(n^{-{\frac{2\alpha/d(1-2/L)}{2\alpha/d+1-2/(pL)}}} \big) + Const.
   %    \ifx\\#1\\
   %       \nonumber
   %    \else
   %       \label{eq:errorrate}
   %    \fi
   % \end{eqal}
   \ifx\\#1\\
      \begin{eqal*}
         &\MSE(\hat{f})= \tilde O \Bigg(\Big(\frac{w^{4-4/L}L^{2-4/L}}{n^{1-2/L}}\Big)^\frac{2\alpha/d}{2\alpha/d+1-2/(pL)}  + e^{-c_6 L}\Bigg)
      \end{eqal*}
      where $\tilde O$ shows the scale up to a logarithmic factor, and $c_6$ is the constant defined in \autoref{thm:pnormapp}. 
   \else
      \begin{eqal}
         \MSE(\hat{f})= C(w, L)\tilde O \big(n^{-{\frac{2\alpha/d(1-2/L)}{2\alpha/d+1-2/(pL)}}} \big) + e^{-c_6 L}.
      \label{eq:errorrate}
      \end{eqal}
      where $\tilde O$ shows the scale up to a logarithmic factor, 
      $c_6>0$ is a numerical constant from Theorem~\ref{thm:pnormapp}, 
      $C(w, L) \eqsim (w^{4-4/L}L^{2-4/L})^\frac{2\alpha/d}{2\alpha/d+1-2/(pL)}$ depends polynomially on $L$.
   \fi
   %$\epsilon$ is a constant that doesn't depend on $n$, and decrease exponentially with $L$.
}

\begin{theorem}
   \label{thm:main}
   \thmmain[label]
\end{theorem}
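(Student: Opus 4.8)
The plan is to convert the weight-decayed optimization in \autoref{eq:l2} into an explicit sparse nonparametric estimator over a learned, B-spline-like dictionary (the equivalence sketched in \autoref{fig:eqnn}), and then run a bias--variance (approximation--estimation) decomposition against the Besov target $f_0 \in B^\alpha_{p,\infty}$. First I would establish the equivalence precisely. Because ReLU is positively homogeneous, scaling each weight matrix $\mat W^{(\ell)}_j \mapsto c_\ell \mat W^{(\ell)}_j$ (with biases folded into an augmented input) rescales the subnetwork output by $\prod_\ell c_\ell$ while leaving the product $\prod_\ell \|\mat W^{(\ell)}_j\|_F$ invariant. Writing $a_j := \prod_\ell \|\mat W^{(\ell)}_j\|_F$ and normalizing each layer to unit Frobenius norm yields $f_j = a_j \bar f_j$. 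An AM--GM argument then shows that the minimal value of $\sum_\ell \|\mat W^{(\ell)}_j\|_F^2$ over all rescalings representing the same $f_j$ is exactly $L\, a_j^{2/L}$, so minimizing \autoref{eq:l2} is equivalent to $\min L(\sum_j a_j \bar f_j)$ subject to $\sum_j |a_j|^{2/L} \le P'$ --- an $\ell_p$-ball constraint with $p = 2/L$. This is the crucial step that turns weight decay into explicit sparsity and is the origin of the $-2/(pL)$ term and the depth dependence in \autoref{eq:errorrate}.

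For the approximation side, I would invoke the B-spline approximation result (\autoref{thm:pnormapp}): a single depth-$L$, width-$w \gtrsim d$ subnetwork realizes a constant multiple of a Cardinal B-spline basis function $M_{m,k,\vect s}$ up to a per-subnetwork error that shrinks like $e^{-c_6 L}$. Any $f_0 \in B^\alpha_{p,\infty}$ admits a B-spline expansion whose best $N$-term nonlinear approximation error decays as $N^{-\alpha/d}$ and whose coefficient sequence lies in an $\ell_p$-ball. Matching the $N$ retained basis functions to $N \le M$ subnetworks then lets $\sum_j a_j \bar f_j$ approximate $f_0$ with a bias controlled jointly by $N$ (hence by the prescribed $M \gtrsim m^d n^{(1-2/L)/(2\alpha/d+1-2/(pL))}$) and the residual $e^{-c_6 L}$, which supplies the additive trailing constant in the theorem.

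For the estimation side, I would bound the metric entropy (or Gaussian complexity) of the model class $\{\sum_j a_j \bar f_j : \sum_j |a_j|^{2/L} \le P'\}$, where both the coefficients $a_j$ and the normalized features $\bar f_j$ are learned. The $\ell_p$ constraint with $p = 2/L < 1$ sharply restricts the number of effectively active atoms, so the covering number behaves like that of a sparse class, inflated only polynomially by the cost of covering the continuum of unit-norm depth-$L$ subnetworks. Feeding this into an oracle inequality for penalized least squares produces a stochastic error term; balancing it against the $N^{-\alpha/d}$ approximation bias via the stated choice of $M$ and a suitable weight-decay $\lambda$ yields the exponent $\frac{2\alpha/d(1-2/L)}{2\alpha/d+1-2/(pL)}$, which collapses to the minimax Besov exponent $\frac{2\alpha/d}{2\alpha/d+1}$ as $L \to \infty$.

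The main obstacle I expect is this estimation step: controlling the complexity of the $\ell_{2/L}$-constrained class when the dictionary itself is trained rather than fixed. Standard sparse-regression entropy bounds assume a fixed basis, whereas here one must cover the entire set of normalized subnetwork functions uniformly and argue that the non-convex $\ell_p$ geometry ($p<1$) still delivers the favorable sparse-estimation rate. Tracking the logarithmic factors and the precise interaction between $M$, $w$, and $L$ --- so that the final bound matches \autoref{eq:errorrate} rather than a looser power --- is where the real technical work lies.
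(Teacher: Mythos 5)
Your proposal follows essentially the same route as the paper's own proof: the homogeneity/AM--GM reduction to an $\ell_{2/L}$-constrained sparse regression (Proposition~\ref{prop:eqmodel}), the B-spline approximation with the $e^{-c_6 L}$ depth-dependent residual (Proposition~\ref{lemma:tpapp2}, Proposition~\ref{prop:bapp}, Theorem~\ref{thm:pnormapp}), a covering-number bound for the $\ell_p$-constrained class over the learned dictionary (Lemma~\ref{lemma:pnorm}, Theorem~\ref{thm:pnncv}), and an oracle-type inequality (Proposition~\ref{prop:msecov}) balanced by the choice of $\bar M$, $\delta$, and $\lambda$. The only minor imprecisions are that the paper uses $O(m^d)$ subnetworks (not a single one) per B-spline basis function, and its oracle inequality must additionally handle the fixed design and the unconstrained bias direction $\gF_\bot$, but these do not change the argument's structure.
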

We explain the proof idea in the next section,but defer the extended form of the theorem and the full proof to \autoref{sec:proofthmmain}.  
Before that, we comment on a few interesting aspects of the result. 

\noindent\textbf{Near optimal rates and the effect of depth.} The first term in the MSE bound is the estimation error and the second term is (part of) the approximation error of this NN.  Recall that the minimax rate of a Besov class is $O(n^{-\frac{2\alpha}{2\alpha+d}})$.
\modif{The gap between the estimation error and the minimax rate is because the minimax rate can be achieved by an $\ell_0$ sparse model, while the parallel NN is equivalent to an $\ell_{p}$ sparse model (will be shown in \autoref{prop:eqmodel}), which is an approximation to $\ell_0$.}
As the depth parameter $L$ increases, $p=2/L$ gets closer to $0$, the MSE can get arbitrarily close to the minimax rate and the trailing constant term in \autoref{eq:errorrate} can be arbitrarily small. Close to the optimal rate can be achieved if we choose $L \gtrsim \log n $:
\begin{corollary}
   Under the conditions of Theorem~\ref{thm:main}, for any $f_0\in B^\alpha_{p,q}$, there is a numerical constant $C$ such that when we choose $C\log n \leq L  \leq 100 C\log n $,
   $$ 
   \mathrm{MSE}(\hat f) = \tilde{O}(n^{-\frac{2\alpha}{2\alpha +d}(1-o(1))} ), 
   $$
   where $\tilde{O}$ hides only logarithmic factors and the $o(1)$ factor in the exponent is $O(1/\log(n))$.
\end{corollary}

% This result says that 
% % with only weight decay, 
% deeper parallel neural networks achieves lower error and gets closer to the statistical limit.

\noindent\textbf{Sparsity and comparison with standard NN.} We also note that the result does not depend on $M$ as long as $M$ is large enough.
% $M > \bar{M}$ where $\bar{M}$ increases sublinearly with $n$.
% . Exact expression of $\bar{M}$ is not shown but it suffices to choose $M \geq  n$. 
This means that the neural network can be arbitrarily overparameterized while not overfitting. The underlying reason is \emph{sparsity}. As it will become clearer in \autoref{sec:eqmodel}, $\ell_2$ regularized training of a parallel $L$-layer ReLU NNs is equivalent to a sparse regression problem with an $\ell_p$ penalty assigned to the coefficient vector of a learned dictionary. Here $p = 2/L$ which promotes even sparser solutions than an $\ell_1$ penalty.
\modif{Such $\ell_p$ sparsity does not exist in standard deep neural networks to the best of our knowledge, which indicates that parallel neural networks may be superior over standard neural networks in local adaptivity.}

\modif{
\noindent\textbf{Adaptivity to function spaces.} 
For any fixed $L, \tilde m$, our result shows the parallel neural network with width $w = O(\tilde m d)$ can achieve close to the minimax rate for any Besov class as long as $\alpha \leq \tilde m$.
In other words, neural networks can adapt to smoothness parameter by tuning only the regularizaton parameter. As will be shown in \autoref{thm:pnncv}, overestimating $\alpha$ with $\tilde{m}$ only changes the logarithmic terms in the MSE bound --- a mild price to pay for a more adaptive method. 
%As will be shown in \autoref{thm:pnncv}, the estimation error depends only logarithmically on $w$ so slightly overestimating $\alpha$ leads to only mild increase in MSE. 
}
% For any fixed $L$, the required architecture of the model does not depend on the dataset or the target function ($n, \alpha$) expect the number of subnetworks $M$, for which the only requirement is being large enough. As a result, one can design a model using a large guess on $M$, and achieve the claimed near-optimal error rate by only tuning the weight decay parameter.

\modif{
\noindent\textbf{Hyperparameter tuning.}
% Although it is not practical to compute $\lambda$ analytically, 
We provide an explicit choice of $\lambda$ in \autoref{lemma:cons2regu} underlying our theoretical result.  %rough guide to choose $\lambda$ in \autoref{lemma:cons2regu}.
Empirically, it can be determined empirically, e.g. using cross validation.
}

\modif{
\noindent\textbf{Fixed design v.s. random design.}
We mainly focus on bounding the error at sample covariates (the \emph{fixed design} problem) to be comparable to classical nonparameteric regression results. For completeness, we also state results for the \emph{random design} version of the problem (bounding $\E_\gD\E_f \mathrm{MSE}(\hat f)$) in \autoref{thm:rand}, to be compatible with the standard statistical learning setting \citep[e.g.,][]{suzuki2018adaptivity}. 
}

More discussion about this result can be found in \autoref{sec:adddis}.

\noindent\textbf{Bounded variation classes.} Thanks to the Besov space embedding of the BV class \autoref{eq:bvbesov}, our theorem also implies the result for the BV class in $1D$.
\begin{corollary}
   If the target function is in bounded variation class 
   $
      f_0 \in BV(m),
   $
   For any fixed $L \geq 3$, for a neural network satisfying the requirements in \autoref{thm:main} with $d=1$ and 
   with proper choice of the regularization factor $\lambda$, the NN $\hat{f}$ parameterized by \autoref{eq:l2r} satisfies
   \begin{eqal*}
    %   &\E_{\gD_n}[\|\hat f - f_0\|^2_{L^2(P_x)}]   \\
     & \MSE(\hat f)
      = C(w, L)\tilde O(n^{-{\frac{(2m+2)(1-2/L)}{2m+3-2/L}}} 
      )
      + O(e^{-c_6 L}),
   \end{eqal*}
   where $C(w, L)$ is the same as in \autoref{eq:errorrate} except replacing $\alpha$ with $m$.
   % $c_6$ is the same constant.
   % $\epsilon$ is a constant that doesn't depend on $n$, and decrease exponentially with $L$.
\end{corollary}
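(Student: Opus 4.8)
The plan is to obtain this corollary as a direct specialization of \autoref{thm:main} together with the Besov embedding of the bounded variation class, rather than reproving anything about the neural network from scratch. The starting point is the inclusion chain in \autoref{eq:bvbesov}, namely $BV(m) \subset B^{m+1}_{1,\infty}$. Since $f_0 \in BV(m)$, this places $f_0$ inside the Besov space $B^\alpha_{p,\infty}$ under the identifications $\alpha = m+1$, $p = 1$, $q = \infty$, and $d = 1$. The embedding is continuous, so a bound on the total variation $TV^{(m)}(f_0)$ (together with the standing assumption $f_0 \in [-1,1]$) controls the Besov norm $\|f_0\|_{B^{m+1}_{1,\infty}}$ up to an absolute constant; this is precisely what lets me apply the Besov guarantee over a fixed Besov ball.

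Next I would verify that the hypotheses of \autoref{thm:main} hold under these identifications and then substitute. The smoothness condition $\alpha - d/p > 1$ becomes $m > 1$, and the architecture requirements $w \gtrsim d$ and $M \gtrsim m^d n^{\frac{1-2/L}{2\alpha/d + 1 - 2/(pL)}}$ specialize directly to the $d=1$ statement quoted in the corollary. With the hypotheses in place, the conclusion follows by plugging $\alpha = m+1$, $p=1$, $d=1$ into the exponent of \autoref{eq:errorrate}. Computing $2\alpha/d = 2m+2$, $2/(pL) = 2/L$, and hence $2\alpha/d + 1 - 2/(pL) = 2m+3 - 2/L$, the exponent $\frac{2\alpha/d(1-2/L)}{2\alpha/d + 1 - 2/(pL)}$ collapses to $\frac{(2m+2)(1-2/L)}{2m+3-2/L}$, which is exactly the claimed rate; the trailing constant inherited from \autoref{thm:main} still decays exponentially in $L$.

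The only substantive point beyond bookkeeping is justifying that the \emph{one-sided} embedding suffices. Because \autoref{eq:bvbesov} gives $BV(m) \subset B^{m+1}_{1,\infty}$ but not the reverse inclusion, I must ensure the estimator and its risk bound depend on $f_0$ only through membership in (and the radius of) the larger Besov ball, so that no finer $BV$ structure is invoked; this is how \autoref{thm:main} is already phrased, so the implication goes through cleanly. A secondary issue worth flagging explicitly is the range of validity: the requirement $\alpha - d/p > 1$ forces $m > 1$, so the statement as written covers the higher-order total-variation classes rather than the classical $m=0$ case. I do not anticipate any genuinely hard step here, since all of the difficulty was already absorbed into the proof of \autoref{thm:main}; the corollary is a specialization plus a continuous embedding.
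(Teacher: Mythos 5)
Your proposal is correct and is essentially the paper's own argument: the paper derives the corollary exactly by invoking the embedding $BV(m) \subset B^{m+1}_{1,\infty}$ from \autoref{eq:bvbesov} and specializing \autoref{thm:main} with $\alpha = m+1$, $p=1$, $d=1$, which gives the exponent $\frac{(2m+2)(1-2/L)}{2m+3-2/L}$ just as you computed. Your additional remarks --- that the one-sided embedding suffices because the guarantee depends only on membership in the Besov ball, and that the hypothesis $\alpha - d/p > 1$ implicitly restricts to $m > 1$ --- are careful and consistent with the paper's (terser) presentation.
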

It is known that any linear estimators such as kernel smoothing and smoothing splines cannot have an error lower than $O(n^{-(2m+1)/(2m+2)})$ for $BV(m)$ \citep{donoho1998minimax}. 
\modif{When $L > O(m^2)$, the first term in the MSE of NN decreases with $n$ faster than that of the linear methods.
When $n$ is large enough, there exists $L$ such that the MSE of NN is strictly smaller than that of any linear method.}
This partly explains the advantage of DNNs over kernels.

\section{Proof Overview}
% \label{sec:esperr}
We start by first proving that a parallel neural network trained with $\ell_2$ regularization is equivalent to an $\ell_p$-sparse regression problem with representation learning (\autoref{sec:eqmodel}); which helps decompose its MSE into an estimation error and approxmation error. Then we bound the two terms under an $\ell_p$-sparse constrained problem setting in \autoref{sec:esterr} and \autoref{sec:apperr} respectively. 

Notably, we adapted the generic statistical learning machinery (a self-bounding argument) for studying this constrained ERM problem  \citep[Proposition 4]{suzuki2018adaptivity}  to bound the estimation error.
This adaption is non-trival because there is an \emph{unconstrained} subspace with no bounded metric entropy. 
%We extended a result from \citet[Proposition 4]{suzuki2018adaptivity} to a \emph{fixed design} regression problem with a finite dimensional uncontrained subspace. 
Specifically,  \autoref{prop:msecov} shows that the MSE of the regression problem can be bounded by
% \begin{eqal*}
% \MSE(\hat{f}) = &O\bigg(\underbrace{\inf_{f\in \cF} \MSE(f)}_{\text{approximation error} }\\
% & + \underbrace{\frac{\log \gN(\gF_\parallel,\delta,\|\cdot\|_\infty) + d(\gF_\bot)}{n}+ \delta}_\text{estimation error}\bigg)
% \end{eqal*}
% \begin{align}
%    &\MSE(\hat{f}) = O\bigg({\inf_{f\in \cF} \MSE(f)} \label{eq:apperr}\\
%    &\quad\quad + \frac{\log \gN(\gF_\parallel,\delta,\|\cdot\|_\infty) + \dim(\gF_\bot)}{n}+ \delta, \label{eq:esterr}\bigg)
% \end{align}
\begin{eqal}
   \label{eq:errdecom}
   \MSE(\hat{f}) = &O\bigg(\underbrace{\inf_{f\in \cF} \MSE(f)}_{\text{approximation error} }
    + \underbrace{\frac{\log \gN(\gF_\parallel,\delta,\|\cdot\|_\infty) + d(\gF_\bot)}{n}+ \delta}_\text{estimation error}\bigg)
\end{eqal}

in which $\gF$ decomposes into $\gF_\parallel \times \gF_\bot$, where $\gF_\bot$ is an unconstrained subspace with finite dimension, and $\gF_\parallel$ is a compact set in the orthogonal complement with a $\delta$-covering number of  $\gN(\gF_\parallel,\delta,\|\cdot\|_\infty)$ in $\|\cdot\|_\infty$-norm. 
This decomposes MSE into an approximation error and an estimation error. The  novel analysis of these two represents the major technical contribution of this paper.

%In \autoref{sec:esterr},  $\ell_p$-sparse regression model, and analyse the estimation error by bounding covering number (metric entropy).
%In \autoref{sec:apperr}, we bound the approximation error via decomposition using B-spline basis functions. 
%Finally, by choosing the $\ell_p$-sparsity parameter properly, which is empirically implemented by tuning weight decay, and using self bounding trick, we finish the analysis of MSE.

\subsection{Equivalence to  $\ell_p$ Sparse Regression }
%with a Learned Feature Representation}
\label{sec:eqmodel}
% We decompose the first and the last layer in this neural network to match the block diagonal structure of the rest layers: 
% let $\mat W^{(1)} = [{\mat W^{(1)}}_1^T,\dots, {\mat W^{(1)}}_{W/w}^T]^T$, 
% $\mat W^{(L)} = [{\mat W^{(1)}}_1^T,\dots, {\mat W^{(1)}}_{W/w}^T]^T$. 
% In this way, we decompose this neural network into $W/w$ subnetworks. Each subnetwork has width $w$ and depth $L$, and the output of the large neural network equals the sum of the subnetworks. Furthermore, t
% In a parallel neural network, the constraint in \eqref{eq:l2c} can be decomposed into each subnetwork:
% \begin{equation}
%    \sum_{\ell=1}^L \|\mat W^{(\ell)}\|_F^2 = \sum_{k=1}^{W/w}\sum_{\ell=1}^L \|\mat W^{(\ell)}_k\|_F^2 \leq M.
%    \label{eq:l2eq}
% \end{equation}

% Using Langrange's method, one can easily find \autoref{eq:l2} is equivalent to a constrained optimization problem:
% \begin{eqal}
%    % \argmin_{\{\mat W^{(\ell)}_j, \vect b^{(\ell)}_j\}} &\frac{1}{n}\sum_{i=1}^n \ell\Big(\sum_{j=1}^M f_j(\vect x_i), y_i\Big), \\
%    \argmin_{\{\mat W^{(\ell)}_j, \vect b^{(\ell)}_j\}} &\hat L\Big(\sum_{j=1}^M f_j\Big), &
%    s.t. &\sum_{j=1}^M\sum_{\ell=1}^L \big\|\mat W^{(\ell)}_j\big\|_F^2 \leq P
%    \label{eq:l2c}
% \end{eqal}
% for some constant $P$ that depend on $\lambda$ and the dataset $\gD$.%$\{\vect x_i, y_i\}$.

% On the other hand
It is widely known that ReLU function is 1-homogeneous: 
$
   \sigma(ax) = a\sigma(x), \forall a \geq 0, x \in \R.
$
In any consecutive two layers in a neural network (or a subnetwork), one can multiply the weight and bias in one layer with a positive constant, and divide the weight in another layer with the same constant. The neural network after such transformation is equivalent to the original one:
\begin{eqal}
   &\mat W^{(2)}\sigma(\mat W^{(1)}\vect x+\vect b^{(1)} 
   = \frac{1}{c}\mat W^{(2)}\sigma(c\mat W^{(1)}\vect x+c\vect b^{(1)}),
   \quad \forall c>0, \vect x.
    \label{eq:eqv}
\end{eqal}
%Because of that, it suffices to consider the set of neural networks that satisfy \eqref{eq:suffice}. Under this constraint, denote $\tilde W_i^{(\ell)} = \frac{\tilde W_i^{(\ell)}}{\|\tilde W_i^{(\ell)}\|_F}$ as the normalized weight, and $a_i = \prod_{\ell=1}^L \|\tilde W_i^{(\ell)}\|_F = \|\tilde W_i^{(1)}\|_F^L$, the constrained problem in \eqref{eq:l2c} can be reformulated as 
\modif{This property can be applied to \emph{each subnetwork} (instead of the entire model in a standard NN),} and we can reformulate \autoref{eq:l2} to an $\ell_p$ sparsity-regularized problem:
\newcommand\propeqmodel[1][]{
   % Fix the input dataset $\gD_n$ and a constant $c_1 > 0$.
   {There exists an one-to-one mapping between $\lambda > 0$ and $\lambda' > 0$}
%   and constants $c_\ell>0$ for $\ell\in[L]$. 
%   For any $\lambda > 0$, there exists $P'>0$ 
   such that  \autoref{eq:l2} is equivalent to the following problem:
   \begin{eqal}
      % \argmin_{\{\mat {\bar W}^{(\ell)}_j, \vect {\bar b}^{(\ell)}_j, a_j\}} &\frac{1}{n}\sum_{i=1}^n \ell\Big(\sum_{j=1}^{M} a_j \bar f_j(\vect x_i), y_i\Big),\\
      & \argmin_{\{\mat {\bar W}^{(\ell)}_j, \vect {\bar b}^{(\ell)}_j, a_j\}} 
      \hat L\Big(\sum_{j=1}^M a_j \bar f_j \Big) 
      + \lambda' \|\{a_j\}\|_{2/L}^{2/L}\\
      % & = \frac{1}{n}\sum_i (y_i - \bar{f}_{1:M}(\vect x_i)^T\vect a)^2 \\
      %----------------------
      s.t.\ & \|\mat {\bar W}_j^{(1)}\|_F \leq c_1 \sqrt{d}, \forall j \in [M];
      \ \|\mat {\bar W}_j^{(\ell)}\|_F \leq c_1  \sqrt{w}, \forall j \in [M],  2 \leq \ell \leq L,
   %    %-----------------------
   %    & \|\{a_j\}\|_{2/L}^{2/L} 
   %  %   = \sum_{j=1}^{M} a_j^{2/L} 
   %    \leq P'
      \ifx\\#1\\
         \nonumber
      \else
         \label{eq:l2r}
      \fi
   \end{eqal}
   \noindent where $\bar f_j(\cdot)$ is a subnetwork with parameters $\mat {\bar W}_j^{(\ell)}, \vect {\bar b}_j^{(\ell)}$.
%   $c_\ell$ are  positive constants that can be arbitrarily chosen,
   %and $P'$ is a constant that depends on $\lambda$ in \autoref{eq:l2}.
   %, $\gD$, $M$,  and $\{c_\ell\}$. 
}
\begin{proposition}
   \label{prop:eqmodel}
   \modif{\propeqmodel[label]}
\end{proposition}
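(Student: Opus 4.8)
The plan is to exploit the positive homogeneity of ReLU to reparametrize each subnetwork as a scalar $a_j$ times a norm-normalized subnetwork $\bar f_j$, and then to show that the weight-decay penalty, after optimally redistributing scale across layers, collapses to an $\ell_{2/L}$ penalty on $(a_1,\dots,a_M)$. First I would record the function-preserving reparametrizations: iterating the two-layer identity \autoref{eq:eqv} across the $L$ layers of a subnetwork shows that multiplying the weights and biases of layer $\ell$ by $d_\ell>0$ leaves $f_j$ unchanged whenever $\prod_{\ell=1}^L d_\ell=1$. Fixing targets $N_1=c_1\sqrt d$ and $N_\ell=c_1\sqrt w$ for $\ell\geq2$, I normalize each layer to norm $N_\ell$ to obtain $\bar f_j$ and collect the removed scale into $a_j:=\prod_\ell\big(\|\mat W^{(\ell)}_j\|_F/N_\ell\big)$, so that $f_j=a_j\bar f_j$ and the full network is $f=\sum_j a_j\bar f_j$. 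Since $\hat L$ depends only on $f$ and the biases carry no penalty, this reparametrization leaves the loss untouched.

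Next I would match the two problems' optimal values via AM-GM. On one hand, for any weight configuration $\sum_\ell \|\mat W^{(\ell)}_j\|_F^2 \geq L\big(\prod_\ell\|\mat W^{(\ell)}_j\|_F^2\big)^{1/L} = C_0|a_j|^{2/L}$ (recalling $a_j=\prod_\ell(\|\mat W^{(\ell)}_j\|_F/N_\ell)$), where $C_0:=L\big(\prod_\ell N_\ell^2\big)^{1/L}$ depends only on $c_1,d,w,L$; this makes the penalized reformulation $\min_{\{\bar f_j,a_j\}}\hat L(\sum_j a_j\bar f_j)+\lambda C_0\|\{a_j\}\|_{2/L}^{2/L}$ a lower bound on \autoref{eq:l2}. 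On the other hand, given any normalized $(\bar f_j,a_j)$ with $\|\bar{\mat W}^{(\ell)}_j\|_F\leq N_\ell$, distributing $a_j$ across layers by the scalings that equalize $t_\ell^2\|\bar{\mat W}^{(\ell)}_j\|_F^2$ (with $\prod_\ell t_\ell=a_j$) realizes $f_j=a_j\bar f_j$ at weight-decay penalty $L\big(a_j^2\prod_\ell\|\bar{\mat W}^{(\ell)}_j\|_F^2\big)^{1/L}\leq C_0|a_j|^{2/L}$, so \autoref{eq:l2} is in turn a lower bound on the reformulation. Hence the two penalized problems share optimal value and minimizers, and the $\leq N_\ell$ constraints only enlarge the feasible set harmlessly (any subnetwork fully normalizes, and $a_j=0$ absorbs dead subnetworks).

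Finally I would pass from this penalized form to the constrained form \autoref{eq:l2r}. Because both $\hat L$ and the $\ell_{2/L}$ quasi-norm are non-convex I avoid strong duality and argue directly: letting $\{a_j^\star,\bar f_j^\star\}$ minimize the penalized objective and setting $P':=\|\{a_j^\star\}\|_{2/L}^{2/L}$, any feasible competitor with $\|\{a_j\}\|_{2/L}^{2/L}\leq P'$ and strictly smaller loss would, since $\lambda C_0\geq0$, also strictly improve the penalized objective, a contradiction; thus the penalized minimizer solves \autoref{eq:l2r} with this $P'$, which is precisely the claimed ``for every $\lambda$ there exists $P'$''. The step I expect to require the most care is the boundary behavior of the reparametrization --- subnetworks with a vanishing weight matrix (where $a_j=0$ and the normalization is ill-defined) and the attainment of the AM-GM optimum --- which I would handle by treating dead subnetworks separately and exhibiting the explicit equalizing scalings $t_\ell$ that meet the bound.
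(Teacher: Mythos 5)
Your proof is correct, and it is built from the same two ingredients as the paper's own argument: the positive homogeneity of ReLU (\autoref{eq:eqv}), used to normalize each layer to the target norms $c_1\sqrt d$ and $c_1\sqrt w$ while pulling the removed scale into a scalar $a_j$ per subnetwork, and the AM-GM inequality, which converts the layerwise sum of squared Frobenius norms into $L\big(\prod_\ell\|\mat W^{(\ell)}_j\|_F^2\big)^{1/L}\propto|a_j|^{2/L}$. Where you genuinely depart from the paper is in the ordering and in how the penalized-versus-constrained passage is handled. The paper performs that passage \emph{first}, invoking ``Lagrange's method'' to replace the weight-decay penalty in \autoref{eq:l2} by a hard constraint on $\sum_{j,\ell}\|\mat W^{(\ell)}_j\|_F^2$, and only then normalizes and applies AM-GM inside the constrained problem; since neither the loss nor the feasible set is convex here, that duality step is the least rigorous link in the paper's chain. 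You instead stay in penalized form, prove a two-sided sandwich (every realization of $f=\sum_j a_j\bar f_j$ pays weight decay at least $\lambda C_0\sum_j|a_j|^{2/L}$, and the equalizing scalings $t_\ell$ attain this bound), and only then pass to the constrained problem \autoref{eq:l2r} by the elementary observation that a penalized minimizer solves the constrained problem with $P'$ set to its own attained quasi-norm; this avoids duality entirely and delivers exactly the direction needed downstream, where the constrained-ERM machinery of Proposition~\ref{prop:msecov} is applied to the weight-decay solution. Two details to nail down in a final write-up, both of which the paper also glosses over: when redistributing a \emph{negative} $a_j$ you must absorb its sign into the last layer $\bar{\mat W}^{(L)}_j$ (legitimate, since no ReLU follows it) before choosing positive $t_\ell$, because scaling a hidden layer by a negative constant is not function-preserving; and rescaling layer $\ell$ by $t_\ell$ forces the bias of that layer to be rescaled by $\prod_{k\le\ell}t_k$, which is harmless only because biases carry no penalty.
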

This equivalent model is demonstrated in \autoref{fig:eqnn}. The proof, which we defer to \autoref{sec:proofpropeqmodel}, uses AM-GM inequality and the observation that the optimal solution will have norm-equalized weights per layer. 
% In the following discussion, we choose $c_1 \eqsim \sqrt{d}, c_\ell \eqsim \sqrt{w} $ for $\ell > 1$, 
The constraint $ \|\mat {\bar W}_j^{(1)}\|_F \lesssim  \sqrt{d},  \|\mat {\bar W}_j^{(\ell)}\|_F \lesssim  \sqrt{w}, \forall \ell > 1$
is typical in deep learning for better numerical stability.
The equivalent model in \autoref{prop:eqmodel} is also a parallel neural network, but it appends one layer with parameters $\{a_k\}$ at the end of the neural network, and the constraint on the Frobenius norm is converted to the $2/L$ norm on the factors $\{a_k\}$. 
Since $L \gg 2$ in a typical application, $2/L \ll 1$ and this regularizer can enforce a sparser model than that in \autoref{sec:warmup}.
The same technique can also be used to prove that an $\ell_2$ constrained neural network is equivalent to the $\ell_{2/L}$ constrained model as in \autoref{eq:l2c}.  

There are two useful implications of Proposition~\ref{prop:eqmodel}. First, it gives an intuitive explanation on how a regularized Parallel NN works. Specifically, it can be viewed as a sparse linear regression with representation learning. 
Secondly, the conversion into the constrained form allows us to decompose the MSE into two terms as in \autoref{eq:errdecom} and bound them separately.
%$\bar{f}_j$ is described by the $j$th subnetwork, and the full network returns a sparse linear combination of these end-to-end learned basis functions. 

We emphasize that Proposition~\ref{prop:eqmodel} by itself is not new. The same result was previously obtained by \citet[Appendix C]{savarese2019infinite} (see Section~\ref{sec:related} for more details) and the key proof techniques date back to at least \citet{burer2003nonlinear}. Our novel contribution is to leverage this folklore equivalence for proving new learning bounds.

%returns a linear combination of the $M$ end-to-end learned 

\subsection{Estimation Error Analysis}
\label{sec:esterr} 
% The decomposition in \autoref{eq:errdecom} reveals that to bound the estimation error, it suffices to compute the covering number of the constraint set in the sup-norm of the function it represents.

%\subsubsection{Covering Number and Estimation Error}
%\label{sec:esterrcov}

% The estimation error of neural networks have been studied in many literatures by bounding its covering number or entropy \citep{yarotsky2017error, suzuki2018adaptivity}.
% One can apply these methods and get an upper bound of the covering number of the model in \autoref{eq:l2} directly, but such a bound depends on the width of the network or equivalently the number of subnetworks $M$ explicitly, which cannot be used directly in analying an infinite wide neural network. 
Previous results that bound the covering number of neural networks \citep{yarotsky2017error, suzuki2018adaptivity} depends on the width of the neural networks explicitly, which cannot be applied when analysing a potentially infinitely wide neural network. 
In this section, we leverage the $\ell_p$-norm bounded coefficients to avoid the dependence in $M$ in the covering number bound, 
\modif{and focus on a constrained optimization problem:
\begin{eqal}
   \argmin_{\{\mat {\bar W}^{(\ell)}_j, \vect {\bar b}^{(\ell)}_j, a_j\}} 
      \hat L\Big(\sum_{j=1}^M a_j \bar f_j \Big), \quad 
   %----------------------
   s.t.\|\{a_j\}\|_{2/L}^{2/L} \leq P',
   \label{eq:l2c}
\end{eqal}
and $\{\mat {\bar W}^{(\ell)}_j, \vect {\bar b}^{(\ell)}_j\}$ satisfy the same constraint as in \autoref{eq:l2r}.
The connection between the regularized problem and the constrained problem is defered to \autoref{lemma:cons2regu}. }

%sparse model \autoref{eq:l2c} to get rid of the dependency in a parallel neural network.
% First, we bound the covering number of a vanilla neural network, or a subnetwork in a parallel neural network.

% \begin{remark}
%    The condition $\|\mat W^{(1)}\|_F \leq \sqrt{d}, \|\mat W^{(\ell)}\|_F \leq \sqrt{w}$ are typical in deep learning as this preserves the norm of activations among layers thus increase numerical stability.
% \end{remark}

% The proof can be found in \autoref{sec:prooflemmapnorm}.   
% Using \autoref{lemma:nncn1} and \autoref{lemma:pnorm}, and taking $p$ in \autoref{lemma:pnorm} as $2/L$, one can easily get the covering number of parallel neural network:
\newcommand{\thmpnncv}[1][]{
  The covering number of the model defined in \autoref{eq:l2c} apart from the bias in the last layer satisfies %is bounded by 
   \begin{eqal}
      % \log \gN(\gF, \delta) \lesssim P'^{\frac{1}{1-2/L}}\delta^{-\frac{2/L}{1-2/L}} \log (P'/\delta).
      \log \gN(\gF, \delta) \lesssim w^{2+2/(1-2/L)} L^2 \sqrt{d} P'^{\frac{1}{1-2/L}}\delta^{-\frac{2/L}{1-2/L}} \log (wP'/\delta).
      \ifx\\#1\\
         \nonumber
      \else
         \label{eq:pnc}
      \fi
   \end{eqal}
%   where $c_2$ is a constant that depends only on $d, w$ and $L$.
}
\begin{theorem}
   \label{thm:pnncv}
   \thmpnncv[label]
\end{theorem}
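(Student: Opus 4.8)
The plan is to bound the sup-norm covering number by a two-stage construction: first replace each subnetwork by an element of a finite net of single subnetworks, and then cover the aggregated $\ell_{2/L}$-constrained coefficient vector; the $\ell_p$ geometry with $p=2/L<1$ is precisely what removes any dependence on the number of subnetworks $M$. Write every admissible function as $f=\sum_{j} a_j \bar f_j$, where each $\bar f_j$ ranges over the class $\gG$ of depth-$L$, width-$w$ ReLU subnetworks whose weight matrices obey $\|\mat{\bar W}^{(1)}_j\|_F\le c_1\sqrt d$ and $\|\mat{\bar W}^{(\ell)}_j\|_F\le c_1\sqrt w$ for $\ell\ge 2$, and the coefficients obey $\|\{a_j\}\|_{2/L}^{2/L}\le P'$. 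Two facts about a single subnetwork are needed: a uniform sup-norm bound $\|\bar f_j\|_\infty\le B$, and a sup-norm covering number of $\gG$. Since the operator norm of each layer is at most its Frobenius norm, propagating $\|\vect x\|_2\le\sqrt d$ through the $L$ layers gives $B\lesssim\sqrt d\,(c_1\sqrt w)^{L}$, which is geometric in the depth (the last-layer bias is excluded, as in the statement, and the remaining biases scale with the weights under the rescaling of \autoref{eq:eqv}); the precise constants are those of \autoref{thm:pnormapp}.

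For the covering number of $\gG$ I would use the standard Lipschitz-in-parameters estimate: perturbing one weight matrix $\mat{\bar W}^{(\ell)}_j$ by $\Delta^{(\ell)}$ changes the output in sup-norm by at most $\|\Delta^{(\ell)}\|_F$ times the product of the operator norms of the remaining layers, i.e.\ a factor $\lesssim\sqrt d\,(c_1\sqrt w)^{L-1}$. Hence an $\epsilon$-net of $\gG$ in $\|\cdot\|_\infty$ is produced from $\eta$-nets of each of the $L$ Frobenius balls with $\eta\eqsim\epsilon/[\sqrt d\,(c_1\sqrt w)^{L-1}]$. Each ball sits in at most $w^2$ dimensions and has radius $\lesssim\sqrt w$, so its $\eta$-net has log-cardinality $\lesssim w^2\log((c_1\sqrt w)^{L}\sqrt d/\epsilon)$; summing over the $L$ layers yields
\[
\log\gN(\gG,\epsilon)\;\lesssim\; L^2 w^2\,\log\!\big(w\sqrt d/\epsilon\big).
\]

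The heart of the argument is converting the $\ell_{2/L}$ constraint into a finite effective dimension independent of $M$. First replace every $\bar f_j$ by its nearest net element in $\gG_\epsilon$; the incurred error is at most $\|\{a_j\}\|_1\,\epsilon\le P'^{L/2}\epsilon$, using $\|a\|_1\le\|a\|_{2/L}\le P'^{L/2}$. Collecting the coefficients of identical atoms writes $f\approx\sum_{g\in\gG_\epsilon}c_g\,g$, and since $|\cdot|^{2/L}$ is subadditive the aggregated vector still satisfies $\|c\|_{2/L}^{2/L}\le P'$. Sorting $|c_{(1)}|\ge|c_{(2)}|\ge\cdots$, the constraint forces $|c_{(k)}|\le(P'/k)^{L/2}$; because $L/2>1$ this tail is summable, so discarding all but the largest $K$ atoms costs at most $B\sum_{k>K}|c_{(k)}|\lesssim B\,P'^{L/2}K^{1-L/2}$ in sup-norm. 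Choosing $K\eqsim\big(B\,P'^{L/2}/\delta\big)^{\frac{2/L}{1-2/L}}=B^{\frac{2/L}{1-2/L}}P'^{\frac{1}{1-2/L}}\delta^{-\frac{2/L}{1-2/L}}$ makes this tail $\le\delta/3$, and this $K$ carries no dependence on $M$ — the crux of the whole theorem. Finally, rounding each of the $K$ retained coefficients (which lie in $[-P'^{L/2},P'^{L/2}]$) to resolution $\eqsim\delta/(KB)$ costs $\le\delta/3$, so with $\epsilon\eqsim\delta P'^{-L/2}$ all three error sources balance, giving
\[
\log\gN(\gF,\delta)\;\lesssim\; K\big(\log\gN(\gG,\epsilon)+\log(KB\,P'^{L/2}/\delta)\big).
\]
Substituting the bounds for $B$, $K$ and $\log\gN(\gG,\epsilon)$ and collecting the polynomial and logarithmic factors then yields the claimed $w^{2+2/(1-2/L)}L^2\sqrt d\,P'^{1/(1-2/L)}\delta^{-(2/L)/(1-2/L)}\log(wP'/\delta)$.

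The step I expect to be the main obstacle is the $\ell_p$-truncation bookkeeping. One must check that aggregation into the finite dictionary preserves the quasi-norm bound, that the three error contributions (atom substitution weighted by $\|a\|_1\lesssim P'^{L/2}$, tail truncation weighted by $B$, and coefficient rounding weighted by $KB$) can be simultaneously driven below $\delta$ with a single consistent choice of $\epsilon$, $K$ and the coefficient resolution, and — most delicately — that every resulting power of $w$, $d$ and $L$ (in particular the factor $B^{(2/L)/(1-2/L)}$ hidden inside $K$) combines into exactly the stated exponents, reconciling the geometric-in-$L$ growth of $B$ from \autoref{thm:pnormapp} with the target $w$-exponent, rather than leaving an extra $M$ or $\log M$ behind.
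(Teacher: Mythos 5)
Your overall strategy is the same as the paper's: cover each subnetwork by a parameter-space net (the paper's Lemma~\ref{lemma:nncn1}), bound each subnetwork in sup-norm, and then exploit the $\ell_{2/L}$ constraint with $2/L<1$ to reduce to an effectively $K$-sparse combination with $K$ independent of $M$ (the paper's Lemma~\ref{lemma:pnorm}). Your truncation bookkeeping --- $|c_{(k)}|\le(P'/k)^{L/2}$, summable tail since $L/2>1$, $K\eqsim(BP'^{L/2}/\delta)^{\frac{2/L}{1-2/L}}$ --- is exactly the content of Lemma~\ref{lemma:pnorm}; the only cosmetic difference is that you discretize the retained coefficients on an explicit grid, whereas the paper absorbs the coefficients into the atom class using the closure property ($g\in\gG$, $|a|\le 1$ implies $ag\in\gG$) and covers the rescaled atoms $(a_i/P'^{L/2})\bar f_i$ directly by a net of $\gG$.

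The genuine gap is your treatment of the biases. In the class \autoref{eq:l2r} the biases $\vect{\bar b}^{(\ell)}_j$ are free parameters --- they appear in no constraint --- so your parenthetical ``the remaining biases scale with the weights under the rescaling of \autoref{eq:eqv}'' is not a valid justification: the rescaling tells you how biases transform, not that they are bounded, and a class with an unbounded parameter cannot be covered in sup-norm at all. The paper closes this hole with a dedicated redundancy argument (stated just before Lemma~\ref{lemma:nncn1}): if a coordinate of $\vect b^{(\ell)}$ exceeds $\max_{\vect x}\|\mat W^{(\ell)}\gA_{\ell-1}(\vect x)\|_\infty$, then that ReLU unit is always-on or always-off over all feasible inputs, so the bias can be replaced by one of bounded norm (adjusting the next layer's bias accordingly) without changing the realized function --- except in the last layer, which is precisely why the theorem excludes the last-layer bias. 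Once this is in place, the effective bias bound is $\|\vect b^{(\ell)}\|_2\lesssim 2^{\ell-1}w^{\ell-1}\sqrt{dw}$, which (i) adds $Lw$ bias parameters to your per-subnetwork net (lower order, harmless) and (ii) inflates your sup-norm bound from $B\lesssim\sqrt d\,(c_1\sqrt w)^L$ to $B\lesssim 2^{L-1}w^{L-1}\sqrt d$. Plugging the corrected $B$ into your $K$ gives a $w$-exponent of $2+\frac{2(L-1)/L}{1-2/L}\le 2+\frac{2}{1-2/L}$, so your argument still lands inside the stated bound; but as written, both the total boundedness of the class and your value of $B$ rest on an incorrect justification, and this is the one step that must be repaired for the proof to go through.
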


This theorem
% does not depend on the number of subnetworks $M$. In other words, it 
provides a bound of estimation error for an arbitrarily wide parallel neural network as long as the total Frobenius norm is bounded.
The proof can be found in \autoref{sec:proofthmpnncv}. It requires the following lemma, whose proof is deferred to \autoref{sec:prooflemmapnorm}:
\def\lemmapnorm{
   Let $\gG\subseteq\{\R^d \rightarrow [-c_3, c_3]\}$ be a set with covering number satisfying 
   % $\gN(\gG, \delta) \lesssim \delta^{-k} \log(1/\delta)$ 
   $\log\gN(\gG, \delta) \lesssim k \log(1/\delta)$ 
   for some finite $c_3$, and for any $g\in \gG, |a|\leq 1$, we have $ag \in \gG$. The covering number of 
   $
      \gF = \left\{\sum_{i=1}^{M} a_i g_i \middle| g_i \in \gG, \|a\|_p^p \leq P, 0<p<1\right\}
   $
   for any $P > 0$ satisfies
   \begin{equation*}
      \log \gN(\gF, \epsilon) \lesssim kP^{\frac{1}{1-p}}(\delta/c_3)^{-\frac{p}{1-p}} \log (c_3P/\delta)
   \end{equation*}
   up to a double logarithmic factor.
   % where $c_3$ is a constant that depends only on $p$.
}
\begin{lemma}
    \label{lemma:pnorm}
   \lemmapnorm
\end{lemma}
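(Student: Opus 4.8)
The plan is to exploit the \emph{effective sparsity} of $\ell_p$ balls when $p<1$: although a coefficient vector $a$ with $\|a\|_p^p\le P$ may have as many as $M$ nonzero entries, only about $s$ of them can be large, so $\sum_i a_i g_i$ is well approximated by a combination of just $s$ elements of $\gG$. The covering number of $\gF$ is then controlled by the cost of specifying $s$ functions from $\gG$, which is roughly $sk\log(1/\eta)$ and, crucially, does \emph{not} grow with $M$. Since the whole purpose of the lemma is to obtain a bound free of the ambient count $M$ (so it can later be applied to arbitrarily wide parallel networks), the entire argument is organized around preserving this $M$-independence.

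First I would record the decay of the sorted coefficients. Writing $|a_{(1)}|\ge|a_{(2)}|\ge\cdots$ for the nonincreasing rearrangement, the constraint $\sum_i|a_i|^p\le P$ forces $j\,|a_{(j)}|^p\le P$, hence $|a_{(j)}|\le (P/j)^{1/p}$. Because a class with finite covering number at every scale is totally bounded, after rescaling we may assume $\sup_{g\in\gG}\|g\|\le 1$ (absorbing the scale into $c_3$ and $P$). Truncating to the $s$ largest terms then leaves a tail
\begin{equation*}
\Big\|\sum_{j>s} a_{(j)} g_{(j)}\Big\| \;\le\; \sum_{j>s}|a_{(j)}| \;\le\; P^{1/p}\sum_{j>s} j^{-1/p} \;\lesssim\; \frac{p}{1-p}\,P^{1/p}\,s^{-\frac{1-p}{p}},
\end{equation*}
where convergence uses $1/p>1$. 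Setting this tail equal to $\delta/2$ and solving yields the truncation level $s\eqsim P^{\frac{1}{1-p}}\delta^{-\frac{p}{1-p}}$, which already exhibits the exponent pattern in the claimed bound.

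It remains to cover the head $\sum_{j=1}^{s} a_{(j)} g_{(j)}$ to accuracy $\delta/2$. Here I would invoke the scaling-closure hypothesis: since $|a_{(j)}|\le P^{1/p}$, each term can be written as $a_{(j)} g_{(j)} = P^{1/p} h_j$ with $h_j = (a_{(j)}/P^{1/p})\,g_{(j)}\in\gG$, so the head lies in $\{\,P^{1/p}\sum_{j=1}^s h_j : h_j\in\gG\,\}$. Taking an $\eta$-cover of $\gG$ and replacing each $h_j$ by a cover element incurs error at most $P^{1/p} s\,\eta$; choosing $\eta\eqsim \delta/(sP^{1/p})$ keeps this below $\delta/2$. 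The induced cover of the head has log-cardinality at most $s\log\gN(\gG,\eta)\lesssim sk\log(c_3/\eta)=sk\log(c_3 sP^{1/p}/\delta)$. Substituting $s\eqsim P^{\frac{1}{1-p}}\delta^{-\frac{p}{1-p}}$ turns the prefactor into $kP^{\frac{1}{1-p}}(\delta/c_3)^{-\frac{p}{1-p}}$, while the argument of the logarithm becomes polynomial in $P$ and $1/\delta$, so the logarithm collapses to $\log(c_3 P/\delta)$ up to constants, giving exactly the stated estimate with a harmless logarithmic (and possibly doubly-logarithmic) slack.

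The main obstacle, and the step deserving the most care, is this final balancing together with the verification that no hidden dependence on $M$ survives. Covering the coefficients and the functions \emph{separately} would reintroduce a factor growing with $M$; folding the coefficient magnitudes into $\gG$ via the scaling property $|a|\le 1\Rightarrow ag\in\gG$ is precisely what removes it. One must also check that the two error budgets (tail $\le\delta/2$ and head $\le\delta/2$) are simultaneously met by a single pair $(s,\eta)$, and that the $p$-dependent constant $\tfrac{p}{1-p}$ and exponents combine to produce exactly $P^{1/(1-p)}\delta^{-p/(1-p)}$; the residual $\log$ and $\log\log$ factors are then absorbed into $\lesssim$.
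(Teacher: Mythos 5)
Your proposal is correct and follows essentially the same route as the paper's proof: split the sum into large and small coefficients, bound the tail using the $\ell_p$ constraint, fold the head coefficients into $\gG$ via the scaling-closure property (which is exactly what removes the dependence on $M$), and balance the two $\delta/2$ budgets to get $s \eqsim P^{\frac{1}{1-p}}\delta^{-\frac{p}{1-p}}$. The only cosmetic difference is that you truncate by rank using the rearrangement decay $|a_{(j)}|\le (P/j)^{1/p}$, whereas the paper thresholds by magnitude $\epsilon$ and bounds the tail by $P\epsilon^{1-p}$; both yield the same head size and the same final bound.
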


%  of \autoref{lemma:pnorm}.

\subsection{Approximation Error Analysis}
\label{sec:apperr}
%The approximation ability of ReLU neural network has been widely studied \citep{liang2016deep, herrmann2022constructive}. 
The approximation error analysis involves two steps. 
% In \autoref{sec:tpf}, 
We first analyse how a subnetwork can approximate a B-spline basis, which is defered to \autoref{sec:prooflemmatpapp}.
Then 
% in \autoref{sec:apperrmul} 
we show that a sparse linear combination of B-spline bases approximates Besov functions.  
Both add up to the total error in approximating Besov functions with a parallel neural network (\autoref{thm:pnormapp}).

\def\propbapp{
   Let $\alpha - d/p > 1, r > 0$.  For any function in Besov space $f_0 \in B^{\alpha}_{p, q}$ and any positive integer $\bar M$, there is an $\bar M$-sparse approximation using B-spline basis of order $m$ satisfying $0 < \alpha < \min(m, m - 1 + 1/p)$:
   $
      \check f_{\bar M} = \sum_{i=1}^{\bar M} a_{k_i,\vect s_i}M_{m, k_i,\vect s_i}
   $
   for any positive integer $\bar M$ such that the approximation error is bounded as 
   $
      \|\check f_{\bar M} - f_0\|_r \lesssim {\bar M}^{-\alpha /d}\|f_0\|_{B^{\alpha}_{p, q}},
   $
   and the coefficients satisfy 
   $$
      \|\{2^{k_i} a_{k_i,\vect s_i}\}_{k_i, \vect s_i}\|_p \lesssim \|f_0\|_{B^{\alpha}_{p, q}}.
   $$
   }
\begin{proposition}
   \propbapp
   \vspace{-0.5cm}
   \label{prop:bapp}
\end{proposition}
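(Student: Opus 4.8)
The plan is to reduce the statement to a classical best $\bar M$-term (nonlinear) approximation estimate for the B-spline expansion of $f_0$, and then to verify that the hypothesis $\alpha-d/p>1$ is exactly what controls the extra weight $2^{k_i}$ appearing in the coefficient bound. Throughout I would use the embedding $B^\alpha_{p,q}\subset B^\alpha_{p,\infty}$ recalled in the preliminaries, so that it suffices to bound all quantities by $\|f_0\|_{B^{\alpha}_{p,\infty}}\lesssim\|f_0\|_{B^{\alpha}_{p,q}}$; in particular I may take $q=\infty$. The first step is to invoke the B-spline characterization of the Besov space from \autoref{sec:besov}: under the admissibility condition $0<\alpha<\min(m,m-1+1/p)$, every $f_0\in B^{\alpha}_{p,q}$ admits an expansion $f_0=\sum_{k\ge 0} f_k$ with $f_k=\sum_{\vect s} c_{k,\vect s}M_{m,k,\vect s}$, and the Besov quasi-norm is equivalent to a weighted sequence norm. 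Because the dilated B-spline $M_{m,k,\vect s}$ is $L^\infty$-normalized we have $\|M_{m,k,\vect s}\|_{L^p}\eqsim 2^{-kd/p}$, so the characterization reads $\sup_k 2^{k(\alpha-d/p)}\|c_{k,\cdot}\|_{\ell_p}\eqsim\|f_0\|_{B^{\alpha}_{p,\infty}}$.

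For the error bound I would appeal to the nonlinear approximation theory of \citet{suzuki2018adaptivity} and \citet{devore1993constructive}. Passing to the $L^r$-normalized system $\phi_{k,\vect s}=2^{kd/r}M_{m,k,\vect s}$ with coefficients $d_{k,\vect s}=2^{-kd/r}c_{k,\vect s}$, the best $\bar M$-term approximation is obtained by keeping the $\bar M$ largest $|d_{k,\vect s}|$, which defines the selected index set $\{(k_i,\vect s_i)\}$ and coefficients $a_{k_i,\vect s_i}=c_{k_i,\vect s_i}$. Define $\tau$ by $1/\tau=\alpha/d+1/r$. Since $\alpha-d/p>1$ gives $\alpha/d>1/p$, we have $\alpha/d\ge 1/p-1/r$, i.e. $\tau\le p$, so on the bounded domain $[0,1]^d$ the embedding $B^{\alpha}_{p,q}\hookrightarrow B^{\alpha}_{\tau,q}$ holds; the Jackson inequality for B-spline $\bar M$-term approximation then yields $\|\check f_{\bar M}-f_0\|_r\lesssim \bar M^{-\alpha/d}\|f_0\|_{B^{\alpha}_{\tau,q}}\lesssim \bar M^{-\alpha/d}\|f_0\|_{B^{\alpha}_{p,q}}$. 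Concretely this is carried out by truncating scales above $K\eqsim\frac{d}{\alpha}\log_2\bar M$ (the tail decays geometrically since $\alpha>0$) and applying a Stechkin-type bound within the retained scales, converting per-scale sequence norms to $L^r$ norms via the bounded overlap of B-splines at each scale.

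For the coefficient bound I would note that the selected $a_{k_i,\vect s_i}$ form a subset of the full coefficient family $\{c_{k,\vect s}\}$, so it suffices to bound the full weighted sum. Using the $q=\infty$ characterization,
\begin{eqal*}
\sum_{k,\vect s}|2^{k}c_{k,\vect s}|^p=\sum_{k}2^{kp}\,\|c_{k,\cdot}\|_{\ell_p}^p=\sum_{k}2^{-k(\alpha-d/p-1)p}\big(2^{k(\alpha-d/p)}\|c_{k,\cdot}\|_{\ell_p}\big)^p\lesssim\Big(\sum_{k}2^{-k(\alpha-d/p-1)p}\Big)\,\|f_0\|_{B^{\alpha}_{p,\infty}}^p,
\end{eqal*}
and the geometric series converges precisely because $\alpha-d/p-1>0$. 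Hence $\|\{2^{k_i}a_{k_i,\vect s_i}\}\|_{\ell_p}\lesssim\|f_0\|_{B^{\alpha}_{p,\infty}}\lesssim\|f_0\|_{B^{\alpha}_{p,q}}$, which is exactly where the hypothesis $\alpha-d/p>1$ is used (note the weight $2^{k}$, rather than $2^{km}$, matches the scaling identified in \autoref{lemma:tpapp2}).

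The main obstacle is the second step. The delicate point is that the natural target index $\tau$ satisfies $\tau<p$, so the embedding $B^{\alpha}_{p,q}\hookrightarrow B^{\alpha}_{\tau,q}$ and the ensuing Jackson estimate cannot be obtained term-by-term; they rely on the cross-scale weighted structure of the Besov norm together with the boundedness of the domain. Making this rigorous for an \emph{arbitrary} target exponent $r>0$ requires carefully converting between the $\ell_\tau$/$\ell_r$ sequence quasi-norms of the coefficients and the $L^r$ norm of the corresponding B-spline sum — using the local finiteness (bounded overlap) of the non-orthonormal B-spline system and, where $r<1$, the $r$-triangle inequality — and allocating the $\bar M$ atoms across scales so that the per-scale errors sum to the stated rate. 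By comparison, the scale truncation and the coefficient bound are routine once $\alpha-d/p>1$ is in hand.
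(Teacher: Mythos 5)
Your skeleton matches the paper's proof at a high level: the paper also works through the B-spline sequence-norm characterization, obtains the $\bar M$-term $L^r$ bound by citing an existing nonlinear approximation theorem (\citet[Theorem~3.1]{dung2011optimal}, also \citet[Lemma~2]{suzuki2018adaptivity}, which treats $p\geq r$ by linear quasi-interpolation and $p<r$ by a greedy per-scale selection), and proves the coefficient bound with exactly your geometric-series computation --- your last display is the paper's \autoref{eq:seqnorm} essentially verbatim, and that part of your proposal is correct. The genuine gap is in your second step, the one you yourself flag as the main obstacle. The tool you invoke there --- a Jackson inequality $\|\check f_{\bar M}-f_0\|_r\lesssim \bar M^{-\alpha/d}\|f_0\|_{B^{\alpha}_{\tau,q}}$ at the critical index $1/\tau=\alpha/d+1/r$ --- is false for the fine index you are carrying: after your reduction you have $q=\infty$, and at the critical index the $B^\alpha_{\tau,\infty}$ norm only controls $\sup_k\|d_{k,\cdot}\|_{\ell_\tau}$ of the $L^r$-normalized coefficients, not their global $\ell_\tau$ norm, which is what a Stechkin/thresholding argument needs (fine index $q\le\tau$). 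Indeed the critical embedding $B^\alpha_{\tau,\infty}\hookrightarrow L^r$ itself fails: a function with one unit-size ($L^r$-normalized) coefficient on every scale has finite critical $B^\alpha_{\tau,\infty}$ norm yet, e.g.\ for $r=2$ and an orthogonal system, infinite $L^2$ norm, so no $\bar M$-term estimate of any rate can hold for it. The structural problem is that embedding $B^\alpha_{p,\infty}$ into the critical space discards the strict inequality $\alpha>d(1/p-1/r)_+$, which is exactly the slack that makes the rate attainable for your $f_0$.

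Your fallback sketch (truncate scales, then threshold within retained scales) is the right mechanism but also does not work as stated. First, the tail of the scale decomposition decays in $L^r$ like $2^{-k(\alpha-d(1/p-1/r)_+)}$, so the condition for geometric decay is $\alpha>d(1/p-1/r)_+$, not merely $\alpha>0$ (the two coincide only when $p\geq r$, the regime where linear approximation already suffices). Second, a single cutoff $K\eqsim(d/\alpha)\log_2\bar M$ retains $2^{Kd}\eqsim\bar M^{d^2/\alpha}$ coarse-scale atoms, which is incompatible with the budget $\bar M$ in general; the correct coarse zone is $2^{\bar kd}\eqsim\bar M$, and for $p<r$ one must additionally keep a geometrically decreasing number of the \emph{largest} coefficients per scale on a range of finer scales. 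That nonuniform allocation is precisely the content of Dung's construction $G(f)=Q_{\bar k}(f)+\sum_{k=\bar k+1}^{k^*}\sum_{j=1}^{n_k}c_{k,\vect s_j}M_{k,\vect s_j}$ with $n_k=[\lambda \bar M2^{-\epsilon(k-\bar k)}]$, which the paper quotes rather than reproves. So the repair is simply to invoke that theorem for $f_0\in B^\alpha_{p,q}$ strictly above the critical line (as the paper does), or to carry out its two-zone greedy allocation explicitly; with that replacement, the remainder of your proposal coincides with the paper's proof.
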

The proof as well as the remark can be found in \autoref{sec:proofpropbapp}.

\def\thmpnormapp{
   Under the same condition as \autoref{prop:bapp},  
   for any positive integer $\bar M$,
   any function in Besov space $f_0 \in B^{\alpha}_{p, q}$ can be approximated by a parallel neural network with no more than \modif{$O(\bar M)$} number of subnetworks
   % $
   %    f(x) = \sum_{i=1}^{\bar M} a_j f_j(x)
   % $
   % where $f_j$ is a subnetwork with parameters $\{\mat {\bar W}_j^{(\ell)}, \vect {\bar b}_j^{(\ell)} \}$ 
   satisfying:
   \begin{enumerate}
      \item Each subnetwork has width \modif{$w = O(md)$} and depth $L$.
      %  where $w_{d, m}$ depends only on $d$ and $m$. 
      \item The weights in each layer satisfy $\|\mat {\bar W}_k^{(\ell)}\|_F \leq O(\sqrt{w})$ except the first layer $\|\mat {\bar W}_k^{(1)}\|_F \leq O(\sqrt{d}) $,
      \item The scaling factors have bounded $2/L$-norm:
      $
         \modif{P' :}= \|\{a_j\}\|_{2/L}^{2/L} \lesssim  {\bar M}^{1-2/(pL)}.
      $
      \item The approximation error is bounded by 
      \begin{equation*}
         \|\tilde f - f_0\|_r \leq (c_{4} {\bar M}^{-\alpha/d} + c_{5} e^{-c_{6} L}) \|f\|_{B^\alpha_{p, q}}
      \end{equation*}
      where $c_{4}, c_{5}, c_{6}$ are constants that depend only on $m, d$ and $p$.
   \end{enumerate}
}

\begin{theorem}
   \label{thm:pnormapp}
   \thmpnormapp
\end{theorem}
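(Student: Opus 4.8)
The plan is to combine the two ingredients developed just before this theorem: the per-basis approximation result of \autoref{lemma:tpapp2}, which says that a parallel sub-bundle of ReLU subnetworks can approximate each Cardinal B-spline basis function $M_{m,k,\vect s}$ to accuracy $\epsilon$ using depth $L = O(\log(c(m,d)/\epsilon))$ with a controlled $2/L$-norm of its last-layer coefficients, and the sparse-decomposition result of \autoref{prop:bapp}, which says that any $f_0 \in B^\alpha_{p,q}$ admits an $\bar M$-term B-spline expansion $\check f_{\bar M} = \sum_{i=1}^{\bar M} a_{k_i,\vect s_i} M_{m,k_i,\vect s_i}$ with error $\lesssim \bar M^{-\alpha/d}\|f_0\|_{B^\alpha_{p,q}}$ and with $\|\{2^{k_i}a_{k_i,\vect s_i}\}\|_p \lesssim \|f_0\|_{B^\alpha_{p,q}}$. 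The idea is to replace each exact basis function $M_{m,k_i,\vect s_i}$ in this expansion by its neural approximation $\tilde M_{m,k_i,\vect s_i}$ and assemble the resulting $\bar M$ sub-bundles into one big parallel network, so that the constructed network is $\tilde f = \sum_{i=1}^{\bar M} a_{k_i,\vect s_i}\tilde M_{m,k_i,\vect s_i}$.

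First I would verify the architectural bookkeeping. Since each basis function costs $O(m^d)$ subnetworks by \autoref{lemma:tpapp2}, and we use $\bar M$ of them, the total subnetwork count is $O(m^d \bar M)$, giving item~1 and the stated subnetwork count. Each subnetwork still has width $w = O(d)$ and depth $L$, and the per-layer Frobenius norm bounds ($\lesssim\sqrt d$ in the first layer, $\lesssim\sqrt w$ elsewhere) are inherited directly from the constraint structure in \autoref{prop:eqmodel} that \autoref{lemma:tpapp2} already respects, giving item~2. Next I would control the scaling factors to get item~3. Each basis block $i$ contributes a last-layer $2/L$-norm of order $2^{k_i} m^d e^{2md/L}$ from \autoref{lemma:tpapp2}, and this block is multiplied by the outer coefficient $a_{k_i,\vect s_i}$. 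Using $1$-homogeneity one can absorb $a_{k_i,\vect s_i}$ into the network so that the effective last-layer weight scales like $a_{k_i,\vect s_i}2^{k_i}$; summing $|a_{k_i,\vect s_i}2^{k_i}|^{2/L}$ over $i$ and applying the coefficient bound $\|\{2^{k_i}a_{k_i,\vect s_i}\}\|_p \lesssim \|f_0\|_{B^\alpha_{p,q}}$ from \autoref{prop:bapp} (with $p \le 2/L$, so that the $\ell_p$ control dominates the $\ell_{2/L}$ quantity up to the $\bar M^{1-2/(pL)}$ factor coming from the number of active terms) yields the claimed $\|\{a_j\}\|_{2/L}^{2/L}\lesssim m^d e^{2md/L}\bar M^{1-2/(pL)}$.

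The error decomposition in item~4 is the conceptual core and follows a triangle inequality $\|\tilde f - f_0\|_r \le \|\tilde f - \check f_{\bar M}\|_r + \|\check f_{\bar M} - f_0\|_r$. The second term is exactly the sparse approximation error $\lesssim \bar M^{-\alpha/d}\|f_0\|_{B^\alpha_{p,q}}$ from \autoref{prop:bapp}, producing the $c_4 \bar M^{-\alpha/d}$ summand. For the first term, each $\|\tilde M_{m,k_i,\vect s_i} - M_{m,k_i,\vect s_i}\|_\infty \le \epsilon$, and choosing $\epsilon$ so that $L = O(\log(c(m,d)/\epsilon))$ gives $\epsilon \eqsim e^{-c_6 L}$; summing the per-term contributions weighted by the coefficients (again using the $\ell_p$ bound to keep $\sum_i |a_{k_i,\vect s_i}|$ controlled) collapses to the $c_5 e^{-c_6 L}\|f\|_{B^\alpha_{p,q}}$ summand, giving the stated bound.

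The step I expect to be the main obstacle is the simultaneous control in item~3, namely propagating the outer coefficient $a_{k_i,\vect s_i}$ and the intrinsic $2^{k_i}$ scale factor through the $1$-homogeneous rescaling of $L$ layers while keeping both the per-layer Frobenius norm constraints and the aggregate $2/L$-norm budget satisfied. The subtlety is that the resolution level $k_i$ can grow with $\bar M$, so the per-block coefficient magnitude is not uniformly bounded; one must carefully track how the $2^{k_i}$ factor is distributed as $2^{k_i/L}$ across layers (as noted in the remark following \autoref{lemma:tpapp2}) and then confirm that the mixed sum $\sum_i |a_{k_i,\vect s_i}2^{k_i}|^{2/L}$ is governed by the $\ell_p$ bound with the correct $\bar M^{1-2/(pL)}$ counting factor, rather than blowing up. Once this accounting is done cleanly, the remaining estimates are routine triangle-inequality and Hölder-type arguments.
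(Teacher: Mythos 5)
Your proposal follows essentially the same route as the paper's own proof: the paper likewise builds one block of $O(m^d)$ subnetworks per basis function via \autoref{lemma:tpapp2}, rescales each block's last layer to carry the coefficients $a_{k_i,\vect s_i}$ from \autoref{prop:bapp}, and converts the $\ell_p$ coefficient bound $\|\{2^{k_i}a_{k_i,\vect s_i}\}\|_p \lesssim \|f_0\|_{B^\alpha_{p,q}}$ into the $\ell_{2/L}$ budget via a Hölder-type counting lemma (\autoref{lemma:pnorm2} applied with $p' = 2/L$), which is exactly your ``$\bar M^{1-2/(pL)}$ factor coming from the number of active terms,'' and the error split into the $\bar M^{-\alpha/d}$ and $e^{-c_6 L}$ terms is the same. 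The only slip is your parenthetical condition ``$p \le 2/L$'': the counting step requires the opposite inequality $2/L < p$ (which always holds in this setting, since $p \geq 1$ and $L \geq 3$); if instead $p \le 2/L$ held, the exponent $1 - 2/(pL)$ would be nonpositive and the mechanism you describe would not apply.
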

Here ${\bar M}$ is the number of ``active'' subnetworks, which is not to be confused with the number of subnetworks at initialization. The proof can be found in \autoref{sec:proofthmpnormapp}.

% \begin{remark}
%   When training such a neural network, one do not explicitly specity $\alpha$ or choose $m$. Instead, one can use a large guess on $m$ to determine the model architecture, specifically $m$ and $L$, and the minimum approximation error of this model is no larger than any specific choice of $m$,
%   \begin{eqal*}
%       &\min_{f \in \gF} \|f - f_0\|_r \leq \min_{f \in \gF^{(m)}} \|f - f_0\|_r \\
%       &\quad\leq (c_{4,m} {\bar M}^{-\alpha/d} + c_{5, m} e^{-c_{6,m} L}) \|f\|_{B^\alpha_{p, q}}, \forall m.
%   \end{eqal*}
%   where $\gF$ denotes all the functions that satisfy the constraint in \autoref{sec:eqmodel}, and $\gF^{(m)}$ denotes the functions when each subnetwork approximates a B-spline basis of order $m$. We add subscript $m$ to $c_4, c_5, c_6$ to denote that they depend on $m$. In other word, a neural network can adapt to different degree of smoothness. One can also choose different $m$ for different parts in the domain of $\tilde f$, showing a higher degree of ``local adaptivity''.
% \end{remark}

Using the estimation error in \autoref{thm:pnncv} and approximation error in \autoref{thm:pnormapp}, by choosing $\bar M$ to minimax the total error, we can conclude the sample complexity of parallel neural networks using $\ell_2$ regularization, which is the main result (\autoref{thm:main}) of this paper. See \autoref{sec:proofthmmain} for the detail.

\section{Experiment}
\newcommand{\figdist}{-3mm}
\begin{figure}[t!]
   \vspace{-2mm}
   % \centering
   % \subfigure[]{
   %    \includegraphics[width=0.45\textwidth]{hill_curve.pdf}
   % }
   % \subfigure[]{
   %    \includegraphics[width=0.45\textwidth]{dopler_curve.pdf}
      
   % }\\\vspace{-0.8cm}
   % \centering
   % \subfigure[]{
   %    \includegraphics[width=0.45\textwidth]{hill_err_newlook.pdf}
   % }
   % \subfigure[]{
   %    \includegraphics[width=0.45\textwidth]{dolper_err_newlook.pdf}
   % }\\
   % \centering
   \subcaptionbox{Dopler, DoF=30.}{\includegraphics[width=0.3\textwidth]{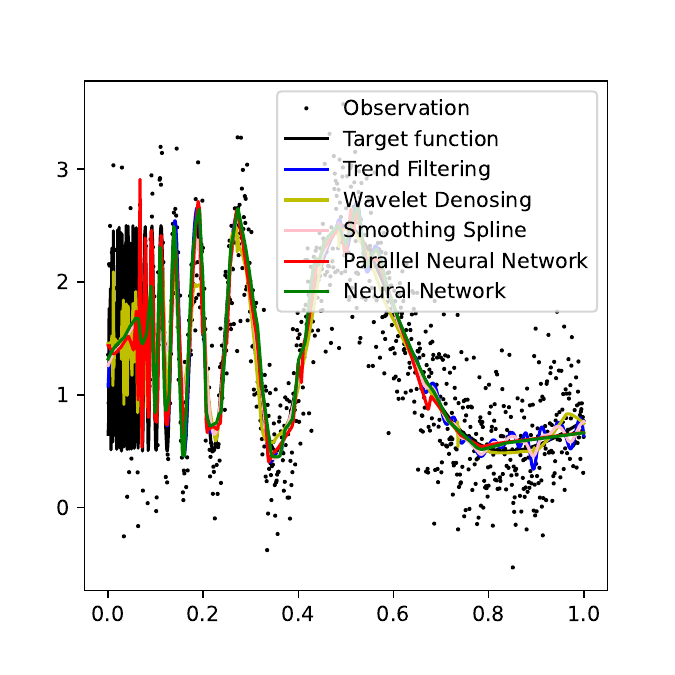}\vspace{\figdist}}
   \subcaptionbox{MSE versus DoF.}{\includegraphics[width=0.3\textwidth]{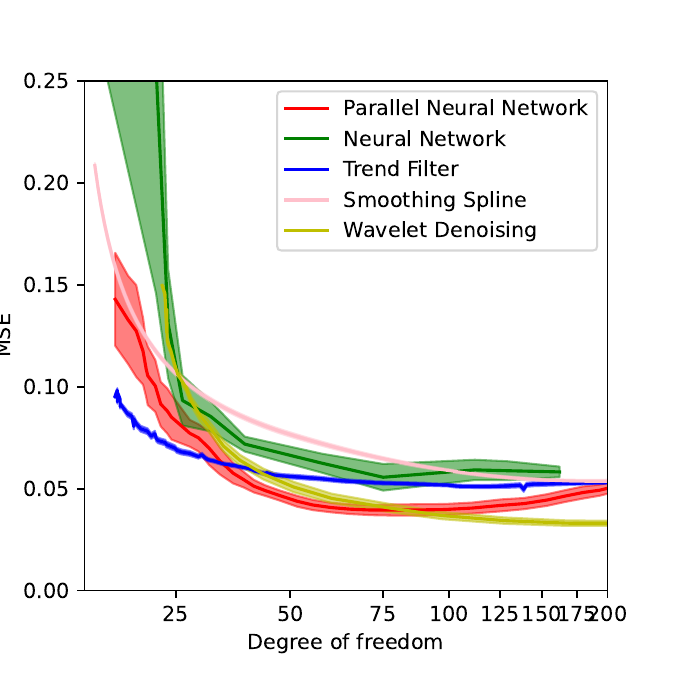}\vspace{\figdist}}
   \subcaptionbox{Active subnetworks.}{\includegraphics[width=0.3\textwidth]{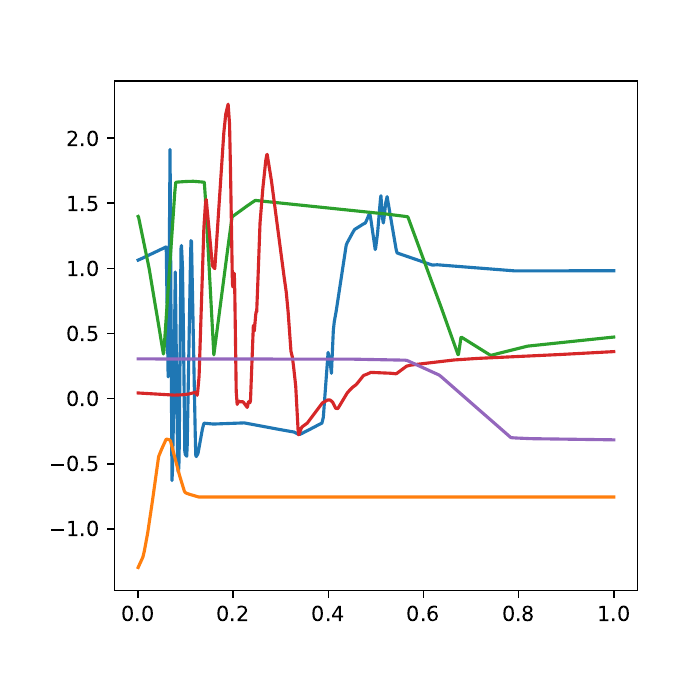}\vspace{\figdist}}\\
   \vspace{0mm}\\
    %----------
   \subcaptionbox{``Vary'', DoF=50.}{\includegraphics[width=0.3\textwidth]{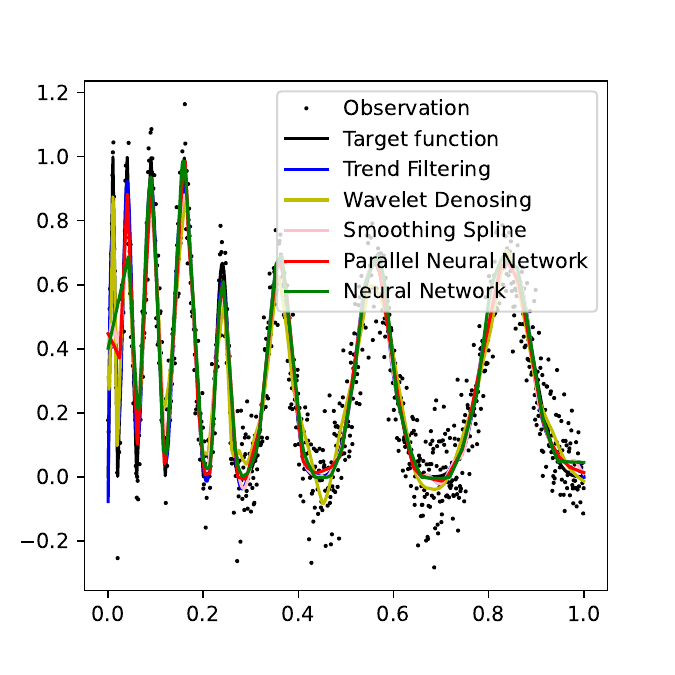}\vspace{\figdist}}
   \subcaptionbox{MSE versus DoF}{\includegraphics[width=0.3\textwidth]{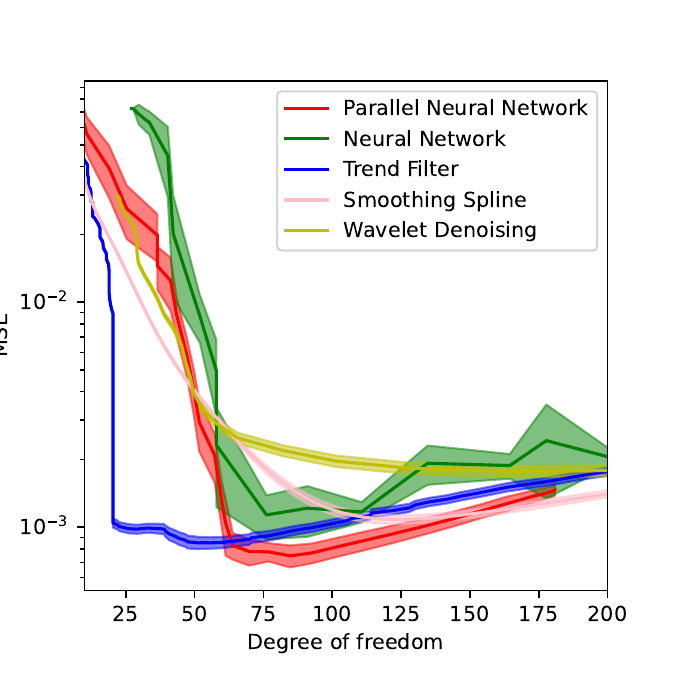}\vspace{\figdist}}
   \subcaptionbox{Active subnetworks.}{\includegraphics[width=0.3\textwidth]{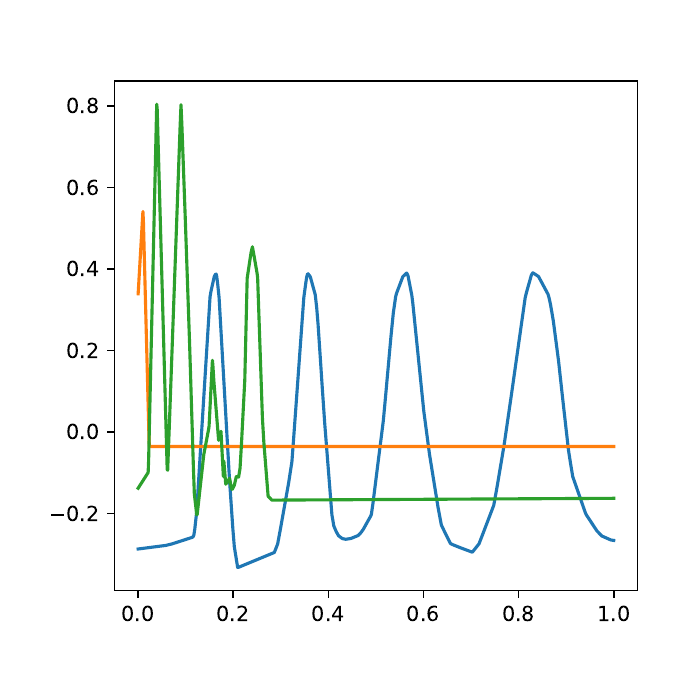}\vspace{\figdist}}\\
   \vspace{0mm}\\
   \subcaptionbox{Zoom in to (a)(d)}{\includegraphics[width=0.14\textwidth]{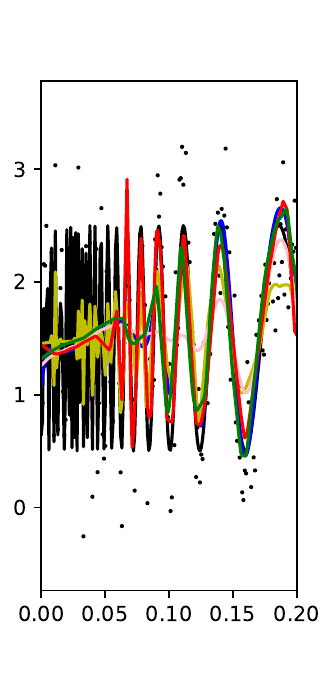}
   \hspace{-1mm}
   \includegraphics[width=0.14\textwidth]{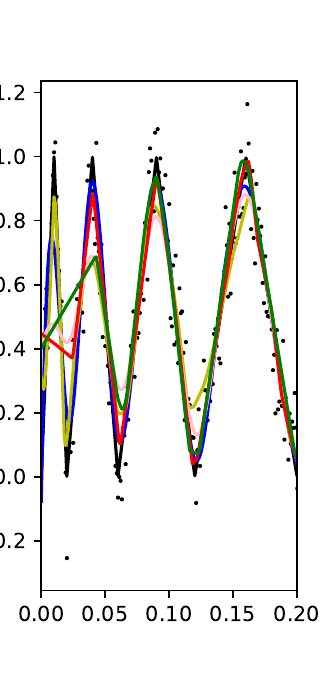}\vspace{\figdist}}
   \subcaptionbox{MSE versis $n$, Dopler}{\includegraphics[width=0.3\textwidth]{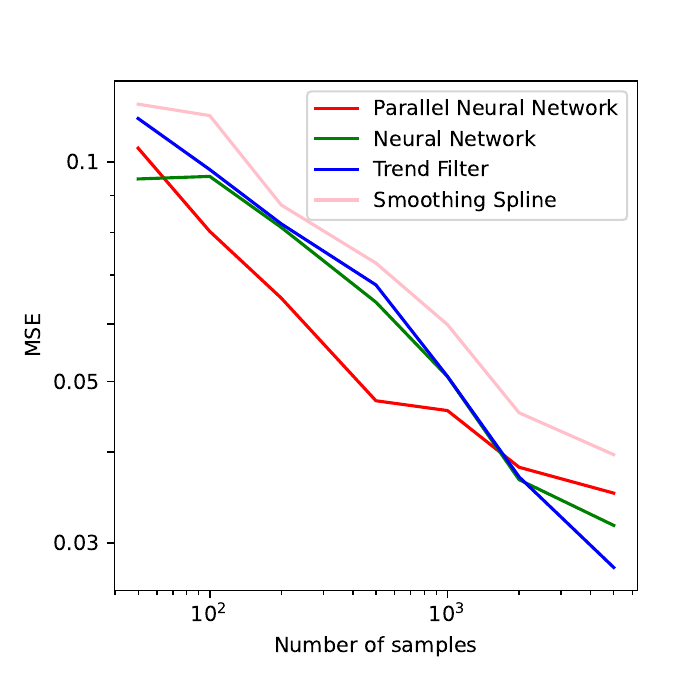}\vspace{\figdist}}
   \subcaptionbox{MSE versis $n$, ``Vary''}{\hspace{2mm}\includegraphics[width=0.3\textwidth]{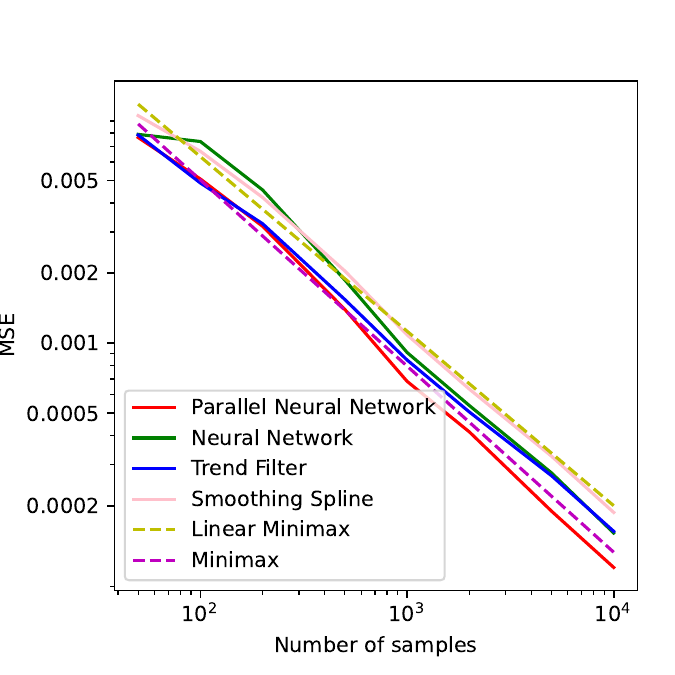}\vspace{\figdist}}\\
   \vspace{-3mm}\\

   \caption{Numerical experiment results of the Doppler function (a-c,h), and ``vary'' function (d-f,g). All the ``active'' subnetworks are plotted in (c)(f). The horizontal axis in (b) is not linear.
   % (a)(d): Estimations of different estimators.
   % % when the degree of freedom is 30(a) or 50(d). 
   % (b)(e) Mean squared error (MSE) with respect to degree of freedom (DoF).
   % % at the input points. 
   % % The shadowed area shows the 95\% confidence interval. 
   % Note that the horizontal axis in (b) is not linear. (c)(f): Output of all the ``active'' subnetwork. (g): Zooming into the left part of (a)(d). (h)(i): MSE with respect to number of samples.
   \vspace{-0.5cm}
   }
   \label{fig:exp}
\end{figure}

We empirically compare a parallel neural network (PNN) and a vanilla ReLU neural network (NN) with smoothing spline, trend filtering (TF) \citep{tibshirani2014adaptive}, and wavelet denoising.
Trend filtering can be viewed as a more efficient discrete spline version of locally adaptive regression spline and enjoys the same optimal rates for the BV classes. 
Wavelet denoising is also known to be minimax-optimal for the BV classes.
The results are shown in \autoref{fig:exp}. 
We use two target functions: a Doppler function whose frequency is decreasing(\autoref{fig:exp}(a)-(c)(h)), and a combination of piecewise linear function and piecewise cubic function, or ``vary'' function (\autoref{fig:exp}(d)-(f)(i)).
% In \autoref{fig:exp}(a)(d)(g), we show the estimation results using different methods when the degree of freedom is 15 (a), 30 (b) or 50(g).  
% In \autoref{fig:exp}(b)(e)(h), we show the mean squared error in predicting the true function at the input points. 
We repeat each experiment 10 times and take the average. The shallow area in \autoref{fig:exp}(b)(e) shows 95\% confidence interval by inverting the Wald's test.
% The degree of freedom of parallel neural network is defined as the number of subnetworks whose weights are nonzero. 
The degree of freedom (DoF) is computed based on \citet{tibshirani2015degrees}.

%As can be seen from the figure \autoref{fig:exp}(a)(b), 
As can be shown in the figure, both TF and wavelet denoising can adapt to the different levels of smoothness in the target function, while smoothing splines tend to be oversmoothed where the target function is less smooth 
(the left side in (a)(d), enlarged in (g)).
The prediction of PNN is similar to TF and wavelet denoising and shows local adaptivity.
% As for the mean squared error in predicting the true function at the input points, neural networks and trend filtering have smaller MSE than smoothing spline especially when the degree of freedom is relatively small, which agrees with the result in \citet{tibshirani2014adaptive}. 
% Neural networks can achieve lower error than trend filtering when the degree of freedom is very small. This is because 
%the degree of freedom are defined differently and 
% under the same degree of freedom, neural networks have more trainable parameters and thus higher flexibility. 
Besides, the MSE of PNN almost follows the same trend as TF and wavelet denoising
% lays between smoothing spline and trend filtering in most of the cases, and is closer to the latter,
which is consistent with our theoretical understanding that the error rate of neural network is closer to locally adaptive methods. Notably PNN, TF and wavelet denoising achieve lower error at a much smaller degree-of-freedom than smoothing splines. 
% In a vanilla NN, $\ell_2$ regularization in the weights is equivalent to $\ell_1$ regularization in any two successive layers, but to the best of our knowledge it does not lead to sparse representation learning unless some specific sparse structure is enforced.
% While our theory does not apply to vanilla neural networks, the results seem to suggest the NN behaves similar to smoothing spline and is \emph{not} locally adaptive. 

There are some mild drops in the best MSE one can achieve with Parallel NN vs TF in both examples. We are surprised that the drop is small because Parallel NN needs to learn the basis functions that TF essentially hard-coded. The additional price to pay for using a more adaptive and more flexible representation learning method seems not high at all. 

In \autoref{fig:exp}(c)(f), we give the output \emph{all} the ``active'' subnetwork, i.e. the subnetworks whose output is not a constant. 
Notice that the number of active subnetworks is much smaller than the initialization. 
This is because $\ell_2$ regularization in weights induces $\ell_p$ sparsity and the weight in most of the subnetworks reduces towards 0 after training.
More details are shown in \autoref{sec:expdet}.

{
In \autoref{fig:exp}(h)(i), we plot the MSE versus the number of training samples for ``Doppler'' and ``Vary'' respectively. It is clear that parallel NN works the best overall. In (i), we further compare the scaling of the MSE against the minimax rate ($n^{-4/5}$) and the minimax linear rate ($n^{-3/4}$), i.e., the best rate kernel methods could achieve.
%we also plotted the minimax rate ($n^{-4/5}$) and the minimax rate of linear estimators ($n^{-3/4}$). 
As is predicted by our theory, when $n$ is large, the MSE of parallel neural networks and trend filtering decreases at almost the same rate as the minimax rate, while smoothing splines, as expected, is converging at the (suboptimal) minimax linear rate. Interestingly, vanilla NN seems to converge at the optimal rate too on this example. It remains an open question whether vanila NN is merely ``lucky'' on this example, or it also  achieves the minimax rate for all functions in BV(m).
%However, it seems that the MSE of neural networks decreases faster than the minimax rate. The reason of this phenomenon remains to be studied.
}

%We also note that the maximum degree of freedom achieved by neural network is limited by optimization issues, as they are likely to get stuck at smooth local minima, rather than completely overfit to the noisy data. 

\section{Conclusion and Discussion}
\label{sec:conclusion}
% In this paper, we draw the connection between deep learning and nonparametric regression.
%we show that a two-layer neural network with truncated power basis function is equivalent to locally adaptive regression spline thus achieves the minimax rate of BV class. 
%e extend these results to multivariate function classes and deep ReLU neural network. 
In this paper, we show that a deep parallel neural network can be locally adaptive with standard $\ell_2$ regularization.
This confirms that neural networks can be nearly optimal in learning functions with heterogeneous smoothness which separates them from kernel methods.

Specifically, we prove that training an $L$ layer parallel neural network with standard $\ell_2$ regularization is equivalent to an $\ell_{2/L}$-penalized regression model with representation learning.
Since in typical application $L \gg 2$, standard regularization promotes a sparse linear combination of the learned bases. %, $0 < p \ll 1$, resulting in a sparse model. 
Using this method, we proved that a parallel neural network can achieve close to the minimax rate in the Besov space and bounded variation (BV) space by tuning the regularization factor. %and  without the need to tune the architecture of the neural network. 

\modif{
Our result reveals that one do not need to specify the smoothness parameter $\alpha$ (or $m$) when training a parallel neural network. 
With only an estimation of the upper bound of $\alpha$ (or $m$), parallel neural networks can adapt to different degree of smoothness, or choose different parameters for different regions of the domain of the target function. 
This property shows the strong adaptivity of deep neural networks.}
% This is a new type of adaptivity not possessed by traditional adaptive nonparametric regression methods like locally adaptive regression spline or trend filtering.

On the other hand, as the depth of neural network $L$ increases, $2/L$ tends to 0 and the error rate moves closer to the minimax rate of Besov and BV space.
This indicates that when the sample size is large enough, deeper models have smaller error than shallower models, and helps explain why empirically deep neural networks has better performance than shallow neural networks. 

\section*{Acknowledgments}
The work is partially supported by NSF Award \#2134214. The authors thank Alden Green for references on Besov-space embedding of BV classes, Dheeraj Baby for helpful discussion on B-splines as well as Ryan Tibshirani on connections to \citep{tibshirani2021equivalences} and for sharing with us a simpler proof of the Theorem 1 of \citep{parhi2021banach} based on Caratheorodory’s Theorem (used in the proof of Theorem~\ref{thm:2layer} on two-layer NNs).

\bibliography{main}
\bibliographystyle{iclr2023_conference}

\newpage
\onecolumn
\appendix

\section{Other related works} 
\label{sec:related}
\textbf{NN and kernel methods. } \citet{jacot2018neural} draws the connection between neural networks and kernel methods. However, it has been found that neural networks often outperform any kernel method, especially when the learning rate is relatively large \citep{lewkowycz2020large}. 
A series of work tried to distinguish NN from kernel methods by providing examples of function spaces that NN provably outperform kernel methods \citep{allen2019can, ghorbani2020neural}. 
However, these papers did not consider the local adaptivity of nerual networks, which provides a more systematic explanation.

\textbf{NN and splines. }
Besides \citet{parhi2021banach} which we discussed earlier, \citet{parhi2021kinds,parhi2021near} also leveraged the connections between NNs and splines. %Neither extension overlaps with what we do.
\citet{parhi2021kinds} focused on characterizing the variational form of multi-layer NN. \citet{parhi2021near} showed that two-layer ReLU activated NN achieves minimax rate for a BV class of order $1$ but did not cover multilayer NNs nor BV class with order $>1$, which is our focus.

\textbf{Weight-decay regularization with sparsity-inducing penalties. }
The connection between weight-decay regularization with sparsity-inducing penalties in two-layer NNs is folklore and used by \citet{neyshabur2014search,savarese2019infinite, ongie2019function,ergen2021convex,ergen2021revealing,parhi2021banach,parhi2021near,pilanci2020neural}. 
The key underlying technique --- an application of the AM-GM inequality (which we used in this paper as well) ---  can be traced back to \citet{srebro2004maximum} (see a recent exposition by \citet{tibshirani2021equivalences}). \citet{tibshirani2021equivalences}  also generalized the result to multi-layered NNs, but with a simple (element-wise) connections.
% {\citet{ergen2021global} generalized the results to a three-layer parallel neural network, and proved its equivalence to an $\ell_1$ sparse model, but this requires a non-standard regularizer. 
Besides, 
\citet{ergen2020implicit} proved that training a two-layer convolution neural network (CNN) with weight decay induces sparsity, and points to a potential extension to these works including our work.

Finally, it was brought to our attention that while \citet{savarese2019infinite} mainly consider two-layer NNs, a set of results about $L$-layer parallel NNs was presented in Appendix C of their paper, which essentially contains same arguments we used for proving the equivalence to an $\ell_{2/L}$ regularized optimization problem in  Proposition~\ref{prop:eqmodel}. The difference is they applied the insight to understand the interpolation regime while we focused on analyzing MSE in the noisy case. 

%showed that a parallel networks of depth $L$ have an inductive bias for the $L_{2/L}$ sparse model, and that an explicit weight decay is equivalent to an $\ell_{2/L}$

Proposition~\ref{prop:eqmodel}, is the \citet{savarese2019infinite} showed that a parallel networks of depth $L$ have an inductive bias for the $L_{2/L}$ sparse model,
and  explicit weight decay causes the solutions of these networks to have a sparse last layer with at most $n$ nonzero weights.

\modif{
\textbf{Resnet-type convolution neural networks.}
A recent series of work \citep{oono2019approximation, liu2021besov} proves that an arbitrary parallel neural network can be approximated by a resnet-type convolution neural networks. These works do not require the model to be sparse, thus are easier to train, yet they still require the architecture (the width and depth of each residual block, the number of residual blocks) to be tuned based on the dataset, and the estimation error analysis is based on the number of parameters.
Besides, the number of residual block need to increase with $n$, making the entire too deep to train in practice.
}

\textbf{Approximation and estimation.}
The approximation-theoretic and estimation-theoretic research for neural network has a long history too \citep{cybenko1989approximation,barron1994approximation,yarotsky2017error,schmidt2020nonparametric,suzuki2018adaptivity}. Most existing work considered the Holder, Sobolev spaces and their extensions, which contain only homogeneously smooth functions and cannot demonstrate the advantage of NNs over kernels. 
\modif{
The exceptions including \citet{suzuki2018adaptivity, oono2019approximation,liu2021besov} which, as we discussed earlier, requires modifications to NN architecture for each class. 
In contrast, we require tuning only the standard weight decay parameter. 
Most importantly, in all previously works, the estimation error of the model (eg. the covering number) depends on the number of nonzero parameters in the model, while our work provides a bound that depends on the norm of the weights instead of the number of subnetworks.}

%-----------------------------------------
% \section{Warm-up: Two-layer Neural Network}
\section{Two-layer Neural Network with Truncated Power Activation Functions}
\label{sec:warmup}
%In this section, we'll focus on a neural network similar to the one proposed in \citep{parhi2021banach} but with standard weight decay. Specifically, let the neural network with one-dimensional input
We start by recapping the result of \citet{parhi2021banach} and formalizing its implication in estimating BV functions.
\citet{parhi2021banach} considered a two layer neural network with truncated power activation function. % and a non-standard regularization.
%is equivalent to a polynomial spline in Radon domain. 
%In the univariate case,
% we can drop Radon transform and 
% the problem reduces to a trend filtering problem. 
Let the neural network be
\begin{eqal}
   f(x) = \sum_{j=1}^M v_j\act^m(w_j x + b_j) + c(x), 
   \label{eq:nnnstd}
\end{eqal}
% One can easily prove that with standard weight decay, the regularized regression problem
% \begin{eqal*}
%    \sum_{i=1}^n (f(x_i)-y_i)^2 + \sum_{k=1}^K (w_i^2 + v_i^2)
% \end{eqal*}
% is equivalent To
% \begin{eqal*}
%    \sum_{i=1}^n (g(x_i)-y_i)^2 + \sum_{k=1}^K a_k^{2/(m+1)}
% \end{eqal*}
% where $g(\cdot)$ has the form
% \begin{eqal*}
%    g(x) = \sum_{k=1}^K a_k\act^m(x + b_k)
% \end{eqal*}
% Here we abuse the notation of $b_k$.
where $w_j, v_j$ denote the weight in the first and second layer respectively, $b_j$ denote the bias in the first layer, $c(x)$ is a polynomial of order up to $m$, $\act^m(x) := \max(x, 0)^m$.
%Let $L$ be a convex loss function, $\gV$ a measurement operator, $\lambda > 0$ a constant, and $TV(\cdot)$ denote the total variation.  
\citet[Theorem 8]{parhi2021banach} showed that when $M$ is large enough, The optimization problem
\begin{eqal}
   \label{eq:l2nn}
   \min_{\vect w, \vect v} \hat L(f) + \frac{\lambda}{2}\sum_{j=1}^M (|v_j|^2 + |w_j|^{2m})
\end{eqal}
is equivalent to the locally adaptive regression spline:
\begin{eqal}
   \min_{f} \hat L(f) + \lambda TV(f^{(m)}(x)),
   \label{eq:tf}
\end{eqal}
which optimizes over arbitrary functions that is $m$-times weakly differentiable. The latter was studied in \citet{mammen1997locally}, which leads to the following MSE:
\def\thmtwolayer{
   Let $M \geq n-m$, and $\hat{f}$ be the function \autoref{eq:nnnstd} parameterized by the minimizer of \eqref{eq:l2nn}, then
   \begin{eqal*}
      \MSE(\hat{f}) = O(n^{-(2m+2)(2m+3)}).
   \end{eqal*}
}
\begin{theorem}
   \label{thm:2layer}
   \thmtwolayer
\end{theorem}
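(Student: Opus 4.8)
The plan is to reduce \autoref{thm:2layer} to the classical minimax analysis of locally adaptive regression splines by first establishing that the weight-decayed network objective \autoref{eq:l2nn} has the same minimizer (as a function of $x$) as the total-variation-penalized variational problem \autoref{eq:tf}. Once this equivalence is in place, $\hat f$ is literally the $m$-th order locally adaptive regression spline of \citet{mammen1997locally}, and the stated rate $O(n^{-(2m+2)/(2m+3)})$ follows by directly quoting their risk bound with the penalty level $\lambda$ tuned to the standard order. So the substance of the proof is the equivalence, and the rate is an invocation.

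To establish the equivalence I would proceed in two steps. The first is a \emph{penalty-reduction} step: the truncated power $\act^m$ is positively $m$-homogeneous, $\act^m(c\,t)=c^m\act^m(t)$ for $c\ge 0$, so within each unit the reparametrization $(w_j,b_j,v_j)\mapsto(cw_j,cb_j,c^{-m}v_j)$ leaves the represented function unchanged for every $c>0$. Minimizing the per-unit cost $\tfrac12(|v_j|^2+|w_j|^{2m})$ over this orbit and applying the AM--GM inequality collapses it to the \emph{path norm} $|v_j|\,|w_j|^m$, attained at the balanced point $|v_j|=|w_j|^m$ (units with $w_j=0$ or $v_j=0$ contribute nothing and are discarded). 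Normalizing $|w_j|=1$ then writes the network as a canonical degree-$m$ spline $\sum_j \tilde v_j\,\act^m(\pm x+\tilde b_j)+c(x)$; since $\tfrac{d^m}{dx^m}\act^m(w x+b)=m!\,w^m\mathbf{1}(wx+b>0)$, each atom contributes a jump of magnitude $m!|\tilde v_j|$ to the $m$-th distributional derivative, so $\sum_j|\tilde v_j|\eqsim TV(f^{(m)})$ up to the fixed factor $m!$. Hence the minimized network penalty equals $\tfrac{\lambda}{m!}TV(f^{(m)})$, and after absorbing $m!$ into the penalty level the two objectives coincide on network-representable functions.

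The second is the \emph{exact-representability} step, where the hypothesis $M\ge n-m$ enters. The empirical loss $\hat L$ depends on $f$ only through the $n$ fitted values $(f(x_1),\dots,f(x_n))$, and the null space of $TV(\cdot^{(m)})$ is the $(m+1)$-dimensional space of polynomials of degree $\le m$, carried exactly by the term $c(x)$ in \autoref{eq:nnnstd}. Working in the quotient $\R^n/\{\text{polynomial evaluations}\}$, of dimension at most $n-m-1$, I would invoke Carath\'{e}odory's theorem for conic hulls: the fitted-value vector of any minimizer of \autoref{eq:tf} lies in the conic hull of the signed truncated-power atoms, hence is a nonnegative combination of at most $n-m$ of them with the \emph{same} total mass $\sum_j|\tilde v_j|$. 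Therefore a network with $M\ge n-m$ units realizes the variational minimizer exactly, so the network minimum, which a priori only upper-bounds the LARS minimum (the network class being a subset), in fact equals it; combined with the penalty reduction, the two problems share a minimizer $\hat f$.

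The main obstacle is precisely this second step: one must show the infimum of \autoref{eq:tf} over all $m$-times weakly differentiable functions is \emph{attained} by a finite-knot spline whose knot count is controlled by the sample size, rather than merely approximated by increasingly complex networks. The Carath\'{e}odory argument settles this, but it requires care in (i) identifying the correct finite-dimensional ambient space after removing the polynomial null space, and (ii) verifying that the conic-combination bound respects the sign pattern so that $\sum_j|\tilde v_j|$ (not just the number of knots) is simultaneously minimized. The remaining subtleties --- degenerate units and achieving rather than merely approaching the AM--GM balance --- are routine once the homogeneity rescaling is set up.
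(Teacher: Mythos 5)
Your proposal matches the paper's proof essentially step for step: the AM--GM/homogeneity reduction collapsing weight decay to the path norm $\sum_j |v_j|\,|w_j|^m \eqsim TV^{(m)}(f)$, Carath\'{e}odory's theorem applied after projecting out the degree-$m$ polynomial component of the fitted-value vectors in $\R^n$ to show any TV-penalized solution is realized by at most $n-m$ knots (hence exactly representable once $M \ge n-m$), and finally quoting Mammen--van de Geer's risk bound for locally adaptive regression splines. The only cosmetic difference is that the paper first invokes Mammen--van de Geer's Proposition 1 to obtain a finite-knot spline minimizer of the variational problem and then applies Carath\'{e}odory in convex-hull form, whereas you phrase the reduction conically over the atom family; these are the same argument.
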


% % \subsection{Proof of \autoref{thm:2layer}}
% \label{sec:proofthm2layer}
% \textbf{\autoref{thm:2layer}.}\textit{\thmtwolayer}
We show a simpler proof in the univariate case due to \citet{tetibshirani2022private}:

\begin{proof}
As is shown in \citet[Theorem 8]{parhi2021banach}, the minimizer of \autoref{eq:l2nn} satisfy 
\begin{eqal*}
   |v_j| = |w_j|^{m}, \forall k
\end{eqal*}
so the TV of the neural network $f_{NN}$ is 
\begin{eqal*}
   TV^{(m)}(f_{NN}) &= TV^{(m)} c(x) + \sum_{j=1}^M |v_j||w_j|^m TV^{(m)}(\act^{(m)}(x))\\
   &= \sum_{j=1}^M |v_j||w_j|^m\\
   &= \frac{1}{2}\sum_{j=1}^M (|v_j|^2 + |w_j|^{2m})
\end{eqal*}
which shown that \autoref{eq:l2nn} is equivalent to the locally adaptive regression spline \autoref{eq:tf} as long as the number of knots in \autoref{eq:tf} is no more than $M$. 
Furthermore, it is easy to check that any spline with knots no more than $M$ can be expressed as a two layer neural network \autoref{eq:l2nn}. 
It suffices to prove that the solution in \autoref{eq:tf} has no more than $n-m$ number of knots. 

 \citet[Proposition 1]{mammen1997locally} showed that there is a solution to \autoref{eq:tf} $\hat f(x)$ such that $\hat f(x)$ is a $m$th order spline with a finite number of knots but did not give a bound. Let the number of knots be $M$, we can represent $\hat f$ using the truncated power basis
\begin{eqal*}
   \hat f(x) = \sum_{j=1}^M a_j (x - t_j)_+^m + c(x)
   := \sum_{j=1}^M a_j \act_j^{(m)}(x) + c(x)
\end{eqal*}
where $t_j$ are the knots, $c(x)$ is a polynomial of order up to $m$, and define $\act_j^{(m)}(x) =(x - t_j)_+^m$. 

\citet{mammen1997locally} however did not give a bound on $M$. \citet{parhi2021banach}'s Theorem~1 implies that $M\leq n-m$. Its proof is quite technical and applies more generally to a higher dimensional generalization of the BV class.

\citet{tetibshirani2022private} communicated to us the following elegant argument to prove the same using elementary convex analysis and linear algebra, which we present below.

Define $\Pi_m(f)$ as the $L^2(P_n)$ projection of $f$ onto polynomials of degree up to $m$, $\Pi_m^{\bot}(f) := f- \Pi_m(f)$.
It is easy to see that 
\begin{eqal*}
   \Pi_m^\bot f(x) = \sum_{j=1}^M a_j \Pi_m^\bot \act_j^{(m)}(x)
\end{eqal*}
% $TV(\hat f) = m!\sum_{j=1}^M |a_j|$.

Denote $f(x_{1:n}) := \{f(x_1), \dots, f(x_n)\} \in \R^n$ as a vector of all the predictions at the sample points.
\begin{eqal*}
   &\Pi_m^\bot \hat f(x_{1:n}) = \sum_{j=1}^M a_j \Pi_m^\bot \act_j^{(m)}(x_{1:n}) \in  \Pi_m^\bot\conv\{\pm \act_j^{(m)}(x_{1:n})\}\cdot\sum_{j=1}^M |a_j| \\
   &\in  \conv\{\pm \Pi_m^\bot\act_j^{(m)}(x_{1:n})\}\cdot\sum_{j=1}^M |a_j|
\end{eqal*}
where $\conv$ denotes the convex hull of a set. 
The convex hull $\conv\{\pm \act_j^{(m)}(x_{1:n})\}\cdot\sum_{j=1}^M |a_j|$ is an $n$-dimensional space, and polynomials of order up to $m$ is an $m+1$ dimensional space, so the set defined above has dimension $n-m-1$.
By Carath\'eodory's theorem, there is a subset of points in this space 
\begin{eqal*}
   \{\Pi_m^\bot\act_{j_k}^{(m)}(x_{1:n})\} \subseteq \{\Pi_m^\bot\act_j^{(m)}(x_{1:n})\}, 1 \leq k \leq n-m
\end{eqal*}
such that 
\begin{eqal*}
   \Pi_m^\bot f(x) = \sum_{k=1}^{n-m} \tilde a_k \Pi_m^\bot \act_{j_k}^{(m)}(x), 
   \sum_{k=1}^{n-m} |a_k| \leq 1
\end{eqal*}
In other word, there exist a subset of knots $\{\tilde t_j, j \in [n-m]\}$ that perfectly recovers $\Pi_m^\bot \hat f(x)$ at all the sample points, and the TV of this function is no larger than $\hat f$.
% Furthermore, 
% \begin{eqal*}
%    TV(\sum_{k=1}^{n-m} \tilde a_k \Pi_m^\bot \act_{j_k}^{(m)}(x)) = \sum_{k=1}^{n-m}
% \end{eqal*}

This shows that
$$
   \tilde f(x) = \sum_{j=1}^{n-m} \tilde a_j (x - t_j)_+^m, 
   s.t. \tilde f(x_i) = f(x_i)
$$
for all $x_i$ in $n$ onbservation points. 

The MSE of locally adaptivity regressive spline \autoref{eq:tf} was studied in \citet[Section 3]{mammen1997locally}, which equals the error rate given in \autoref{thm:2layer}.
\end{proof}

%The rate is optimal for $BV(m)$ up to a factor of constant.
This indicates that the neural network \autoref{eq:nnnstd} is minimax optimal for $BV(m)$. 

Let us explain a few the key observations behind this equivalence. (a) The truncated power functions (together with an $m$th order polynomial) spans the space of an $m$th order spline.
(b) The neural network in \eqref{eq:nnnstd} is equivalent to a free-knot spline with $M$ knots (up to reparameterization).
(c) A solution to \eqref{eq:tf} is a spline with at most $n-m$ knots \citep[Theorem 8]{parhi2021banach}.
(d) Finally, by the AM-GM inequality 
$$
|v_j|^2 + |w_j|^{2m} \geq 2|v_j||w_j|^{m} = 2 |c_j|
$$
where $c_j = v_j|w_j|^{m}$ is the coefficient of the corresponding $j$th truncated power basis. The $m$th order total variation of a spline is equal to $\sum_j|c_j|$. It is not hard to check that the loss function depends only on $c_j$, thus the optimal solution will always take ``$=$'' in the AM-GM inequality.

% \begin{theorem}
%    \citep[Section 5]{tibshirani2014adaptive} Let $\gV$ be an evenly spaced discrete measurement with size $n$, and $L$ be mean square error (MSE) loss $\frac{1}{n} \sum_{i=1}^n (f(x_i) - y_i)^2$ where $y_i = f_0(x_i) + \epsilon_i$ is the observation with iid. zero mean sub-Gaussian noise , $f_0$ is an unknown regression function to be estimated, then with $\lambda = O(n^{1/(2m+3)})$, the solution to problem \autoref{eq:tf} denoted as $\hat f$ satisfy
%    \begin{eqal*}
%       \frac{1}{n} \sum_{i=1}^n (\hat f(x_i) - f_0(x_i))^2 = O(n^{-(2m+2)(2m+3)})
%    \end{eqal*}
%    which is also the minimax rate of for estimation over function class $TV^{(m)}$, i.e. functions with bounded total variation of $m$-th order derivative.
% \end{theorem}

%This statement was proved in \citet[Theorem 1 \& Theorem 8]{parhi2021banach}. 

% One remarkable property of the NN in \eqref{eq:nnnstd} is that it \emph{is} a spline, rather than approximates one. It, however, does not involve representation learning beyond selecting where the knots are for the spline. Moreover, neither the activation function nor the regularization term is commonly used in practice. 
% %\autoref{sec:warmup} is rarely used in practice. 
% %non-standard in the sense of activation function, residual connection and regularizer. 
% %The natural question to ask is whether similar estimation error can be achieved by neural networks with more standard architecture. 
% In the next section, we provide a new result using parallel DNN with only ReLU and weight decay.

%%%%%%%%%%%%%%%%%%%%%%%%%%%%%%%%%%%%%%%%%%%%%%%%%%%%%%%%%%%

\section{Introduction To Common Function Classes}

In the following definition define $\Omega$ be the domain of the function classes, which will be omitted in the definition.

\subsection{Besov Class}
\label{sec:besov}
% Extension of total variation (TV) class to multivariate functions is not simple. In this paper, we focus on Besov class instead.

\begin{definition}
   Modulus of smoothness: For a function $f \in L^p(\Omega)$ for some $1 \leq p \leq \infty$, the $r$-th modulus of smoothness is defined by 
   \begin{equation*}
      w_{r,p}(f, t) = \sup_{h \in \R^d: \|h\|_2\leq t} \|\Delta_h^r(f)\|_p,
   \end{equation*}
   \begin{empheq}[left={\Delta_h^r(f):=\empheqlbrace}, ]{align*}
      &\sum_{j=0}^r (_j^r)(-1)^{r-j}f(x+jh), &\textrm{ if } x\in \Omega, x+rh \in \Omega, \\
      & 0, &\textrm{otherwise}.
   \end{empheq}
\end{definition}

\begin{definition}
   Besov space: For $1 \leq p, q \leq \infty, \alpha > 0, r:=\lceil \alpha \rceil + 1$, define 
   \begin{empheq}[left={|f|_{B^{\alpha}_{p,q}} = \empheqlbrace}]{align*}
      &\Big(\int_{t=0}^\infty(t^{-\alpha}w_{r,p}(f,t))^q \frac{dt}{t}\Big)^\frac{1}{q}, &q < \infty \\
      &\sup_{t>0}t^{-\alpha}w_{r,p}(f,t), & q=\infty,
   \end{empheq} 
   and define the norm of Besov space as:
   \begin{equation*}
      \|f\|_{B^{\alpha}_{p,q}} = \|f\|_p + |f|_{B^{\alpha}_{p,q}}.
   \end{equation*}
   A function $f$ is in the Besov space $B^{\alpha}_{p,q}$ if $\|f\|_{B^{\alpha}_{p,q}}$ is finite.
\end{definition}

% We omit the function domain $\Omega$ in the statements.
% Typically, it is required that $p, q \geq 1$.
Note that the Besov space for $0 < p, q < 1$ is also defined, but in this case it is a quasi-Banach space instead of a Banach space and will not be covered in this paper.

Functions in Besov space can be decomposed using B-spline basis functions. 
Any function $f$ in Besov space $B^\alpha_{p, q}, \alpha > d/p$ can be decomposed using B-spline of order $m, m > \alpha$: let $\vect x \in \R^d$, 
\begin{eqal}
   f(\vect x) = \sum_{k=0}^\infty \sum_{\vect s\in J(k)} c_{k,\vect s}(f) M_{m, k, \vect s}(\vect x)
   \label{eq:besovdecom}
\end{eqal}
where $J(k):=\{2^{-k}\vect s: \vect s \in [-m, 2^k+m]^d \subset \Z^d \}$, $M_{m, k, \vect s} (\vect x):= M_m(2^k (\vect x - \vect s))$, and $M_k(\vect x) = \prod_{i=1}^d M_k(x_i)$ is the cardinal B-spline basis function which can be expressed as a polynomial:
\begin{eqal}
   M_m(x) &= \frac{1}{m!} \sum_{j=1}^{m+1} (-1)^j {m+1 \choose j}(x-j)^m_+ \\
   &= ((m+1)/2)^m\frac{1}{m!} \sum_{j=1}^{m+1} (-1)^j {m+1 \choose j}\Big(\frac{x-j}{(m+1)/2}\Big)^m_+,
   \label{eq:bspline}
\end{eqal}
%\vect s\in \Z^d, -m \leq s_i \leq 2^k+m, 
%i \in [d]\}$.
Furthermore, the norm of Besov space is equivalent to the sequence norm:
\begin{eqal*}
   \|\{c_{k,\vect s}\}\|_{b^\alpha_{p, q}} := \Bigl(\sum_{k=0}^\infty(2^{(\alpha-d/p)k}\|\{c_{k,\vect s}(f)\}_{\vect s}\|_p)^{q}\Bigr)^{1/q} 
   \eqsim \|f\|_{B^\alpha_{p,q}}.
\end{eqal*}
See e.g. \citet[Theorem 2.2]{dung2011optimal} for the proof.

The Besov space is closely connected to other function spaces including the H\"older space ($\gC^\alpha$) and the Sobolev space ($W^\alpha_p$). Specifically, if the domain of the functions is $d$-dimensional \citep{suzuki2018adaptivity, sadhanala2021multivariate},
\begin{itemize}
   \item $\forall \alpha \in \N$, $B^\alpha_{p, 1} \subset W^\alpha_p \subset B^\alpha_{p, \infty}$, and $B^\alpha_{2, 2} = W^\alpha_2$.
   \item For $0 < \alpha < \infty$ and $\alpha \in \gN, \gC^\alpha = B^\alpha_{\infty, \infty}$.
   \item If $\alpha > d/p$, $B^\alpha_{p, q} \subset \gC^0$.
\end{itemize}

\subsection{Other Function Spaces}
\label{sec:tv}

\begin{definition}
   H\"older space: 
   let $m \in \N$, the $m$-th order Holder class is defined as
   \begin{eqal*}
      \gC^m = \left\{f: \max_{|a| = k}\frac{\left| D^a f(x) - D^a f(z)\right|}{\|x-z\|_2} < \infty, \forall x, z \in \Omega\right\}
   \end{eqal*}
   where $D^a$ denotes the weak derivative.
   % let $\alpha \in \N$, define the H\"older norm as 
   % \begin{eqal*}
   %    \|f\|_{\gC^\alpha} := \max_{|a|\leq \beta} \|D^m f\|_\infty + \max_{|a|=m} \sup_{x, y \in \Omega} \frac{|D^a f(x) - D^a f(y)|}{|x-y|^{\alpha-m}},
   % \end{eqal*}
   % where $\Omega$ is the domain of the function class, $m = \ceil \alpha \rceil$, $D^a$ denotes the weak derivative.
   % The H\"older space $\gC^\alpha$ is the set of functions with finite H\"older norm: 
   % \begin{eqal*}
   %    \gC^\alpha = \{f: \|f\|_{\gC^\alpha} < \infty\}.
   % \end{eqal*}
\end{definition}

Note that fraction order of H\"older space can also be defined. For simplicity, we will not cover that case in this paper.

\begin{definition}
   Sobolev space: let $m\in\gN, 1 \leq p \leq \infty$, the Sobolev norm is defined as
   \begin{eqal*}
      \|f\|_{W^m_p} := \left(\sum_{|a|\leq m} \|D^a f\|_p^p\right)^{1/p},
   \end{eqal*}
   the Sobolev space is the set of functions with finite Sobolev norm:
   \begin{eqal*}
      W^m_p :=  \{f: \|f\|_{W^m_p} < \infty\}.
   \end{eqal*}
\end{definition}

\begin{definition}
   Total Variation (TV): The total variation (TV) of a function $f$ on an interval $[a, b]$ is defined as 
   \begin{eqal*}
      TV(f) = \sup_{\gP} \sum_{i=1}^{n_\gP-1} |f(x_{i+1}) - f(x_i)|
   \end{eqal*}
   where the $\gP$ is taken among all the partitions of the interval $[a, b]$.
\end{definition}

% If $f$ is (weakly) differentiable, it has the equivalent definition
% \begin{eqal*}
%    TV(f) = \int_a^b|f'(x)|dx.
% \end{eqal*}
In many applications, functions with stronger smoothness conditions are needed, which can be measured by high order total variation.

\begin{definition}
   High order total variation: the $m$-th order total variation is the total variation of the $(m-1)$-th order derivative
   \begin{eqal*}
      TV^{(m)}(f) = TV(f^{(m-1)})
   \end{eqal*}
\end{definition}

\begin{definition}
   Bounded variation (BV): The $m$-th order bounded variation class is the set of functions whose total variation (TV) is bounded.
%$a \lesssim b$ to denote $a \leq C b$ for some constant $C$ that does not depend on $a$ or $b$, and $a \eqsim b$ to denote $a \lesssim b$ and $ b\lesssim a$. We use $M_m(\cdot)$ to denote $m$-th order Cardinal B-spline basis functions, and define $B_{m, k, \vect s}(\vect x) := B_m(2^k(\vect x - \vect s))$ as the multi-resolution B-spline basis functions.functions that has finite $m$-th order total variation:
   \begin{eqal*}
      BV(m) := \{f: TV(f^{(m)}) < \infty\}.
   \end{eqal*}
\end{definition}

\section{Proof of Estimation Error}

% \subsection{Proof of \autoref{prop:eqmodel}}
\subsection{Equivalence Between Parallel Neural Networks and $p$-norm Penalized Problems}
\label{sec:proofpropeqmodel}
\textbf{\autoref{prop:eqmodel}.}\textit{\propeqmodel}

\begin{proof}
% Using Lagrange's method, one can easily find \autoref{eq:l2} is equivalent to a constrained optimization problem:
% \begin{eqal}
%    % \argmin_{\{\mat W^{(\ell)}_j, \vect b^{(\ell)}_j\}} &\frac{1}{n}\sum_{i=1}^n \ell\Big(\sum_{j=1}^M f_j(\vect x_i), y_i\Big), \\
%    \argmin_{\{\mat W^{(\ell)}_j, \vect b^{(\ell)}_j\}} &\hat L\Big(\sum_{j=1}^M f_j\Big), &
%    s.t. &\sum_{j=1}^M\sum_{\ell=1}^L \big\|\mat W^{(\ell)}_j\big\|_F^2 \leq P
%    \label{eq:l2c}
% \end{eqal}
% for some constant $P$ that depends on $\lambda$ and the dataset $\gD$.

We make use of the property from \autoref{eq:eqv} to minimize the constraint term in \autoref{eq:l2c} while keeping this neural network equivalent to the original one.
Specifically, let $\mat W^{(1)}, \vect b^{(1)}, \dots \mat W^{(L)}, \vect b^{(L)}$ be the parameters of an $L$-layer neural network. 
\begin{equation*}
   f(x) = \mat W^{(L)}\act(\mat W^{(L-1)}\act(\dots \act(\mat W^{(1)} x+\vect b^{(1)}) \dots) +\vect b^{(L-1)}) + \vect b^{(L)},
\end{equation*}
which is equivalent to 
\begin{equation*}
   f(x) = \alpha_L \tilde{\mat W}^{(L)}\act(\alpha_{L-1}\tilde{\mat W}^{(L-1)}\act(\dots \act(\alpha_1 \tilde{\mat W}^{(1)} x+\tilde{\vect b}^{(1)}) \dots) +\tilde{\vect b}^{(L-1)}) + \tilde{\vect b}^{(L)},
\end{equation*}
%for some $\{\vect {\tilde b}^{(\ell)}\}$, 
as long as  $\alpha_\ell > 0, \prod_{\ell=1}^L \alpha^L = \prod_{\ell=1}^L \|\mat W^{(\ell)}\|_F$, where $\tilde {\mat W}^{(\ell)} := \frac{\mat W^{(\ell)}}{\|\mat W^{(\ell)}\|_F}$.
% then all the neural networks that are equivalent to this one by scaling has parameters $a_1 \tilde{\mat W}^{(1)}, \tilde {\vect b}^{(1)}, \dots, a_\ell\tilde{\mat W}^{(L)}, \tilde{\vect b}^{(L)}$ for some $a_\ell > 0$, and $\tilde{\vect b}^{(\ell)}$ depends on $\{a_\tau, \forall \tau\leq \ell\}$ and $\vect b_\ell$. 
% % Here we slightly abuse the notation $a$.
% Furthermore, $\prod_{\ell=1}^L  b_\ell = \prod_{\ell=1}^L \|\mat W^{(\ell)}\|_F$. 
By the AM-GM inequality, the $\ell_2$ regularizer of the latter neural network is 
\begin{eqal*}
   \sum_{\ell=1}^L \|\alpha_\ell \tilde{\mat W}^{(\ell)}\|_F^2 
   = \sum_{\ell=1}^L \alpha_\ell^2 
   \geq L\left( \prod_{\ell=1}^L a_\ell \right)^{2/L} 
   = L\left( \prod_{\ell=1}^L  \|\mat W^{(\ell)}\|_F \right)^{2/L} 
\end{eqal*}
and equality is reached when $\alpha_1 =\alpha_2 =\dots = \alpha_L$.
%$\|\mat W^{(1)}\| = \|\mat W^{(2)}\| =\dots=\|\mat W^{(\ell)}\|$. 
In other word, in the problem \autoref{eq:l2}, it suffices to consider the network that satisfies 
\begin{eqal}
   \|\mat W^{(1)}_j\|_F = \|\mat W^{(2)}_j\|_F = \dots = \|\mat W^{(L)}_j\|_F,
   \forall j \in [M], \ell \in [L].
   \label{eq:suffice}
\end{eqal}
Using \autoref{eq:eqv} again, one can find that the neural network is also equivalent to 
\begin{eqal*}
   f(x) &= \sum_{j=1}^M  a_j \bar{\mat W}^{(L)}\act(\bar{\mat W}_j^{(L-1)}\act(\dots \act(\bar{\mat W}_j^{(1)} x+\bar{\vect b}_j^{(1)}) \dots) +\bar{\vect b}_j^{(L-1)}) + \bar{\vect b}_j^{(L)},
\end{eqal*}
where 
\begin{eqal}
   \|\bar{\mat W}_j^{(\ell)}\|_F \leq \beta^{(\ell)},
   a_j = \frac{\prod_{\ell=1}^L \|\mat W^{(\ell)}_j\|_F}{\prod_{\ell=1}^L \beta^{(\ell)}} = \frac{\|\mat W^{(1)}_j\|_F^L}{\prod_{\ell=1}^L \beta^{(\ell)}} = \frac{(\sum_{\ell=1}^L\|\mat W^{(\ell)}_j\|_F^2/L)^{L/2}}{\prod_{\ell=1}^L \beta^{(\ell)}},
\label{eq:wj}
\end{eqal}
where the last two equality comes from the assumption \autoref{eq:suffice}.
Choosing $\beta^{(\ell)}=c_1\sqrt{w}$ expect $\ell=1$ where $\beta^{(1)} = c_1\sqrt{d}$, and scaling $\vect{\bar b}^{(\ell)}$ accordingly and taking the regularizer in \autoref{eq:l2} into \autoref{eq:wj} finishes the proof.
\end{proof}

% \subsection{Proof of \autoref{thm:pnncv}}
\subsection{Covering Number of Parallel Neural Networks}
\label{sec:proofthmpnncv}
% In the following discussion, we focus on a constrained optimization problem:
% \begin{eqal}
%    % \argmin_{\{\mat W^{(\ell)}_j, \vect b^{(\ell)}_j\}} &\frac{1}{n}\sum_{i=1}^n \ell\Big(\sum_{j=1}^M f_j(\vect x_i), y_i\Big), \\
%    \argmin_{\{\mat W^{(\ell)}_j, \vect b^{(\ell)}_j\}} &\hat L\Big(\sum_{j=1}^M f_j\Big), \\
%    %----------------------
%    s.t.\ & \|\mat {\bar W}_j^{(1)}\|_F \leq c_1 \sqrt{d}, \forall j \in [M]; \\
%    & \|\mat {\bar W}_j^{(\ell)}\|_F \leq c_1  \sqrt{w}, \forall j \in [M],  2 \leq \ell \leq L,
%    \quad \|\{a_j\}\|_{2/L}^{2/L} \leq P'
%    \label{eq:l2c}
% \end{eqal}

\textbf{\autoref{thm:pnncv}.}\textit{\thmpnncv}

The proof relies on the covering number of each subnetwork in a parallel neural network (\autoref{lemma:nncn1}), observing that $|f(x)| \leq 2^{L-1}w^{L-1}\sqrt{d}$ under the condition in \autoref{lemma:nncn1},  and then apply \autoref{lemma:pnorm}. 
We argue that our choice of condition on $\|\vect b^{(\ell)}\|_2$ in \autoref{lemma:nncn1} is sufficient to analyzing the model apart from the bias in the last layer, because it guarantees that $\sqrt{w}\|\mat W^{(\ell)}\gA_{\ell-1}(x)\|_2\leq \|\vect b^{(\ell)}\|_2$. 
This leads to 
$$
\|\mat W^{(\ell)}\gA_{\ell-1}(\vect x)\|_\infty \leq \|\mat W^{(\ell)}\gA_{\ell-1}(\vect x)\|_2 \leq \sqrt{w} \|\vect b^{(\ell)}\|_2 \leq \|\vect b^{(\ell)}\|_\infty
$$
If this condition is not met, $\mat W^{(\ell)}\gA_{\ell-1}(\vect x) + b^{(\ell)}$ is either always positive or always negative for all feasible $\vect x$ along at least one dimension. 
If $(\mat W^{(\ell)}\gA_{\ell-1}(\vect x) + b^{(\ell)})_i$ is always negative, one can replace $b^{(\ell)})_i$ with $-\max_{\vect x} \|\mat W^{(\ell)}\gA_{\ell-1}(\vect x)\|_\infty$ without changing the output of this model for any feasible $\vect x$. 
If $(\mat W^{(\ell)}\gA_{\ell-1}(\vect x) + b^{(\ell)})_i$ is always positive, one can replace $b^{(\ell)})_i$ with $\max_{\vect x} \|\mat W^{(\ell)}\gA_{\ell-1}(\vect x)\|_\infty$, and adjust the bias in the next layer such that the output of this model is not changed for any feasible $\vect x$.
In either cases, one can replace the bias $\vect b^{(\ell)}$ with another one with smaller norm while keeping the model equivalent except the bias in the last layer.

\newcommand\lemmanncn[1][]{
   Let $\gF \subseteq  \{f: R^d \rightarrow \R \}$ denote the set of $L$-layer neural network (or a subnetwork in a parallel neural network) with width $w$ in each hidden layer.
   It has the form
   % \if #1 label
   %    \let\eqenv eqal
   % \else
   %    \let\eqenv eqal*
   % \fi
   \begin{eqal}
      f(x) &= \mat W^{(L)}\act(\mat W^{(L-1)}\act(\dots \act(\mat W^{(1)} x+\vect b^{(1)})
       \dots) +\vect b^{(L-1)}) + \vect b^{(L)}, \\
      %------------------------
      \mat W^{(1)} &\in \R^{w \times d}, \|\mat W^{(1)}\|_F \leq \sqrt{d}, \vect b^{(1)} \in \R^w, \|\vect b^{(1)}\|_2\leq \sqrt{dw},\\
      %-------------------------
      \mat W^{(\ell)} & \in \R^{w\times w} \|\mat W^{(\ell)}\|_F \leq \sqrt{w}, \vect b^{(\ell)} \in \R^w, \|\vect b^{(\ell)}\|_2\leq 2^{\ell-1} w^{\ell-1}\sqrt{dw},
      \quad \forall \ell = 2, \dots L-1, \\
      %-------------------------
      \mat W^{(L)} & \in \R^{1 \times w}, \|\mat W^{(L)}\|_F \leq \sqrt{w}, b^{(L)} = 0
      \ifx\\#1\\
         \nonumber
      \else
         \label{eq:covcond}
      \fi
   \end{eqal}
   and $\act(\cdot)$ is the ReLU activation function, the input satisfy $\|x\|_2 \leq 1$,
   then the supremum norm $\delta$-covering number of $\gF$ obeys
   \begin{eqal*}
      \log \gN(\gF, \delta) \leq c_7 Lw^2 \log (1/\delta) + c_8
    %   \leq c_7 w^2 L^2 \log(w) + c_8 Lw^2 \log (1/\delta) 
   \end{eqal*}
   where $c_7$ is a constant depending only on $d$, and $c_8$ is a constant that depend on $d, w$ and $ L$.
}
\begin{lemma}
   \label{lemma:nncn1}
   \lemmanncn[label]
\end{lemma}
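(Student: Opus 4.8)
The plan is a standard parameter-counting argument: cover the (bounded) parameter space in Euclidean norm and transfer the cover to $\gF$ through a Lipschitz bound for the parameter-to-function map in $\|\cdot\|_\infty$. As a first step I would bound the intermediate activations. Writing $\gA_\ell(\vect x)$ for the post-activation output of layer $\ell$, the $1$-Lipschitzness of $\act$ together with $\|\mat W^{(\ell)}\|_{\mathrm{op}}\le\|\mat W^{(\ell)}\|_F$ gives the recursion $\|\gA_\ell\|_2\le\|\mat W^{(\ell)}\|_F\,\|\gA_{\ell-1}\|_2+\|\vect b^{(\ell)}\|_2$. Plugging in the stated norm constraints and $\|\vect x\|_2\le 1$, an easy induction yields $\|\gA_\ell\|_2\le 2^\ell w^{\ell-1}\sqrt{dw}$ and hence $|f(\vect x)|\le 2^{L-1}w^{L-1}\sqrt{d}$, which is the magnitude bound already quoted above the lemma. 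The same activation bounds feed directly into the sensitivity analysis.

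Second, I would show that the map from parameters to functions is Lipschitz on the constraint set. Comparing two networks $f,\tilde f$ whose parameters differ by $\Delta\mat W^{(\ell)},\Delta\vect b^{(\ell)}$, I telescope the discrepancy $\|\gA_\ell-\tilde\gA_\ell\|_2$ layer by layer: perturbing layer $\ell$ injects an error of size at most $\|\Delta\mat W^{(\ell)}\|_F\,\|\gA_{\ell-1}\|_2+\|\Delta\vect b^{(\ell)}\|_2$, and each downstream layer amplifies it by at most its operator norm $\le\sqrt{w}$. Summing the per-layer contributions and inserting the activation bounds from the first step produces $\|f-\tilde f\|_\infty\le\Lambda\max_\ell\big(\|\Delta\mat W^{(\ell)}\|_F+\|\Delta\vect b^{(\ell)}\|_2\big)$ for an explicit Lipschitz constant $\Lambda=\Lambda(d,w,L)$ that is exponential in $L$ (of order $2^L w^L$ up to lower-order factors).

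Third, I would collect all weights and biases into a single vector $\vect\theta\in\R^N$ with $N=dw+(L-2)w^2+O(Lw)=O(Lw^2)$ coordinates, lying in a Euclidean ball of radius $R=R(d,w,L)$ controlled by the Frobenius and bias constraints. The $\epsilon$-covering number of such a ball obeys $\log\gN\le N\log(3R/\epsilon)$. Choosing $\epsilon=\delta/\Lambda$ and pushing the resulting parameter net through the Lipschitz map yields a $\delta$-cover of $\gF$ with $\log\gN(\gF,\delta)\le N\log(3R\Lambda/\delta)=N\log(1/\delta)+N\log(3R\Lambda)$. Since $N\le c_7 Lw^2$ with $c_7=O(d)$, while $\log(3R\Lambda)$ is independent of $\delta$, this is exactly $c_7 Lw^2\log(1/\delta)+c_8$ with $c_8=c_8(d,w,L)$, as claimed.

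The main obstacle is the sensitivity bound of the second step: the Lipschitz constant $\Lambda$ grows exponentially in $L$, because a perturbation of an early layer is amplified by the product of the operator norms of all later layers. The reason the lemma still holds with the stated rate is that $\Lambda$ enters the final estimate only through $\log\Lambda=O(L\log w)$, which is absorbed entirely into the $\delta$-free constant $c_8$ and never multiplies $\log(1/\delta)$; the coefficient of $\log(1/\delta)$ is pinned to the raw parameter count $N=O(Lw^2)$. The care required is purely in the bookkeeping: tracking the amplification factors precisely enough to confirm that all exponential-in-$L$ terms stay inside the logarithm.
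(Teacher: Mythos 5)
Your proposal is correct and follows essentially the same route as the paper's proof: bound the intermediate activations via the recursion $\|\gA_\ell\|_2 \le \sqrt{w}\|\gA_{\ell-1}\|_2 + \|\vect b^{(\ell)}\|_2$, establish a parameter-to-function Lipschitz bound by perturbing one layer at a time with downstream amplification $(\sqrt{w})^{L-\ell}$, and push a volumetric cover of the bounded parameter set through this map, letting the exponential-in-$L$ amplification factors enter only inside the logarithm and hence be absorbed into the $\delta$-free constant $c_8$. The only cosmetic difference is that the paper allocates a separate tolerance $\epsilon_\ell$, $\tilde\epsilon_\ell$ to each layer's weight and bias rather than using a single worst-case Lipschitz constant $\Lambda$ with a uniform tolerance $\delta/\Lambda$, which changes nothing in the resulting bound.
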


% \yw{In the above, if it is $\epsilon$-convering, then $\delta$ should be $\epsilon$?  Or you can just call it ``covering number'' rather than ``$\epsilon$-covering number''.}

\begin{proof}
First study two neural networks which differ by only one layer. Let $g_\ell, g_\ell'$ be two neural networks satisfying \autoref{eq:covcond} with parameters $\mat W_1, \vect b_1, \dots, \mat W_L, \vect b_L$ and $\mat W'_1, \vect b'_1, \dots, \mat W'_L, \vect b'_L$ respectively. Furthermore, the parameters in these two models are the same except the $\ell$-th layer, which satisfy
\begin{eqal*}
   \|\mat W_\ell - \mat W'_\ell\|_F \leq \epsilon, \|\vect b_\ell - \vect b'_\ell\|_2 \leq \tilde\epsilon. 
\end{eqal*}
Denote the model as
\begin{eqal*}
   g_\ell(x) = \gB_\ell(\mat W_\ell\gA_\ell(\vect x) + \vect b_\ell), 
   g'_\ell(x) = \gB_\ell(\mat W'_\ell\gA_\ell(\vect x) + \vect b'_\ell)
\end{eqal*}
where 
$\gA_\ell(\vect x) = \act(\mat W_{\ell-1}\act(\dots \act(\mat W_1 x + \vect b_1)\dots) + \vect b_{\ell-1})$ 
denotes the first $\ell-1$ layers in the neural network, and $\gA_\ell(x) = \mat W_{L}\act(\dots \act(\mat W_{\ell+1} \act(x) + \vect b_{\ell+1})\dots) + \vect b_{L})$ denotes the last $L-\ell-1$ layers, with definition $\gA_1(\vect x) = \vect x, \gB_L(\vect x)=\vect x$.

Now focus on bounding $\|\gA(\vect x)\|$. Let $\mat W \in \R^{m \times m'}, \|\mat W\|_F \leq \sqrt{m'}, \vect x \in \R^{m'}, \vect b \in \R^m, \|\vect b\|_2 \leq \sqrt{m}$
\begin{eqal*}
   \|\act(\mat Wx + \vect b)\|_2 & \leq \| \mat W \vect x + \vect b\|_2\\
   & \leq \|\mat W\|_2 \|\vect x\|_2 + \|\vect b\|_2\\
   & \leq \|\mat W\|_F \|\vect x\|_2 + \|\vect b\|_2\\
   & \leq \sqrt{m'} \|\vect x\|_2 + \sqrt{m}
\end{eqal*}
where we make use of $\|\cdot\|_2 \leq \|\cdot\|_F$.
% \yw{Are there differences between using $\|\mat W\|_F$ or using $\|\mat W\|_2$? The inequality above is true for the operator norm too.}
Because of that,
\begin{eqal}
   % \|\gA_\ell(\vect x)\|_2 &= \act(\mat W_{\ell-1} \gA_{\ell-1} + \vect b_{\ell-1})\\
   % &\leq \sqrt{w}(\|\gA_{\ell-1}\|_2 + (2\sqrt{w})^{\ell-2})\\
   % &\leq \sqrt{w}(\sqrt{w}(\|\gA_{\ell-2}\|_2+(2\sqrt{w})^{\ell-3})+ (2\sqrt{w})^{\ell-2})\\
   % &\leq \dots\\
   % &\leq w^{(\ell-2)/2} (\|\mat W_1 x + \vect b_1\|_2 + \ell-2) \\
   % &\leq w^{(\ell-2)/2} (\sqrt{d} + \sqrt{w} + \ell-2) \\
   \|\gA_2(\vect x)\|_2 &\leq \sqrt{d} + \sqrt{dw} \leq 2\sqrt{dw}, \\
   \|\gA_3(\vect x)\|_2 &\leq \sqrt{w}\|\gA_2(\vect x)\|_2 + 2w\sqrt{dw} \leq 4w\sqrt{dw}, \\
   & \dots\\
   \|\gA_\ell(\vect x)\|_2 & \leq \sqrt{w}\|\gA_{\ell-1}(\vect x)\|_2 \leq 2\sqrt{dw}(2{w})^{\ell-2}.
   \label{eq:boundA}
\end{eqal} 
% with the obvious exception $\|\gA_1(\vect x)\|_2 \leq 1$.

Then focus on $\gB(\vect x)$. Let $\mat W \in \R^{m \times m'}, \|\mat W\|_F \leq \sqrt{m'}, \vect x, \vect x' \in \R^{m'}, \vect b \in \R^m, \|\vect b\|_2 \leq \sqrt{m}$. Furthermore, $\|\vect x - \vect x'\|_2 \leq \epsilon$,
then 
\begin{eqal*}
   \|\act(\mat W\vect x+\vect b) - \act(\mat W\vect x'+\vect b)\|_2 &\leq\|\mat W(\vect x-\vect x')\|_2
   \leq \|\mat W\|_F \|\vect x-\vect x'\|_2
\end{eqal*}
which indicates that $\|\gB(\vect x) - \gB(\vect x)'\|_2 \leq (\sqrt{w})^{L-\ell} \|\vect x - \vect x'\|_2$

Finally, for any $\mat W, \mat W' \in \R^{m \times m'}, \vect x \in \R^{m'}, \vect b,\vect b' \in \R^m$, 
% \yw{and $\vect b,\vect b'$?}
one have
\begin{eqal*}
   \|(\mat W \vect x + \vect b) - (\mat W'\vect x + \vect b')\|_2 
   &=  \|(\mat W - \mat W') \vect x + (\vect b-\vect b')\|_2 \\
   & \leq \|\mat W - \mat W'\|_2 \|\vect x\|_2 + \|\vect b - \vect b'\|_2. \\
   & \leq \|\mat W - \mat W'\|_F \|\vect x\|_2 + \sqrt{m}\|\vect b - \vect b'\|_\infty.
\end{eqal*}

In summary,
\begin{eqal*}
   |g_\ell(\vect x) - g_\ell'(\vect x)| 
   &= |\gB_\ell(\mat W_\ell\gA_\ell(\vect x) + \vect b_\ell) - \gB_\ell(\mat W'_\ell\gA_\ell(\vect x) + \vect b'_\ell)|\\
   &\leq (\sqrt{w})^{L-\ell}\|(\mat W_\ell\gA_\ell(\vect x) + \vect b_\ell)-(\mat W'_\ell\gA_\ell(\vect x) + \vect b'_\ell)\|_2\\
   &\leq (\sqrt{w})^{L-\ell}(\|\mat W_\ell - \mat W'_\ell\|_F \|\gA_\ell(\vect x)\|_2 + \|\vect b_\ell - \vect b'_\ell\|_2)\\
   % &\leq ^{L-\ell} w^{(L-2)/2}(\sqrt{d} + \sqrt{w} + \ell-2)\epsilon + (\sqrt{w})^{L-\ell} \sqrt{w}\tilde\epsilon
   & \leq 2^{(\ell-1)} w^{(L+\ell-3)/2}d^{1/2} \epsilon + w^{(L-\ell)/2}\bar \epsilon
\end{eqal*}
% with the exception 
% \begin{eqal*}
%    |g_1(x) - g'_1(x)| \leq (\sqrt{w})^{L-1}\epsilon + (\sqrt{w})^{L-1} \sqrt{w}\tilde\epsilon\\
% \end{eqal*}

Let $f(x), f'(x)$ be two neural networks satisfying \autoref{eq:covcond} with parameters $W_1, b_1, \dots, W_L, b_L$ and $W'_1, b'_1, \dots, W'_L, b'_L$ respectively, and $\|W_\ell - W'_\ell\|_F \leq \epsilon_\ell, \|b_\ell - b'_\ell\|_F \leq \tilde\epsilon_\ell$. Further define $f_\ell$ be the neural network with parameters $W_1, b_1, \dots, W_\ell, b_\ell, W'_{\ell+1}, b'_{\ell+1}, \dots, W'_L, b'_L$, then 
\begin{eqal*}
   |f(x) - f'(x)| 
   &\leq |f(x) - f_1(x)| + |f_1(x) - f_2(x)| + \dots + |f_{L-1}(x) - f'(x)|\\
   &\leq \sum_{\ell=1}^{L} 2^{(\ell-2)}d^{1/2} w^{(L+\ell-3)/2} \epsilon + w^{(L-\ell)/2}\bar \epsilon
   %\sum_{\ell=1}^{L} w^{(L-2)/2}(\sqrt{d} + \sqrt{w} + \ell-2)\epsilon_\ell + (\sqrt{w})^{L-\ell} \sqrt{w}\tilde\epsilon_\ell
\end{eqal*}

For any $\delta > 0$, one can choose 
$$
   % \epsilon_1 = \frac{\delta}{2L (\sqrt{w})^{L-1}},
   % \epsilon_\ell = \frac{\delta}{2L w^{(L-2)/2}(\sqrt{d} + \sqrt{w} + \ell-2)},
   % \tilde\epsilon_\ell = \frac{\delta}{2L (\sqrt{w})^{L-\ell} \sqrt{w}}
   \epsilon_\ell = \frac{\delta}{2^\ell w^{(L+\ell-3)/2}d^{1/2}},
   \tilde\epsilon_\ell = \frac{\delta}{2w^{(L-\ell)/2}}
$$
such that $|f(x) - f'(x)| \leq \delta$.

% Because of that, if 
% \begin{eqal}
%    \|W_1-W_1'\|_F &\leq \frac{\epsilon}{Lm^{(L-1)/2}}\\
%    \|W_\ell-W_\ell'\|_F & \leq\frac{\epsilon}{Lm^{(L-2)/2}d^{1/2}}, & 2\leq \ell \leq L-1\\ 
%    \|W_L - W_L'\|_F & \leq\frac{\epsilon}{Lm^{(L-1)/2}d^{1/2}}
%    \label{eq:cneps}
% \end{eqal}
% then for any $x \in \R^{m'}, \|x\|_2 \leq 1$ and any $b_1, \dots, b_L$, 
% \begin{equation*}
%    |f(x; W_1b_1\dots W_Lb_L) - f(x; W'_1b_1\dots W'_Lb_L)| \leq \epsilon.
% \end{equation*}
On the other hand, the $\epsilon$-covering number of $\{\mat W \in \R^{m \times m'}: \|\mat W\|_F\leq \sqrt{m'} \}$ on Frobenius norm is no larger than $(2\sqrt{m'}/ \epsilon+1)^{m\times m'}$, and the $\bar\epsilon$-covering number of $\{\vect b \in \R^{m}: \|\vect b\|_2 \leq 1\}$ on infinity norm is no larger than $(2/\bar\epsilon+1)^m$. The entropy of this neural network can be bounded by 
\begin{eqal*}
   \log \gN(f; \delta) 
   & \leq w^2L\log(2^{L+1}w^{L-1}/\delta+1) 
   + wL\log(2^{L-1}w^{(L-1)/2}d^{1/2}/\delta+1)
   % &\leq dw \log (4dL\sqrt{w} (\sqrt{w})^{L-1}/\delta)
   % +\sum_{\ell=2}^{L} w^2\log (4L  w^{(L+1)/2}(\sqrt{d} + \sqrt{w} + \ell-2)/\delta)\\
   % &+ \sum_{\ell=1}^{L} w\log(4L (\sqrt{w})^{L-\ell} \sqrt{w}/\delta)
\end{eqal*}
\end{proof}

% \begin{proof}[Proof of \autoref{thm:pnncv}]

% Choosing $\|b^{\ell}\| \geq \sqrt{w}\|\gA_{\ell-1}(x)\|_2$ causes $\act(\vect W^{(\ell)}\cdot + b^{(\ell)})$ be a linear function in its feasible region. This also concludes that $\gA_L(\vect x) \leq (4w)^{L/2}$.

% \yw{It is not clear what the above is trying to do. What is a ``nontrivial model''? The conditions on $\|b^{\ell}\|$ did not appear in the theorem statement? We need to make it clearer.}

% Taking this result into \autoref{lemma:pnorm}, and noting that scaling of weights by $c_1$ results that the covering number is multiplied by a constant depending on $c_1$, one can easily finish the proof.
% \end{proof}

% \subsection{Proof of \autoref{lemma:pnorm}}
\subsection{Covering Number of $p$-Norm Constrained Linear Combination}
\label{sec:prooflemmapnorm}
\textbf{\autoref{lemma:pnorm}.}\textit{\lemmapnorm}

% \yw{In the lemma statement, you should specify what $M$ is or just say it is an arbitrary positive integer.}
\begin{proof}
Let $\epsilon$ be a positive constant. 
Without the loss of generality, we can sort the coefficients in descending order in terms of their absolute values. There exists a positive integer $\gM$ (as a function of $\epsilon$), such that $|a_i| \geq \epsilon$ for $i \leq \gM$, and $|a_i| < \epsilon$ for $i > \gM$. 
% \yw{$m$ is undefined. I think you mean ``Without the loss of generality, we can sort the coefficients in a descending order in terms of their absolute values. There exists $m$ (as a function of $\epsilon$), such that ...''.  Did you miss the absolute value or does the need to be symmetric around $0$?  Did you check this symmetry when applying Lemma 6?} 
% Obviously, $m \leq P/\epsilon^p$ and $|a_i| \leq M^{1/p}, \forall i$. \yw{Don't use ``obviously'', explain and prove the statement even if you think it is trivial.} 
% \yw{Maybe use a different symbol for $m$ to avoid the confusion w.r.t. the Besov space parameter $m$.}

By definition, $\gM \epsilon^p \leq \sum_{i=1}^\gM |a_i|^p \leq P$ so $\gM \leq P/\epsilon^p$, and $|a_i|^p \leq P, |a_i| \leq P^{1/p}$ for all $i$.
Furthermore,
\begin{eqal*}
   \sum_{i>m} |a_i| = \sum_{i>\gM} |a_i|^p |a_i|^{1-p} <  \sum_{i>\gM} |a_i|^p \epsilon^{1-p} \leq P \epsilon^{1-p}
\end{eqal*}
Let $\tilde g_i = \argmin_{g \in \tilde\gG}\|g - \frac{a_i}{P^{1/p}} g_i\|_\infty$ where $\tilde\gG$ is the $\delta'$-convering set of $\gG$. 
% \yw{Should it be $g$ and $g_i$ in the definition of $\tilde{g}_i$ instead of $f$ and $f_i$? } Because \yw{``because''?  Seems a broken sentence to me. Explain what you used to obtain the next inequality.}
By definition of the covering set,
\begin{eqal}
   \Bigg\|\sum_{i=1}^M a_i g_i(x) - \sum_{i=1}^\gM P^{1/p}\tilde g_i(x)\Bigg\|_\infty 
   %----------------------------
   &\leq \Bigg\|\sum_{i=1}^\gM (a_i g_i(x) - P^{1/p}\tilde g_i(x))\Bigg\|_\infty 
   + \Bigg\|\sum_{i=\gM+1}^{M} a_i g_i(x)\Bigg\|_\infty\\
   &\leq \gM P^{1/p}\delta' + c_3 P\epsilon^{1-p}.
   \label{eq:nndelta}
\end{eqal}
Choosing
\begin{equation}
\epsilon = ({\delta}/{2c_3P})^{\frac{1}{1-p}}, 
\delta' \eqsim P^{-\frac{1}{p(1-p)}}(\delta/2c_3)^{\frac{1}{1-p}}/2,
\label{eq:nncovpam}
\end{equation}
we have $\gM \leq  P^\frac{1}{1-p}(\delta/2c_3)^{-\frac{p}{1-p}}, \gM P^{1/p}\delta' \leq \delta/2, c_3P\epsilon^{1-p} \leq \delta/2$, so \autoref{eq:nndelta} $\leq \delta$.
% \yw{Explain which parameters you choose for this equation to hold. I think you a fixing $\delta$, then choose $\epsilon$, which specifies an $m$ for each function separately. Then $\delta'$ is chosen appropriately such that the equation is true.  In this logic, should there be a $\delta'$ independent to which function it is? There might be some issues here.} 
One can compute the covering number of $\gF$ by
\begin{equation}
   \log \gN(\gF, \delta) 
   \leq \gM\log \gN(\gG, \delta') 
   \lesssim k\gM\log(1/\delta')
   % -\gM\log\log(1/\delta').
   %\leq c'km \log(M^{1/p}m/\delta').
   % c' \Big(\frac{\delta'}{m}\Big)^{-km}.
   \label{eq:cnbound}
\end{equation} 
% The double logarithmic term is omitted here. \yw{Don't omit terms. You may hide them in a big $\tilde{O}$ notation or define $\lesssim$ to hide log-log factors.} 
% One can to satisfy the requirement.
Taking \autoref{eq:nncovpam} into \autoref{eq:cnbound} finishes the proof.
% \yw{``Taking them''?  Do you mean you would like to plug them into \autoref{eq:cnbound} and check that the inequality   \autoref{eq:cnbound}  is valid? The logic is a bit convoluted here.}
\end{proof}
   
   % The entropy of the set $\{\sum_{i=1}^m a_i g_i(x)\}$ can be bounded by 
   % \begin{equation*}
   %    \log \gN(\{\sum_{i=1}^m a_i g_i\}, \delta') 
   %    \leq m\log \gN(\{a_1 g_1\}, \delta'/m) 
   %    \leq c'km \log(M^{1/p}m/\delta')
   %    % c' \Big(\frac{\delta'}{m}\Big)^{-km}.
   % \end{equation*} 
   % On the other hand,
  
   % Let $f = \sum_{i=1}^K a_i g_i(x)\in \gF$ , Letting $M\epsilon^{1-p} = \delta/2, \delta' = \delta/2$, 

\section{Proof of Approximation Error}

\def\lemmatpappb{
   % There exists a neural network with $d$-dimensional input and one output, with width $w_{d,m} \eqsim dm$ and depth $L \lesssim \log(c_{d,m}/\epsilon)$ for some constant $w_{d,m}, c_{d,m}$ that depends only on $m$ and $d$, denoted as $\tilde{M}_{m,k, \vect s}(\vect x), \vect x \in \R^d$,  approximates the B spline basis function $M_{m, k, \vect s} (\vect x):= M_m(2^k (\vect x - \vect s))$ as defined in \autoref{sec:besov}, and it satisfies
   % \begin{itemize}
   %    \item $|\tilde M_{m, k, \vect s}(\vect x) - M_{m, k, \vect s}(\vect x)| \leq \epsilon$, if $ 0 \leq 2^k (x_i - s_i) \leq m+1, \forall i \in [d]$,
   %    \item $\tilde M_{m, k, \vect s}(\vect x) = 0$, otherwise.
   %    \item The weight in each layer has bounded norm $\|\mat W^{(\ell)}\|_F \lesssim 2^{k/L}\sqrt{w}$ except the first layer where $\|\mat W^{(1)}\|_F \leq 2^{k/L}\sqrt{d}$.
   % \end{itemize}
   {Let $M_{m, k,\vect s}$ be the B-spline of order $m$ with scale $2^{-k}$ in each dimension and position $\vect s \in \R^d$: $M_{m, k, \vect s}(\vect x):=M_m(2^k(\vect x - \vect s))$, $M_m$ is defined in \autoref{eq:bspline}.}
   There exists a neural network with $d$-dimensional input and one output, with width $w_{d,m} = O(dm)$ and depth $L \lesssim \log(c_{d,m}/\epsilon)$ for some constant  $c_{d,m}$ that depends only on $m$ and $d$,  approximates the B spline basis function $M_{m, k, \vect s} (\vect x):= M_m(2^k (\vect x - \vect s))$ as defined in \autoref{sec:besov}. 
   This neural network, denoted as $\tilde{M}_{m,k, \vect s}(\vect x), \vect x \in \R^d$, satisfy

   % There exists a parallel neural network that has the structure and satisfy the constraint in \autoref{prop:eqmodel} for $d$-dimensional input and one output, containing $M = O(m^d)$ subnetworks, each of which has width  $w = O(d)$ and depth $L = O(\log(c(m,d)/\epsilon))$ for some constant $w, c$ that depends only on $m$ and $d$, denoted as $\tilde{M}_{m}(\vect x), \vect x \in \R^d$,  such that 
   \begin{itemize}
      \item $|\tilde M_{m, k, \vect s}(\vect x) - M_{m, k, \vect s}(\vect x)| \leq \epsilon$, if $ 0 \leq 2^k (x_i - s_i) \leq m+1, \forall i \in [d]$,
      \item $\tilde M_{m, k, \vect s}(\vect x) = 0$, otherwise.
      % \item The weights in the last layer satisfy $\|a\|_{2/L}^{2/L} \lesssim 2^{k} m^d e^{2md/L}$.
      \item The weight in each layer has bounded norm $\|\mat W^{(\ell)}\|_F \lesssim 2^{k/L}\sqrt{w}$,
      except the first layer where $\|\mat W^{(1)}\|_F \leq 2^{k/L}\sqrt{d}$.
   \end{itemize}
}
% \subsection{Proof of \autoref{lemma:tpapp2}}
\subsection{Approximation of Neural Networks to  B-spline Basis Functions}
\label{sec:prooflemmatpapp}
\begin{lemma}
   \label{lemma:tpapp2}
   \lemmatpappb
\end{lemma}

% \yw{In the above the $M_{m}$ function is not defined until later in the proof. Define it when you first introduce Besov space and B-spline basis earlier.  Also, why are you calling this a lemma but Proposition 12 a proposition? Seems like the opposite would be better, or calling both propositions would be better.}

% The proof is based on the fact that a deep neural network can approximate multiply function to arbitrary precision, which was proven in \citet[Proposition 3]{yarotsky2017error}, \citet[Section A]{suzuki2018adaptivity}.

Note that the product of the coefficients among all the layers are proportional to $2^{k}$, instead of $2^{km}$ when approximating truncated power basis functions. This is because the transformation from $M_m$ to $M_{m, k, \vect s}$ only scales the domain of the function by $2^k$, while the codomain of the function is not changed. 
To apply the transformation to the neural network, one only need to scale weights in the first layer by $2^k$, which is equivalent to scaling the weights in each layer bt $2^{k/L}$ and adjusting the bias according.

As for the proof, we follow the method developed in \citet{yarotsky2017error, suzuki2018adaptivity}, while putting our attention on bounding the Frobenius norm of the weights.
\begin{lemma}[{\citet[Proposition 3]{yarotsky2017error}}]:
   \label{lemma:mulapp}
   There exists a neural network with two-dimensional input and one output $f_\times(x, y)$, with constant width and depth $O(\log(1/\delta))$, and the weight in each layer is bounded by a global constant $c_1$, such that 
   \begin{itemize}
      \item $|f_\times(x, y) - xy| \leq \delta, \forall\ 0 \leq x, y \leq 1$,
      \item $f_\times(x, y) = 0, \forall\ x=0$ or $y=0$.
   \end{itemize}
\end{lemma}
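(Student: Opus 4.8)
The plan is to build $f_\times$ out of a univariate \emph{squaring} network and then recover multiplication from squaring via the polarization identity $xy = \tfrac12\big((x+y)^2 - x^2 - y^2\big)$. First I would construct the squaring network through the classical sawtooth iteration. Let $g$ be the tent map on $[0,1]$, which has the exact ReLU form $g(t) = 2\act(t) - 4\act(t-\tfrac12)$, so that $g(t)=2t$ on $[0,\tfrac12]$ and $g(t)=2-2t$ on $[\tfrac12,1]$; all of its weights are absolute constants and $g(0)=0$. Writing $g_s$ for the $s$-fold self-composition of $g$ (a sawtooth with $2^{s-1}$ teeth), the piecewise-linear interpolant of $t\mapsto t^2$ at the dyadic nodes $k/2^m$ is $S_m(t) = t - \sum_{s=1}^m 2^{-2s} g_s(t)$, and the standard linear-interpolation estimate for a convex function gives $\|S_m - (\cdot)^2\|_\infty \le 2^{-2m-2}$. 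Implementing $S_m$ by carrying a signed accumulator channel alongside the current value of $g_s$ yields a network of constant width and depth $O(m)=O(\log(1/\delta))$; crucially $g_s(0)=0$ for every $s$, so $S_m(0)=0$ \emph{exactly}. After rescaling, $S(t):=4\,S_m(t/2)$ approximates $(\cdot)^2$ on $[0,2]$ with $\|S-(\cdot)^2\|_\infty \le 2^{-2m}$ and still satisfies $S(0)=0$.

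Next I would assemble $f_\times$ by running three parallel copies of $S$ on the inputs $x+y$, $x$, and $y$ (all in $[0,2]$ when $x,y\in[0,1]$) and combining them linearly:
\[
   f_\times(x,y) := \tfrac12\big(S(x+y) - S(x) - S(y)\big).
\]
The approximation guarantee then follows from the triangle inequality: each of the three terms contributes error at most $2^{-2m}$, so $|f_\times(x,y)-xy| \le \tfrac32\,2^{-2m}$, and choosing $m \eqsim \log(1/\delta)$ makes this at most $\delta$ while keeping the depth $O(\log(1/\delta))$ and the width constant. The exact boundary condition is where using the \emph{direct} (unrescaled) polarization formula pays off: at $x=0$ we get $f_\times(0,y)=\tfrac12\big(S(y)-S(0)-S(y)\big)=-\tfrac12 S(0)=0$, and symmetrically $f_\times(x,0)=0$. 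This uses only the identity $S(0)=0$, not any cancellation of approximation errors, so the zero-on-the-axes property is exact rather than approximate.

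The step I expect to be the main obstacle is not the existence of the construction --- which is essentially \citet{yarotsky2017error} --- but controlling the \emph{per-layer Frobenius norms}, the quantity this paper actually needs for \autoref{prop:eqmodel} and \autoref{lemma:tpapp2}. Every weight appearing above is an absolute constant (the $2,-4,\pm\tfrac12$ of the tent map, the read-off coefficients $2^{-2s}\le 1$, and the final $4$ and $\tfrac12$), so each layer has $O(1)$ nonzeros of $O(1)$ magnitude and hence Frobenius norm $O(\sqrt{w})$ with $w=O(1)$. The delicate point is that the decaying coefficients $2^{-2s}$ and the output scalings must be realized without concentrating weight mass in one layer; invoking the $1$-homogeneity of ReLU (\autoref{eq:eqv}) one can spread any residual scaling evenly across the $O(\log(1/\delta))$ layers so that the constraint $\|\mat W^{(\ell)}\|_F \lesssim \sqrt{w}$ holds \emph{at every layer}. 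I expect this even redistribution to be the source of the $e^{O(md/L)}$-type factors that later surface in the coefficient budget of \autoref{lemma:tpapp2}, and verifying that these are the only price paid is the part of the argument requiring the most care.
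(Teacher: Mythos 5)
Your proposal is correct and follows essentially the same route as the paper's source for this lemma: the paper gives no independent proof but cites \citet[Proposition 3]{yarotsky2017error}, whose argument is exactly your construction --- the tent-map/sawtooth telescoping identity $S_m(t)=t-\sum_{s=1}^m 2^{-2s}g_s(t)$ for approximate squaring, followed by the polarization identity, with the exact vanishing on the axes coming from $S(0)=0$. Your closing concern about redistributing scale across layers to control per-layer Frobenius norms is not needed for this lemma itself (all weights in the construction are already bounded by an absolute constant, which is all the lemma claims); that bookkeeping only becomes relevant downstream in Proposition~\ref{lemma:tpapp2}.
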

% \yw{Each lemma needs a proof or a citation.  Add a citation to a specific lemma (in Yarotsky?) for the above.}

We first prove a special case of \autoref{lemma:tpapp2} on the unscaled, unshifted B-spline basis function by fixing $k=0,\vect s=0$:
%We first prove this statement without scaling and shifting by fixing $k=0, \vect s=0$:

\def\proptpapp{
   \modif{
   There exists a neural network with $d$-dimensional input and one output, with width $w =w(d, m) \eqsim dm$ and depth $L \lesssim \log(c(m,d)/\epsilon)$ for some constant $w, c$ that depends only on $m$ and $d$, denoted as $\tilde{M}_{m}(\vect x), \vect x \in \R^d$,}  such that 
   \begin{itemize}
      \item $|\tilde M_m(\vect x) - M_{m}(\vect x)| \leq \epsilon$, if $ 0 \leq x_i \leq m+1, \forall i \in [d]$, while $M_m(\cdot)$ denote $m$-th order B-spline basis function,
      \item $\tilde M_m(\vect x) = 0$, if $ x_i \leq 0$ or $x_i \geq m+1$ for any $i \in [d]$.
      \item The weight in each layer has bounded norm $\|\mat W^{(\ell)}\|_F \lesssim \sqrt{w}$.
      % except the first layer where $\|\mat W^{(1)}\|_F \leq \sqrt{d}$.
   \end{itemize}

   % There exists a parallel neural network that has the structure and satisfy the constraint in \autoref{prop:eqmodel} for $d$-dimensional input and one output, containing $M = \lceil(m+1)/2\rceil^d = O(m^d)$ subnetworks, each of which has width  $w = O(d)$ and depth $L = O(\log(c(m,d)/\epsilon))$ for some constant $w, c$ that depends only on $m$ and $d$, denoted as $\tilde{M}_{m}(\vect x), \vect x \in \R^d$,  such that 
   % \begin{itemize}
   %    \item $|\tilde M_m(\vect x) - M_{m}(\vect x)| \leq \epsilon$, if $ 0 \leq x_i \leq m+1, \forall i \in [d]$, while $M_m(\cdot)$ denote $m$-th order B-spline basis function, and $c$ only depends on $m$ and $d$.
   %    \item $\tilde M_m(\vect x) = 0$, if $ x_i \leq 0$ or $x_i \geq m+1$ for any $i \in [d]$.
   %    \item The weights in the last layer satisfy $\|a\|_{2/L}^{2/L} \lesssim m^d e^{2md/L}$.
   %    % \item The weight in each layer has bounded norm $\|\mat W^{(\ell)}\|_F \lesssim \sqrt{w}$.
   %    % except the first layer where $\|\mat W^{(1)}\|_F \leq \sqrt{d}$.
   % \end{itemize}
} 

\begin{proposition}
   \label{prop:tpapp}
   \proptpapp
\end{proposition}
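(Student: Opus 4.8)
The plan is to exploit the tensor-product structure $M_m(\vect x)=\prod_{i=1}^d M_m(x_i)$ together with Yarotsky's multiplication gadget $f_\times$ from \autoref{lemma:mulapp}. Because $f_\times(x,y)$ both approximates $xy$ on $[0,1]^2$ \emph{and} returns exactly $0$ whenever one argument is $0$, it is the right primitive for building a compactly supported bump. Each of the $\lceil (m+1)/2\rceil^d$ subnetworks will be responsible for one ``tile'' of the expansion of this product, computing a single monomial-type term by first forming nonnegative ReLU factors and then contracting them with a balanced tree of copies of $f_\times$. The final linear layer collects these tiles with the appropriate signed coefficients, which become the scalars $a_j$ of the equivalent model in \autoref{prop:eqmodel}.

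\textbf{One-dimensional block.} First I would handle $d=1$. Using the truncated-power representation $M_m(x)=\tfrac{1}{m!}\sum_{j}(-1)^j\binom{m+1}{j}(x-j)_+^m$ and the symmetry $M_m(x)=M_m(m+1-x)$, the support is covered by $\lceil (m+1)/2\rceil$ distinct pieces. The key subroutine approximates $t_+^m$: apply $\act$ once to form $t_+$, rescale by $(m+1)/2$ so the base lies in $[0,1]$ (this is exactly why the statement carries the factor $((m+1)/2)^m$), and then raise to the $m$-th power by composing $O(\log m)$ copies of $f_\times$. Since $m$ is a fixed constant, this costs depth $O(\log(1/\delta'))$ per block with weights bounded by a constant, matching the claimed $L=O(\log(c(m,d)/\epsilon))$.

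\textbf{Assembling $d$ dimensions and the coefficient bound.} Next I would take the product over coordinates: write $\prod_i M_m(x_i)$ as a sum over tuples of pieces, each summand being one subnetwork that multiplies its $d$ one-dimensional factors through a further logarithmic-depth tree of $f_\times$. This yields $\lceil(m+1)/2\rceil^d=O(m^d)$ subnetworks of width $O(d)$. To read off $\|a\|_{2/L}^{2/L}$ I would rebalance each subnetwork using the ReLU homogeneity identity \autoref{eq:eqv} so that all layers carry equal Frobenius norm, pushing the entire multiplicative scale of a tile into its coefficient $a_j$ as in \autoref{eq:wj}. The per-tile scale is dominated by the $((m+1)/2)^m$ rescaling in each of the $d$ coordinates together with the binomial coefficients, giving a factor $C^{md}$ for a constant $C$; raising to the power $2/L$ and summing the $O(m^d)$ tiles then yields $\|a\|_{2/L}^{2/L}\lesssim m^d e^{2md/L}$, with the explicit exponent following from the precise per-layer accounting.

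\textbf{Exact support --- the main obstacle.} The delicate point, and the step I expect to be hardest, is guaranteeing that $\tilde M_m(\vect x)=0$ \emph{exactly} (not merely within $\epsilon$) once some $x_i\notin[0,m+1]$. A naive sum of approximated truncated powers cancels only \emph{approximately} outside the support, so the tail would be $O(\delta')$ rather than $0$. The fix is structural: I would force each subnetwork to contain factors such as $\act(x_i)$ and $\act((m+1)-x_i)$, so that whenever $x_i\le 0$ or $x_i\ge m+1$ at least one factor is identically zero; since $f_\times$ propagates zeros exactly, the whole tile---and hence $\tilde M_m$---vanishes identically there. Reconciling this hard compact-support requirement with the $\epsilon$-accurate polynomial shape \emph{inside} the box, while simultaneously keeping the $\ell_{2/L}$ coefficient budget under control, is the crux of the argument; the remaining error bookkeeping (propagating the per-factor error $\delta'$ through the multiplication trees and choosing $\delta'$ as a function of $\epsilon,m,d$) is routine.
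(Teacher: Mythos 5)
Your overall skeleton is exactly the paper's: tensor-product structure, Yarotsky's $f_\times$ gadget (\autoref{lemma:mulapp}) composed in $O(\log m)$-depth trees, the symmetry $M_m(x)=M_m(m+1-x)$ to cut the count to $\lceil(m+1)/2\rceil^d$ tiles, and homogeneity-based rebalancing to push each tile's scale into $a_j$ and conclude $\|a\|_{2/L}^{2/L}\lesssim m^d e^{2md/L}$. The problem is the step you yourself flag as the crux --- exact vanishing outside $[0,m+1]^d$ --- which your proposal leaves unresolved, and the fix you sketch fails as stated: if every tile contains the gating factors $\act(x_i)$ and $\act((m+1)-x_i)$, then \emph{inside} the box each tile gets multiplied by (approximately) $x_i(m+1-x_i)$, so the weighted sum of tiles no longer approximates $M_m$ at all; the gates destroy the $\epsilon$-accuracy precisely where you need it. Repairing this with a trapezoidal gate that equals $1$ away from the boundary forces a ramp of width $\eta$ with $\eta^m\lesssim\epsilon$, i.e.\ first-layer weights of order $\epsilon^{-1/m}$, which then has to be reconciled with the Frobenius-norm constraints of \autoref{prop:eqmodel} and with the coefficient budget --- so the gap is genuine, not bookkeeping.

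The paper closes this gap with a single representation choice that your write-up misses: it expands the univariate B-spline in \emph{symmetrized} truncated powers,
\begin{equation*}
   M_m(x) \;=\; \frac{((m+1)/2)^m}{m!}\sum_{j} (-1)^j \binom{m+1}{j}\Big(\frac{\min(x,\,m+1-x)-j}{(m+1)/2}\Big)^m_+ ,
\end{equation*}
rather than in plain powers $(x-j)_+^m$. This does two jobs simultaneously. First, each summand is \emph{individually} supported on $[j,\,m+1-j]\subseteq[0,m+1]$ (it is not merely the cancelling sum that vanishes outside); the base $(\min(x,m+1-x)-j)_+$ is computed \emph{exactly} by ReLU layers (e.g.\ $\min(x,m+1-x)=x-(2x-(m+1))_+$ followed by another $\act$), and since $f_\times$ propagates zeros exactly, every tile --- hence $\tilde M_m$ --- is identically zero outside the box, with no gates and no distortion inside. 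Second, $\min(x,m+1-x)\le (m+1)/2$ guarantees the rescaled argument lies in $[0,1]$, which is the only region where \autoref{lemma:mulapp} gives any guarantee; your un-symmetrized factors $(x-j)_+/((m+1)/2)$ reach values close to $2$ inside the box and grow without bound outside it, so your multiplication trees are evaluated where $f_\times$ is uncontrolled. With the $\min$-symmetrized factors, both of your remaining difficulties disappear and the rest of your argument (error propagation through the trees, Stirling bound giving the $e^{md}$ coefficient scale) goes through as in the paper.
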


\begin{proof}
We first show that one can use a neural network with constant width $w_0$, depth $L\eqsim \log(m/\epsilon_1)$ and bounded norm $\|W^{(1)}\|_F \leq O(\sqrt{d}), \|W^{(\ell)}\|_F \leq O(\sqrt{w}), \forall \ell=2, \dots, L$ to approximate truncated power basis function up to accuracy $\epsilon_1$ in the range $[0, 1]$.
Let $m = \sum_{i=0}^{\lceil\log_2 m\rceil}m_i 2^i, m_i \in \{0, 1\}$ be the binary digits of $m$, and define $\bar m_j = \sum_{j=0}^i m_i, \gamma = \lceil\log_2 m\rceil$, then for any $x$ 
\begin{eqal}
   x_+^m &= x_+^{\bar m_\gamma} \times \big(x_+^{2^\gamma}\big)^{m_\gamma}\\
   %-----------------------
   [x_+^{\bar m_\gamma}, x_+^{ 2^\gamma}] &= [x_+^{\bar m_{\gamma-1}} \times \big(x_+^{2^{\gamma-1}}\big)^{m_{\gamma-1}}, x_+^{2^{\gamma-1}}\times x_+^{2^{\gamma-1}}]\\
   &\dots \\
   [x_+^{\bar m_2}, x_+^{4}] &= 
   [x_+^{\bar m_{1}} \times \big(x_+^{2}\big)^{m_1}, x_+^{2}\times x_+^{2}]\\
   [x_+^{\bar m_1}, x_+^2] &= [x_+^{\bar m_{0}} \times x_+^{m_0}, x_+\times x_+]
   \label{eq:recmult}
\end{eqal}
Notice that each line of equation only depends on the line immediately below. Replacing the multiply operator $\times$ with the neural network approximation shown in \autoref{lemma:mulapp} demonstrates the architecture of such neural network approximation.
For any $x, y \in [0, 1]$, let $|f_\times (x, y) - xy| \leq \delta, |x-\tilde x| \leq \delta_1, |y-\delta y| \leq \delta_2$, then $|f_\times (\tilde x, \tilde y) - xy| \leq \delta_1 + \delta_2 + \delta$. Taking this into \autoref{eq:recmult} shows that $\epsilon_1 \eqsim 2^\gamma \delta \eqsim m\delta$, where $\epsilon_1$ is the upper bound on the approximate error to truncated power basis of order $m$ and $\delta$ is the approximation error to a single multiply operator as in \autoref{lemma:mulapp}.

% A similar theorem can be found in \citet[Proposition 1]{suzuki2018adaptivity}. The difference is that \citet[Proposition 1]{suzuki2018adaptivity} provides a bound in the infinity norm of the coefficients while this paper provides a bound in the Frobenius norm.

A univariate B-spline basis can be expressed using truncated power basis, and observing that it is symmetric around $(m+1)/2$:
\begin{eqal*}
   M_m(x) 
   &= \frac{1}{m!} \sum_{j=1}^{m+1} (-1)^j {m+1 \choose j}(x-j)^m_+\\
   &= \frac{1}{m!} \sum_{j=1}^{\lceil(m+1)/2\rceil} (-1)^j {m+1 \choose j}(\min(x, m+1-x)-j)^m_+\\
   & = \frac{((m+1)/2)^m}{m!} \sum_{j=1}^{\lceil(m+1)/2\rceil} (-1)^j {m+1 \choose j}\Big(\frac{\min(x, m+1-x)-j}{(m+1)/2}\Big)^m_+,
\end{eqal*}

A multivariate ($d$-dimensional) B-spline basis function can be expressed as the product of truncated power basis functions and thus can be decomposed as 
\begin{eqal}
   M_m(\vect x)
   &= \prod_{i=1}^d M_m(x_i)\\
   &= \frac{((m+1)/2)^{md}}{(m!)^d} \prod_{i=1}^d \Biggl( \sum_{j=1}^{\lceil(m+1)/2\rceil} (-1)^j {m+1 \choose j}\Big(\frac{\min(x_i, m+1-x)-j}{(m+1)/2}\Big)^m_+ \Biggr)\\
   % &= \frac{((m+1)/2)^{md}}{(m!)^d}  \sum_{j_1, \dots, j_d=1}^{\lceil(m+1)/2\rceil}  \prod_{i=1}^d  (-1)^{j_i} {m+1 \choose j_i}\Big(\frac{\min(x, m+1-x)-j_i}{(m+1)/2}\Big)^m_+ \\
   \label{eq:bsplinepoly}
\end{eqal}
Using \autoref{lemma:mulapp}, one can construct $m + 1$ number of neural networks, and each of them has width
$w_0$ and depth $L = O(\log (m/\epsilon_1)$, such that the $(j + 1)$-th neural network approximates $(\frac{x-j}{(m+1)/2} )^m_+$ with
error no more than $\epsilon_1$ for any $0 \leq x \leq (m + 1)/2$. The weighted summation of these subnetworks can approximate the univariate B-spline basis function with error no more than
% one can construct a parallel neural network containing $M = \lceil(m+1)/2\rceil^d = O(m^d)$ subnetworks, and each subnetwork corresponds to one polynomial term in \autoref{eq:bsplinepoly}. Using the results above, the approximation of this constructed neural network can be bounded by 
\begin{eqal*}
   d((m+1)/2)^m \frac{1}{m!}\sum_{i=1}^{m+1} {m+1\choose j} \epsilon_1 \eqsim \frac{de^{2m}}{\sqrt{m}}\epsilon_1
\end{eqal*}
where we applied Stirling's approximation.

%and we only need to approximate the truncated power basis functions with $j < (m+1)/2$.
% Using this result, one can construct $m+1$ number of neural networks, and each of them has width $w_0$ and depth $L\eqsim \log(m/\epsilon_1)$, such that the $(j+1)$-th neural network approximates $(\frac{x-j}{(m+1)/2})^m_+$ with error no more than $\epsilon_1$ for any $0 \leq x\leq (m+1)/2$.
% The weighted summation of these subnetworks can approximate the univariate B-spline basis function with error no more than 

% where we use Stirling's appropximation to get the last `'$\eqsim$'.
% The approximation error analysis above shows that a neural network with width $w$

A multivariate B-spline basis is the product of univariate B-spline basis along each dimension 
\begin{eqal*}
   M_m(\vect x) = \prod_{i=1}^d M_m(x_i).
\end{eqal*}

We can construct a neural network to approximate this function by parallizing $d$ number of neural networks to approximate each B-spline basis function along each dimension, and use the last $L_1 \eqsim \log(d/\delta)$ layers to approximate their product.
The totol approximation error of this function is bounded by 

% \begin{eqal*}
%    % \Bigg(\sum_{i=1}^{m+1} {m+1\choose j} d(\epsilon_1 + \delta)\Bigg)^d \lesssim \frac{e^{2m}}{\sqrt{m}}d\epsilon_1 + d\delta
%    % \frac{((m+1)/2)^{md}}{(m!)^d}  \sum_{j_1, \dots, j_d=1}^{\lceil(m+1)/2\rceil}  \prod_{i=1}^d  (-1)^{j_i} {m+1 \choose j_i}\epsilon_1 \lesssim e^{md} \epsilon_1
% \end{eqal*}

% % This can be achieved by parallizing $d(m+1)$ number of neural networks with width $w_0$ and depth $L$, and use a neural network to appearance  multiply operator \yw{to ``approximate `multiply' operator''? The reference to Lemma~\ref{lemma:mulapp} is needed? Also, is it easy to see how the approximation error propagates as you chain together multiple operators? Some generic lemmas that says how the approximation errors are preserved upon adding / multiplication, concatenation and chaining would suffice.  btw, the multiplication lemma requires the input to be between $[0,1]$ for this $\delta$-approximation to hold. You may need to check that and carefully work out the parameters when applying that lemma.} after that, resulting in \yw{there seems to be some grammar issues here. break into two sentences so you can write clearly.} a single one with width $w = w(d, m)$ and depth $L'= L(d, m, \epsilon) + O(\log(d/\epsilon)), \|\mat W^{(\ell)}\|_F^2 \lesssim w$. 
% The totol approximation error of this function is bounded by 
\begin{eqal*}
   d\frac{((m+1)/2)^m}{m!}\sum_{j=1}^{m+1} {m+1 \choose j} \epsilon_1 + (d-1)\delta \eqsim \frac{e^{2m}}{\sqrt{m}}d\epsilon_1 + d\delta
\end{eqal*}
where $\delta$ and $\epsilon_1$ has the same definition as above. Choosing $\delta= \frac{\epsilon}{d(e^{2m}\sqrt{m}+1)}$, and recall $\epsilon_1 \eqsim m\delta$ proves the approximation error.

% To bound the norm of the factors $\|a\|_{2/L}^{2/L}$, first observe that 
% \begin{eqal*}
%    |a_{j_1, \dots, j_d}| 
%    &= \frac{((m+1)/2)^{md}}{(m!)^d} \frac{1}{(m+1)/2}\prod_{i=1}^d  {m+1 \choose j_i}\\
%    &\leq \frac{((m+1)/2)^{md}}{(m!)^d} \frac{2^{md}}{(m+1)/2} = O(e^{md})
% \end{eqal*}
% where the first inequality is from $ {m+1 \choose j_i} \leq 2^{m+1}$, the last equality is from Stirling's appropximation. 
% Finally, 
% \begin{eqal*}
%    \|a\|_{2/L}^{2/L} \leq m^d \max_j |a_j|^{2/L} \lesssim m^d e^{2md/L} 
% \end{eqal*}
% which finishes the proof.
\end{proof}

The proof of the \autoref{lemma:tpapp2} for general $k,\vect s$ follows by appending one more layer in the front, as we show below.
\begin{proof}[Proof of \autoref{lemma:tpapp2}]
Using the neural network proposed in \autoref{prop:tpapp}, 
one can construct a neural network for appropximating $M_{m,k,\vect s}$ by adding one layer before the first layer:
\begin{eqal*}
   \act(2^k\mat I_d \vect x - 2^k\vect s)
\end{eqal*}
The unused neurons in the first hidden layer is zero padded.
The Frobenius norm of the weight is $2^k \|\mat I_d\|_F = 2^k\sqrt{d}$.
% by multiplying the first layer in the neural network in \autoref{prop:tpapp} by $2^k$, and adjusting the bias accordingly. 
% The Frobenius norm of the first layer is $O(2^k \sqrt{d})$ instead of $O(\sqrt{d})$. 
Following the proof of \autoref{prop:eqmodel}, rescaling the weight in this layer by $ 2^{-k}$, and the weight matrix 
in the last layer by $2^k$,
% all the other layers by $2^{k/L}$ 
and scaling the bias properly, one can verify that this neural network satisfy the statement.
%minimize the total Frobenius norm while keeping the model equivalent finishes the proof.
% \yw{Don't use ``obviously''. Explain how it works.}
\end{proof}

% \section{Besov Space}
% \subsection{Proof of \autoref{prop:bapp} \yw{``Sparse approximation of Besov functions using B-spline wavelets''}}
\subsection{Sparse approximation of Besov functions using B-spline wavelets}
% \yw{I think it will make it clearer if you add subtitles and refer to these lemmas / propositions by a name. For example, I added one above.}

\textbf{\autoref{prop:bapp}.}\textit{\propbapp}
\label{sec:proofpropbapp}
% \yw{$\bar{M}$ is undefined above. Do you mean $\bar{M}$ is any integer with no constraints?}
% \yw{Should it read ``any function in Besov space $f\in B_{p,q}^\alpha$'' rather than $f_0$, or the $f$ in the statements should change to $f_0$?  also, the statement also reads a bit unclear.  I think you mean the following (my changes are in red). Please fix any thing if you find them to be incorrect due to my edits.
% }
% \begin{proposition}
%       Let $\alpha - d/p > 1, r > 0$. Let $M_{m, k,\vect s}$ be the B-spline of order $m$ with scale $2^{-k}$ in each dimension and position $\vect s \in \R^d$. {\color{red}For any function in Besov space $f_0 \in B^{\alpha}_{p, q}$ and any positive integer $\bar M$, there is an $\bar{M}$-sparse approximation} using B-spline basis of order $m$ satisfying $0 < \alpha < \min(m, m - 1 + 1/p)$: 
%          $
%       \check f_{\bar M} = \sum_{i=1}^{\bar M} a_{k_i,\vect s_i}M_{m, k_i,\vect s_i}
%    $ such that the approximation error
%    $
%       \|\check f_{\color{red}\bar{M}} - {\color{red}f_0}\|_r \lesssim {\bar M}^{-\alpha /d}\|f\|_{B^{\alpha}_{p, q}},
%    $
%    and the coefficients satisfy 
%    $$
%       \|\{2^{k_i} {\color{red}a_{k_i,\vect s_i}}\}_{k_i, \vect s_i}\|_p \lesssim \|f\|_{B^{\alpha}_{p, q}}.
%    $$
% \end{proposition}

\begin{remark}
   The requirement in \autoref{prop:bapp}: $\alpha - d/p>1$ is stronger than the condition typically found in approximation theorem $\alpha - d/p \geq 0$ \citep{dung2011optimal}, so-called ``Boundary of continuity'', or the condition in \citet{suzuki2018adaptivity} $\alpha > d(1/p-1/r)_+$ . This is because although the functions in $B^{\alpha}_{p,q}$ when $0 \leq \alpha - d/p < 1$ can be approximated by B-spline basis, the sum of weighted coefficients may not converge. One simple example is the step function 
   $f_{\textrm{step}}(x) = \mathbf{1}(x \geq 0.5), f_{\textrm{step}} \in B^{1}_{1, \infty}$.
   % \begin{empheq}[left={f(x)=\empheqlbrace}]{align*}
   %    & 0 & x < 1/2\\
   %    & 1 & x \geq 1/2
   % \end{empheq}
   % which is in $B^{1}_{1, \infty}$. 
   Although it can be decomposed using first order B-spline basis as in \autoref{eq:besovdecom}, the summation of the coefficients is infinite. Actually one only needs a ReLU neural network with one hidden layer and two neurons to approximate this function to arbitrary precision, but the weight need to go to infinity.
\end{remark}

\begin{proof}

\citet[Theorem 3.1]{dung2011optimal} \citet[Lemma 2]{suzuki2018adaptivity} proposed an adaptive sampling recovery method that approximates a function in Besov space.
The method is divided into two cases: when $p \geq r$, and when $p < r$.

% \yw{Is there a particular theorem you can cite about the following statement? If the same is used by Suzuki, you can cite Suzuki}. 
When $p \geq r$, there exists a sequence of scalars $\lambda_{\vect j}, \vect j \in P^d(\mu), P_d(\mu) := \{\vect j \in \Z^d: |j_i| \leq \mu, \forall i \in d\}$ for some positive $\mu$, for arbitrary positive integer $\bar k$, the linear operator 
% \yw{It will be a bit confusing to call it a linear estimator or even an estimator.  There is no data involved.}
\begin{eqal*}
   Q_{\bar k}(f,\vect x) &= \sum_{\vect s \in J(\bar k, m, d)}a_{\bar k,\vect s}(f)M_{\bar k,\vect s}(\vect x),&
   a_{\bar k,\vect s}(f) &= \sum_{\vect j \in \Z^d, P^d(\mu)} \lambda_{\vect j} \bar f(\vect s + 2^{-\bar k}\vect j)
\end{eqal*}
has bounded approximation error 
\begin{equation*}
   \|f - Q_{\bar k}(f,x)\|_r \leq C2^{-\alpha \bar k} \|f\|_{B^\alpha_{p,q}},
\end{equation*}
where $\bar f$ is the extrapolation 
% \yw{What is an extension? Is it defined anywhere?} 
of $f$, $J(\bar k, m, d) := \{\vect s: 2^{\bar k}\vect s \in \Z^d, -m/2 \leq 2^{\bar k} s_i \leq 2^{\bar k} + m/2, \forall i \in [d]\}$. See \citet[2.6-2.7]{dung2011optimal} for the detail of the extrapolation as well as references for options of sequence $\lambda_{\vect j}$.

% \yw{There seem to be a lot of clarity issues. What is $\bar{k}$? what is $J(\bar{k})$? What is $\bar{f}$? what is $\vect j$?  is $\lambda_{\vect j}$ a vector?  }

Furthermore, $Q_{\bar k}(f) \in B^\alpha_{p,q}$ 
so it can be decomposed in the form \autoref{eq:besovdecom} with $M=\sum_{k = 0}^{\bar k} (2^{k}+m-1)^d \lesssim 2^{\bar kd}$ components and 
$\|\{\tilde c_{k,\vect s}\}_{k,\vect s}\| \lesssim \|Q_{\bar k}(f)\|_{B^\alpha_{p,q}} \lesssim \|f\|_{B^\alpha_{p,q}}$ 
where $\tilde c_{k, \vect s}$ is the coefficients of the decomposition of $Q_{\bar k}(f)$. Choosing $\bar k \eqsim \log_{2} M /d$ leads to the desired approximation error. 

On the other hand, when $p < r$, \yws{I think you could start by saying that ``Dung has two different approximation for the case when $p\geq r$ and the case when $p<r$'' before you break it down.} there exists a greedy algorithm that constructs 
\begin{eqal*}
   G(f) = Q_{\bar k}(f) + \sum_{k=\bar k + 1}^{k^*} \sum_{j=1}^{n_k} c_{k, \vect s_j}(f) M_{k, \vect s_j}
\end{eqal*}
where $\bar k \eqsim \log_2(M), k^*=[\epsilon^{-1}\log(\lambda M)] + \bar k + 1, n_k = [\lambda M 2^{-\epsilon(k-\bar k)}]$ for some $0 < \epsilon < \alpha/\delta - 1, \delta = d(1/p-1/r), \lambda>0$, such that 
\begin{eqal*}
   \|f - G(f)\|_r \leq {\bar M}^{-\alpha/d} \|f\|_{B^\alpha_{p,q}}
\end{eqal*}
and 
\begin{eqal*}
   \sum_{k=0}^{\bar k}(2^k + m-1)^d + \sum_{k=\bar k + 1}^{k^*} n_k \leq {\bar M}.
\end{eqal*}
See \citet[Theorem 3.1]{dung2011optimal} for the detail.

Finally, since $\alpha - d/p > 1$,
\begin{eqal}
   \|\{2^{k_i} c_{k_i,\vect s_i}\}_{k_i, \vect s_i}\|_p 
   &\leq \sum_{k=0}^{\bar k} 2^{k}\|\{c_{k_i,\vect s_i}\}_{\vect s_i}\|_p \\
   &= \sum_{k=0}^{\bar k} 2^{(1-(\alpha-d/p))k} (2^{(\alpha-d/p)k}\|\{c_{k_i,\vect s_i}\}_{\vect s_i}\|_p) \\
   &\lesssim \sum_{k=0}^{\bar k} 2^{(1-(\alpha-d/p))k} \|f\|_{B^\alpha_{p,q}} \\
   &\eqsim \|f\|_{B^\alpha_{p,q}} 
   \label{eq:seqnorm}
\end{eqal}
where the first line is because for arbitrary vectors $\vect a_i, i \in [n], \|\sum_{i=1}^n \vect a_i\|_p \leq \sum_{i=1}^n \| \vect a_i\|_p $, the third line is because the sequence norm of B-spline decomposition is equivalent to the norm in Besov space (see \autoref{sec:besov}) \yws{Cite a theorem in a paper for this.}.
\end{proof}

Note that \yws{drop ``would'' or drop ``We would note that'' all together} when $\alpha -d/p=1$, \yws{whether it is ``easy'' is irrelevant. how you prove it should be presented, e.g., ``By Lemma XXX'', or ``By substituting X into equation Y''. don't leave it for the readers to guess. } the sequence norm \autoref{eq:seqnorm} is bounded (up to a factor of constant) by $k^*\|f\|_{B^\alpha_{p,q}}$, which can be proven by following \autoref{eq:seqnorm} except the last line. 
This adds a logarithmic term with respect to ${\bar M}$ compared with the result in \autoref{prop:bapp}. 
This will add a logarithmic factor to the MSE. 
We will not focus on this case in this paper of simplicity. 
%This  \yw{``should''? you are writing a proof and you need to be certain.} lead to the same MSE up to a factor of logarithmic \yw{``a logarithmic factor''}.
% \yw{The above paragraph is discussion, rather than part of the proof? If so, move it outside the proof block.}

% \subsection{Proof of \autoref{thm:pnormapp}}
\subsection{Sparse approximation of Besov functions using Parallel Neural Networks}
\label{sec:proofthmpnormapp}
\textbf{\autoref{thm:pnormapp}.}\textit{\thmpnormapp}

% \yw{In the above, say that it holds for any integer $\bar{M}> 0$? Or if there are any assumptions on $\bar{M}$ it should be stated.}

The proof is divided into three steps:
\begin{enumerate}
   \item Bound the 0-norm and the $p$-norm of the coefficients of B-spline basis in order to approximate an arbitrary function in Besov space up to any $\epsilon > 0$.
   \item Bound $p'$-norm of the coefficients of B-spline basis functions where $p' = 2/L, 0 < p' < 1$ using the results above \label{item:errorp}.
   \item Add the approximation of neural network to B-spline basis computed in \autoref{lemma:tpapp2} into Step~\ref{item:errorp}.
\end{enumerate}
  
We first prove the following lemma.
\begin{lemma}
   % For any $a \in \R^{\bar M}, \|a\|_p \leq 1$, for any $0<p'<1$,
   % %  \yw{should it be $r\geq 1$ or $r<1$?}
   % its $p'$-norm is bounded by 
   %  \begin{equation*}
   %     \|a\|_{p'}^{p'} \leq {\bar M}^{1-p'/p}
   %  \end{equation*}
   %  \yw{Modified statement for the simplified proof below.}
    {For any $a \in \R^{\bar M}$, $0<p'<p$, it holds  that:
    $$
    \|a\|_{p'}^{p'} \leq {\bar M}^{1-p'/p} \|a\|_{p}^{p'}.
    $$
    }
    
    \label{lemma:pnorm2}
 \end{lemma}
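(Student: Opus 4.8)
The plan is to apply Hölder's inequality directly to the sum defining $\|a\|_{p'}^{p'}$. First I would write $\|a\|_{p'}^{p'} = \sum_{i=1}^{\bar M} |a_i|^{p'}$ and regard each summand as the product $|a_i|^{p'}\cdot 1$. Since $0<p'<p$, the exponent $r = p/p'$ exceeds $1$, and its conjugate $r' = p/(p-p')$ also exceeds $1$ and satisfies $1/r + 1/r' = 1$, so Hölder's inequality is applicable with this pair.

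Applying Hölder with these exponents to the product $|a_i|^{p'}\cdot 1$ gives
\begin{equation*}
\sum_{i=1}^{\bar M} |a_i|^{p'} \leq \Big(\sum_{i=1}^{\bar M} |a_i|^{p' \cdot p/p'}\Big)^{p'/p} \Big(\sum_{i=1}^{\bar M} 1\Big)^{(p-p')/p} = \|a\|_p^{p'}\, \bar M^{(p-p')/p}.
\end{equation*}
The first factor collapses to $\|a\|_p^{p'}$ because $p'\cdot (p/p') = p$, and the second factor equals $\bar M^{1-p'/p}$, which is exactly the claimed bound. (An equivalent route would be to invoke the standard power-mean inequality comparing the normalized $\ell_{p'}$ and $\ell_p$ means of $\{|a_i|\}$, but Hölder keeps the bookkeeping of the cardinality factor transparent.)

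I do not expect a genuine obstacle here; the only point that needs care is verifying that the chosen conjugate exponents are legitimate, which is immediate from the hypothesis $0<p'<p$. I would also remark that equality is attained when all the $|a_i|$ are equal, which confirms that the cardinality factor $\bar M^{1-p'/p}$ cannot be improved and explains why this factor is the price paid for converting the $p$-norm control of the coefficients into $p'$-norm control later in the argument.
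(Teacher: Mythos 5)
Your proof is correct and follows essentially the same route as the paper: both apply H\"older's inequality to the sum $\sum_i |a_i|^{p'}\cdot 1$ with the conjugate pair $p/p'$ and $p/(p-p')$, yielding the factor $\bar M^{1-p'/p}$ times $\|a\|_p^{p'}$. Your added remark on sharpness (equality when all $|a_i|$ are equal) is a nice bonus but not needed.
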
 
 \begin{proof}

{
   % Assume $0<p'<p$:
   \begin{eqal*}
   \sum_{i}|a_i|^{p'} = \langle \mathbf{1}, |\vect a|^{p'} \rangle &\leq \left(\sum_{i} 1\right)^{1-\frac{p'}{p}} \left(\sum_i(|a_i|^{p'})^{\frac{p}{p'}}\right)^{\frac{p'}{p}}
   = \bar{M}^{1-\frac{p'}{p}} \|a\|_p^{p'}
   \end{eqal*}
   The first inequality uses a Holder's inequality with conjugate pair $\frac{p}{p'}$ and $1/(1-\frac{p'}{p})$.
   %  and a decomposition of $\sum_{i}|a_i|^{p'} = \langle \mathbf{1}, |\vect a|^{p'} \rangle $.
}
\end{proof}
\begin{proof}[Proof of \autoref{thm:pnormapp}]
Using \autoref{prop:bapp}, one can construct $\bar M$ number of NN according to \autoref{lemma:tpapp2}, such that each NN represents one B-spline basis function. The weights in the last layer of each NN is scaled to match the coefficients in \autoref{prop:bapp}.
Taking $p'$ in \autoref{lemma:pnorm2} as $2/L$ and combining with \autoref{lemma:tpapp2} finishes the proof.
\end{proof}
\section{Proof of the Main Theorem}
% \subsection{Proof of \autoref{thm:main}}
\label{sec:proofthmmain} 
\textbf{\autoref{thm:main} extended form.}\textit{\thmmain}

\begin{proof}
First recall the relationship between covering number (entropy) and estimation error:
\begin{proposition}
   \label{prop:msecov}
   % \textcolor{red}{TODO:}
   % \citep[Proposition 4]{suzuki2018adaptivity}. 
   Let $\gF\subseteq \{\R^d \rightarrow [-F, F]\}$ be a set of functions. 
   Assume that $\gF$ can be decomposed into two orthogonal spaces $\gF=\gF_\parallel \times \gF_\bot$ where $\gF_\bot$ is an affine space with dimension of N. 
   % In other word, any $f \in \gF$ can be decomposed into two orthogonal components $f = f_\bot + f_\parallel, \sum_{i=1}^n (f_\bot(\vect x_i) f_\parallel(\vect x_i))^2=0$, where $f_\bot \in \gF_\bot, f_\parallel \in \gF_\parallel$. Furthermore, for any $f_\bot \in \gF_\bot, f_\parallel \in \gF_\parallel, f_\bot + f_\parallel \in \gF$.
   % it can be decomposed into two linear function spaces: $\gH\subseteq \gF$ with dimension $N$ and $\gF_\parallel \subseteq \gF$ be its orthogonal complement. 
   Let $f_0 \in \{\R^d \rightarrow [-F, F]\}$ be the target function and $\hat f$ be the least squares estimator in $\gF$:
   \begin{eqal*}
   \hat f = \argmin_{f \in \gF} \sum_{i=1}^n (y_i - f(x_i))^2, 
   y_i = f_0(x_i) + \epsilon_i, \epsilon_i \sim \gN(0, \sigma^2) i.i.d.,
   \end{eqal*}
   % If the covering number $\gN(\gF, \delta, \|\cdot\|_\infty) \geq 3$, 
   then it holds that
   \begin{eqal*}
      % \E_{\gD_n}[\|\hat f - f_0\|^2_{L^2(P_x)}] \leq C\Bigl[\inf_{f \in \gF} \|f - f_0\|^2_{L^2(P_x)} + (F^2 + \sigma^2)\frac{\log \gN(\gF, \delta, \|\cdot\|_\infty)}{n} + \delta(F + \sigma)\Bigr]
      \MSE(\hat f) \leq \tilde O\Big(\argmin_{f \in \gF} \MSE(f) + \frac{N + \log\gN(\gF_\parallel, \delta)+2}{n} + (F+\sigma)\delta \Big) .
   \end{eqal*}
   % where $C$ is a universal constant.
\end{proposition}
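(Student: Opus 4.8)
The plan is to run the standard least-squares ``basic inequality'' argument, but to split the error into a part lying in the finite-dimensional affine subspace $\gF_\bot$ and a part lying in the compact piece $\gF_\parallel$, controlling the two by completely different devices. Write $\|g\|_n^2 := \frac1n\sum_{i=1}^n g(x_i)^2$ and $\langle g,h\rangle_n := \frac1n\sum_i g(x_i)h(x_i)$, and note that for fixed design $\MSE(f)=\|f-f_0\|_n^2$ whenever $f$ is deterministic. Let $f^\star\in\gF$ be arbitrary (to be chosen at the end as the approximation-error minimizer). Since $\hat f$ minimizes $\frac1n\sum_i(y_i-f(x_i))^2$ over $\gF$ and $y_i=f_0(x_i)+\epsilon_i$, expanding the squares and cancelling $\|\epsilon\|_n^2$ gives
\[ \|\hat f - f_0\|_n^2 \le \|f^\star - f_0\|_n^2 + \tfrac{2}{n}\sum_{i=1}^n \epsilon_i\big(\hat f(x_i)-f^\star(x_i)\big), \]
so everything reduces to controlling the empirical-process term on the right.

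Next I would use the orthogonal decomposition $\gF=\gF_\parallel\times\gF_\bot$ to write $\hat f - f^\star = g_\parallel + g_\bot$, where $g_\parallel := \hat f_\parallel - f^\star_\parallel$, $g_\bot := \hat f_\bot - f^\star_\bot$, and $\langle g_\parallel,g_\bot\rangle_n=0$, so that $\|\hat f - f^\star\|_n^2=\|g_\parallel\|_n^2+\|g_\bot\|_n^2$. The noise term splits as $(I)+(II)$ with $(I)=\frac2n\sum_i\epsilon_i g_\parallel(x_i)$ and $(II)=\frac2n\sum_i\epsilon_i g_\bot(x_i)$. For $(II)$, the evaluation vectors of $g_\bot$ lie in a linear subspace $V_\bot\subseteq\R^n$ of dimension at most $N$, so Cauchy--Schwarz gives $(II)\le \frac{2}{\sqrt n}\|\Pi_{V_\bot}\epsilon\|_2\,\|g_\bot\|_n$; since $\|\Pi_{V_\bot}\epsilon\|_2^2$ is $\sigma^2$ times a $\chi^2$ variable with at most $N$ degrees of freedom, a $\chi^2$ tail bound plus Young's inequality give, with probability at least $1-e^{-u}$,
\[ (II)\le\tfrac14\|g_\bot\|_n^2 + \tfrac{C\sigma^2 (N+u)}{n}. \]
This is the crucial place where the infinite metric entropy of $\gF_\bot$ is bypassed: the bound is driven by the dimension $N$, not by a covering number.

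For $(I)$ I would discretize at a single scale. Let $\tilde f_\parallel$ be the nearest point to $\hat f_\parallel$ in a minimal $\delta$-net of $\gF_\parallel$ in $\|\cdot\|_\infty$, so $\|\hat f_\parallel-\tilde f_\parallel\|_\infty\le\delta$ and the net has cardinality $\gN(\gF_\parallel,\delta)$. Split $(I)$ into a net term $\frac2n\sum_i\epsilon_i(\tilde f_\parallel-f^\star_\parallel)(x_i)$ and a residual $\frac2n\sum_i\epsilon_i(\hat f_\parallel-\tilde f_\parallel)(x_i)$; the residual is at most $2\delta\cdot\frac1n\sum_i|\epsilon_i|\lesssim(F+\sigma)\delta$ with high probability. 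For each fixed net point the normalized statistic is $\mathcal N(0,\sigma^2)$, so a union bound over the $\gN(\gF_\parallel,\delta)$ points yields, simultaneously, $\frac1n\sum_i\epsilon_i(\tilde f_\parallel-f^\star_\parallel)(x_i)\le \frac{\sigma}{\sqrt n}\|\tilde f_\parallel-f^\star_\parallel\|_n\sqrt{2(\log\gN(\gF_\parallel,\delta)+u)}$; together with $\|\tilde f_\parallel-f^\star_\parallel\|_n\le\|g_\parallel\|_n+\delta$ and Young's inequality this gives
\[ (I)\le\tfrac14\|g_\parallel\|_n^2 + \tfrac{C\sigma^2(\log\gN(\gF_\parallel,\delta)+u)}{n} + C(F+\sigma)\delta. \]
No chaining or peeling is needed, because the continuum residual is handled separately and each net point is normalized by its own $\|\cdot\|_n$.

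Finally I would assemble the pieces. Adding $(I)$ and $(II)$ and using $\|g_\parallel\|_n^2+\|g_\bot\|_n^2=\|\hat f-f^\star\|_n^2\le 2\|\hat f - f_0\|_n^2+2\|f^\star-f_0\|_n^2$ lets me move the quadratic terms to the left-hand side of the basic inequality, leaving a positive fraction of $\|\hat f-f_0\|_n^2$; this produces, with probability at least $1-e^{-u}$, a bound $\|\hat f-f_0\|_n^2\lesssim \|f^\star-f_0\|_n^2 + \frac{N+\log\gN(\gF_\parallel,\delta)+u}{n}+(F+\sigma)\delta$. Integrating the tail over $u$ converts this into the stated expectation bound and accounts for the additive $+2$ constant, and choosing $f^\star\in\argmin_{f\in\gF}\|f-f_0\|_n^2=\argmin_{f\in\gF}\MSE(f)$ supplies the approximation-error term. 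The main obstacle --- and the reason Suzuki's entropy-only Proposition~4 cannot be quoted verbatim --- is exactly the unconstrained affine piece $\gF_\bot$: having no finite covering number, it must be peeled off and controlled by the exact Gaussian projection estimate, which in turn relies on the orthogonality of the decomposition so that $\|g_\bot\|_n\le\|\hat f-f^\star\|_n$ can be absorbed. The fixed (non-i.i.d.) design is what forces the whole argument to run through $\|\cdot\|_n$ and sub-Gaussian/$\chi^2$ tail bounds rather than off-the-shelf i.i.d. concentration.
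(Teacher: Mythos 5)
Your proof is correct, and while it uses the same high-level decomposition as the paper (which is essentially forced by the statement: dimension $N$ controls $\gF_\bot$, metric entropy controls $\gF_\parallel$), the execution is genuinely different. The paper first proves a separate Claim that the joint least-squares minimizer decomposes into two stand-alone ERMs over $\gF_\bot$ and $\gF_\parallel$ (using orthogonality of the empirical loss), then (a) computes $\MSE_\bot$ exactly through the hat-matrix identity $\frac{\sigma^2}{n}\Tr(\Pi)=\frac{N\sigma^2}{n}$, and (b) runs a Bernstein-based self-bounding argument (adapted from Suzuki's Proposition 4) over the $\delta$-net, controlling the maximal noise $E=\max_i|\epsilon_i|$ by a Gaussian maximal inequality and chaining several expectation-level inequalities to reach a bound of the form $\frac{1+\epsilon}{1-\epsilon}\MSE_\parallel(\check f_\parallel)+\cdots$. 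You instead keep the joint ERM intact, invoke the basic inequality, and split only the noise cross-term: the $\gF_\bot$ part is handled \emph{uniformly} over the $N$-dimensional evaluation subspace by the $\chi^2$ bound on $\|\Pi_{V_\bot}\epsilon\|_2$, which makes the paper's ERM-decomposition Claim unnecessary (your bound holds for whatever $\hat f_\bot$ happens to be, minimizer or not); the $\gF_\parallel$ part is handled by a single-scale Gaussian union bound with per-point normalization, then Young's inequality absorption and tail integration. What each buys: your route is more elementary and shorter (no Bernstein, no decomposition claim, no max-noise variable), and is arguably more robust structurally; the paper's route yields an oracle constant $\frac{1+\epsilon}{1-\epsilon}$ arbitrarily close to $1$ on the approximation-error term and the exact $\frac{N\sigma^2}{n}$ for the finite-dimensional piece, whereas your absorption argument incurs a fixed constant (roughly $3$) on the approximation error --- immaterial under the stated $\tilde O(\cdot)$. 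Two small loose ends in your sketch, neither a gap: the cross term $\frac{\sigma\delta\sqrt{\log\gN(\gF_\parallel,\delta)}}{\sqrt n}$ produced by Young's inequality must itself be split (by AM--GM) into the $(F+\sigma)\delta$ and $\frac{\sigma^2\log\gN}{n}$ buckets, and the tail-to-expectation integration needs the a priori bound $\|\hat f-f_0\|_n^2\le 4F^2$ on the low-probability complement event, which the boundedness assumptions on $\gF$ and $f_0$ supply.
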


The proof of \autoref{prop:msecov} is defered to the section below.
We choose $\gF$ as the set of functions that can be represented by a parallel neural network as stated, the (null) space $\gF_\bot=\{f: f(\vect x) = constant\}$ be the set of functions with constant output, which has dimension 1.
This space captures the bias in the last layer, while the other parameters contributes to the projection in $\gF_\parallel$.
See \autoref{sec:proofthmpnncv} for how we handle the bias in the other layers.
One can find that $\gF_\parallel$ is the set of functions that can be represented by a parallel neural network as stated, and further satisfy 
$
   \sum_{i=1}^n f(\vect x_i) = 0.
$
Because $\gF_\parallel \subseteq \gF$, $\gN(\gF_\parallel, \delta) \leq \gN(\gF, \delta)$ for all $\delta>0$, and the latter is studied in \autoref{thm:pnncv}.

% while the functions defined by the other parameters are studied in \autoref{thm:pnncv}.

In \autoref{thm:main}, the width of each subnetwork is no less than what is required in \autoref{thm:pnormapp}, while the depth and norm constraint are the same, so the approximation error is no more that that in \autoref{thm:pnormapp}.
Choosing $r=2, p = 2/L$, and taking \autoref{thm:pnncv} and \autoref{thm:pnormapp} into this \autoref{prop:msecov}, one gets
\begin{eqal}
   \label{eq:errp}
   % \E_{\gD_n}[\|\hat f - f_0\|^2_{L^2(P_x)}] 
   % \MSE(\hat f)
   % \lesssim {\bar M}^{-2\alpha/d} + \frac{\log n}{n} \rmm{w^2L} {{\bar M}^{\frac{1-2/(pL)}{1-2/L}}\delta^{-\frac{2/L}{1-2/L}} (\log ({\bar M}/\delta)} + 3) + \delta,
   %------------------------------
   \MSE(\hat f)
   &\lesssim \min_{f \in \gF} \MSE(f) + \frac{w^{2+2/(1-2/L)} L^2 \sqrt{d} P'^{\frac{1}{1-2/L}}\delta^{-\frac{2/L}{1-2/L}} \log (wP'/\delta)}{n} + \delta\\
   &\lesssim {\bar M}^{-2\alpha/d} + \frac{w^{2+2/(1-2/L)} L^2}{n} \rmm{w^2L} {{\bar M}^{\frac{1-2/(pL)}{1-2/L}}\delta^{-\frac{2/L}{1-2/L}} (\log ({\bar M}/\delta)} + 3) + \delta,
\end{eqal}
where $\|f\|_{B^\alpha_{p,q}}, m$ and $d$ taken as constants.
By choosing 
% $$
% \delta \eqsim \frac{\rmm{(w^2L)^{1-2/L}}{\bar M}^{1-2/(pL)}}{n^{1-2/L}}, {\bar M} \eqsim \rmm{\Bigl(
%    \frac{n}{w^2L}
% \Bigr)}n^\frac{1-2/L}{2\alpha/d + 1 - 2/(pL)}.
% $$
$$
\delta \eqsim 
\frac{w^{4-4/L}L^{2-4/L}\rmm{(w^2L)^{1-2/L}}{\bar M}^{1-2/(pL)}}{n^{1-2/L}}, 
{\bar M} \eqsim \Big(\frac{n^{1-2/L}}{w^{4-4/L}L^{2-4/L}}\Big)^{\frac{1}{2\alpha/d + 1 - 2/(pL)}},
$$
we get 
\begin{eqal}
   \MSE(\hat f) 
   &\leq \tilde O \Bigg(\Big(\frac{w^{4-4/L}L^{2-4/L}}{n^{1-2/L}}\Big)^\frac{2\alpha/d}{2\alpha/d+1-2/(pL)}  + e^{-c_6 L}\Bigg)
\end{eqal}
where $\MSE(\hat f) $ shows the MSE of the solution to constrained optimization problem \autoref{eq:l2c} by optimally choosing $\bar M$ (or $P'$).
% \yw{Comment 1: What is the difference between $\simeq$ and $\eqsim$?}
% \yws{Comment 2: I think it is better to assume $M \gtrsim \Bigl(
%    \frac{n}{w^2L}
% \Bigr)^\frac{1-2/L}{2\alpha/d + 1 - 2/(pL)}$ directly in the theorem so there is no need to introduce $\bar{M}$, and the associated confusion when first reading the theorem.}

\modif{
Finally, under the assumption in \autoref{lemma:cons2regu}, for any constrained optimization problem, there exists a regularized optimzation problem, whose MSE is not larger than the MSE of the constrained optimization problem up to a factor of a constant. 
This closes the connection between \autoref{eq:l2r} and \autoref{eq:l2c} and finishes the proof.
}

Note that the empirical risk minimizer (ERM) of the parallel nerual network satisfy that 
% there exists a weight decay parameter $\lambda'$ such that 
the $(2/L)$-norm of the coefficients of the parallel neural network satisfy that $\|\{a_j\}\|_{2/L}^{2/L} = \|\{\tilde a_{j,\bar{M}}\}\|_{2/L}^{2/L}$ where 
$\{\tilde a_{j,\bar{M}}\}$ is the coefficient of the particular $\bar{M}$-sparse approximation, {although $\{a_j\}$ is not necessarily $\bar M$ sparse.}
% Here ${\bar M}$ is the number of ``active'' subnetworks that suffices to achieve the claimed approximation error. Here we call a subnetwork ``active'' the weight matrix in it is nonzero. 
Empirically, one only need to guarantee that during initialization, the number of subnetworks 
$M \geq \bar M$ such that the $\bar{M}$-sparse approximation is feasible, thus the approximation error bound from Theorem~\ref{thm:pnormapp} can be applied.
Theorem~\ref{thm:pnormapp} also says that 
$\|\{a_j\}\|_{2/L}^{2/L} = {\|\{\tilde a_{j,\bar{M}}\}\|^{2/L}_{2/L}}\lesssim \bar{M}^{1-2/{pL}},$
thus we can apply the covering number bound from Theorem~\ref{thm:pnncv} with $P' = \bar{M}^{1-2/{pL}}$.
% $\bar M \gtrsim n^\frac{1-2/L}{2\alpha/d + 1 - 2/(pL)}$ 
Finally, if $\lambda$ is optimally chosen, then it achieves a smaller MSE than this particular $\lambda'$, which has been proven to be no more than $O(\bar M^{-\alpha/d})$ and completes the proof.

\end{proof}

\begin{proof}[Proof of \autoref{prop:msecov}]
   For any function $f \in \gF$, define $f_\bot = \argmin_{h \in \gF_\bot} \sum_{i=1}^n (f(\vect x_i) - h(\vect x_i))^2$ be the projection of $f$ to $\gF_\bot$, and define $f_\parallel = f - f_\bot$ be the projection to the orthogonal complement.
   Note that $f_\parallel$ is not necessarily in $\gF_\parallel$. 
   However, if $f \in \gF$, then $f_\parallel \in \gF_\parallel$.
   $y_{i\bot}$ and $y_{i\parallel}$ are defined by creating a function $f_y$ such that $f_y(\vect x_i) = y_i, \forall i$, e.g. via interpolation.
   % Similarly, define $\vect y_\bot = \argmin_{\vect y_\bot \in \gF_\bot} \sum_{i=1}^n (y_i - y_{i\bot})^2, \vect y_\parallel = \vect y - \vect y_\bot$.
   Because $\gF_\parallel$ and $\gF_\bot$ are orthogonal, the empirical loss and population loss can be decomposed in the same way:
   \begin{eqal*}
      L_\parallel(f) &= \frac{1}{n}\sum_{i=1}^n (f_\parallel(\vect x) - f_{0\parallel}(\vect x))^2 + \frac{n-N}{n} \sigma^2, &
      %----------------------
      L_\bot(f) &= \frac{1}{n}\sum_{i=1}^n (f_\bot(\vect x) - f_{0\bot}(\vect x))^2 + \frac{N}{n} \sigma^2,\\
      %----------------------
      \hat L_\parallel(f) &= \frac{1}{n}\sum_{i=1}^n (f_\parallel(\vect x) - y_{i\parallel})^2, &
      %----------------------
      \hat L_\bot(f) &= \frac{1}{n}\sum_{i=1}^n (f_\bot(\vect x) - y_{i\bot}(\vect x))^2,\\
      %---------------------
      MSE_\parallel(f) &= \E_\gD\Big[\frac{1}{n}\sum_{i=1}^n (f_\parallel(\vect x) - f_{0\parallel}(\vect x))^2 \Big],&
      %----------------------
      MSE_\bot(f) &= \E_\gD\Big[\frac{1}{n}\sum_{i=1}^n (f_\bot(\vect x) - f_{0\bot}(\vect x))^2 \Big],\\
   \end{eqal*}   
   such that $L(f)=L_\parallel(f)+L_\bot(f), \hat L(f)=\hat L_\parallel(f)+\hat L_\bot(f)$. This can be verified by decomposing $\hat f, f_0$ and $y$ into two orthogonal components as shown above, and observing that $\sum_{i=1}^n f_{1\bot} (\vect x_i)  f_{2\parallel} (\vect x_i) = 0, \forall f_1, f_2$.

   \textbf{First prove the following claim}
   \begin{claim}
      Assume that $\hat f = \argmin_{f \in \gF} \hat L(f)$ is the empirical risk minimizer. Then $\hat f_\bot = \argmin_{f \in \gF_\bot} \hat L_\bot(f), \hat f_\parallel = \argmin_{f \in \gF_\parallel} \hat L_\parallel(f)$, where $\hat f_\bot$ is the projections of $\hat f$ in $\gF_\bot$, and $\hat f_\parallel = \hat f - \hat f_\bot$ respectively. 
   \end{claim}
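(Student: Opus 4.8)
The plan is to exploit that the empirical risk splits additively into a term living entirely in $\gF_\parallel$ and a term living entirely in $\gF_\bot$, over a feasible set that is itself a Cartesian product; minimizing a separable objective over a product set decouples into two independent coordinate minimizations. So the whole argument is a separation-plus-exchange argument, with essentially no analysis particular to neural networks entering.

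First I would record the key independence property. For any $f \in \gF$, write $f = f_\parallel + f_\bot$, where $f_\bot$ is the $L^2(P_n)$-projection of $f$ onto $\gF_\bot$ and $f_\parallel = f - f_\bot$. Because $\gF = \gF_\parallel \times \gF_\bot$ with $\gF_\parallel \perp \gF_\bot$ in the empirical inner product (the orthogonality $\sum_{i=1}^n f_{1\bot}(\vect x_i) f_{2\parallel}(\vect x_i) = 0$ established just above), this projection $f_\bot$ is exactly the $\gF_\bot$-factor of $f$ and $f_\parallel \in \gF_\parallel$ is its $\gF_\parallel$-factor. Consequently $\hat L_\bot(f)$ depends only on $f_\bot$ and $\hat L_\parallel(f)$ only on $f_\parallel$, and the decomposition $\hat L(f) = \hat L_\parallel(f) + \hat L_\bot(f)$ already derived lets us write $\hat L(f) = \hat L_\parallel(f_\parallel) + \hat L_\bot(f_\bot)$ as a sum of two terms in separate variables.

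Next I would run the exchange argument. Let $\hat f = \hat f_\parallel + \hat f_\bot$ denote the ERM over $\gF$. Suppose toward contradiction that $\hat f_\bot$ does not minimize $\hat L_\bot$ over $\gF_\bot$, and pick $g_\bot \in \gF_\bot$ with $\hat L_\bot(g_\bot) < \hat L_\bot(\hat f_\bot)$. Since $\gF$ is a product, $\tilde f := \hat f_\parallel + g_\bot \in \gF$, and by the separated form $\hat L(\tilde f) = \hat L_\parallel(\hat f_\parallel) + \hat L_\bot(g_\bot) < \hat L_\parallel(\hat f_\parallel) + \hat L_\bot(\hat f_\bot) = \hat L(\hat f)$, contradicting the optimality of $\hat f$. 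Hence $\hat f_\bot = \argmin_{f \in \gF_\bot} \hat L_\bot(f)$; the symmetric argument, swapping the $\gF_\parallel$-component instead, yields $\hat f_\parallel = \argmin_{f \in \gF_\parallel} \hat L_\parallel(f)$.

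The only genuinely non-routine point — and the one I would be most careful about — is the identification in the first step: that the orthogonal projection $f_\bot$ of $f$ onto $\gF_\bot$ coincides with the $\gF_\bot$-factor in the product decomposition, so that the two meanings of ``$\hat f_\bot$'' agree and $\hat f_\parallel$ genuinely lands in $\gF_\parallel$. This is exactly where orthogonality of $\gF_\parallel$ and $\gF_\bot$ is used: if $f = u + v$ with $u \in \gF_\parallel$ and $v \in \gF_\bot$, then $u \perp \gF_\bot$ forces $v$ to be the projection of $f$ onto $\gF_\bot$, so $v = f_\bot$ and $u = f_\parallel$. Once this matching is in place, the remainder is just the separable-minimization bookkeeping above, requiring no property of the parallel network class beyond the assumed product and orthogonality structure.
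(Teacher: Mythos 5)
Your proposal is correct and takes essentially the same approach as the paper: both rely on the additive decomposition $\hat L(f) = \hat L_\parallel(f_\parallel) + \hat L_\bot(f_\bot)$ over the orthogonal product structure $\gF = \gF_\parallel \times \gF_\bot$, followed by an exchange-and-contradiction argument showing that a strictly better component would produce a member of $\gF$ beating the ERM. Your one-component-at-a-time exchange (keeping the other factor fixed) and your explicit identification of the orthogonal projection with the product factor are slightly more careful renderings of exactly the steps the paper performs.
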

   \begin{proof}
   Since $\hat f \in \gF$, by definition $\hat f_\parallel \in \gF_\parallel$. Assume that there exist $\hat f'_\bot, \hat f'_\parallel$, and either $\hat L_\bot(\hat f'_\bot) < \hat L_\bot(\hat f_\bot)$, or $\hat L_\parallel(\hat f'_\parallel) < \hat L_\parallel(\hat f_\parallel)$. Then 
   \begin{eqal*}
      \hat L(\hat f') &= \hat L(\hat f'_\bot + \hat f'_\parallel) 
      = \hat L_\parallel (\hat f'_\bot + \hat f'_\parallel) + \hat L_\bot (\hat f'_\bot + \hat f'_\parallel)
      = \hat L_\parallel (\hat f'_\parallel) + \hat L_\bot (\hat f'_\bot)\\
      &< \hat L_\parallel (\hat f_\parallel) + \hat L_\bot (\hat f_\bot)
      = \hat L_\parallel (\hat f_\bot + \hat f_\parallel) + \hat L_\bot (\hat f_\bot + \hat f_\parallel)
      = \hat L (\hat f) 
   \end{eqal*}
   which shows that $\hat f$ is not the minimizer of $\hat L(f)$ and violates the assumption.

\end{proof}

   \textbf{Then we bound $MSE_\bot(f)$.} We convert this part into a finite dimension least square problem: 
   \begin{eqal*}
      \hat f_\bot &= \argmin_{f \in \gF_\bot} \hat L_\bot(f) \\
      % &= \argmin_{f \in \gF_\bot} \sum_{i=1}^n (f(\vect x_i) - y_{i\bot})^2 \\
      &= \argmin_{f \in \gF_\bot} \frac{1}{n}\sum_{i=1}^n (f(\vect x_i) - f_{0\bot}(\vect x_i) - \epsilon_{i\bot})^2 \\
      &= \argmin_{f \in \gF_\bot} \frac{1}{n}\sum_{i=1}^n (f(\vect x_i) - f_{0\bot}(\vect x_i) - \epsilon_{i\bot})^2  + \epsilon_{i\parallel}^2\\
      &= \argmin_{f \in \gF_\bot} \frac{1}{n}\sum_{i=1}^n (f(\vect x_i) - f_{0\bot}(\vect x_i) - \epsilon_{i\bot} - \epsilon_{i\parallel})^2\\
      &= \argmin_{f \in \gF_\bot} \frac{1}{n}\sum_{i=1}^n (f(\vect x_i) - f_{0\bot}(\vect x_i) - \epsilon_i)^2\\
   \end{eqal*}
   The forth line comes from our assumption that $\gF_\bot$ is orthogonal to $\gF_\parallel$, so $\forall f \in \gF_\bot, f + f_{0\bot} + \epsilon_\bot$ is orthogonal to $\epsilon_\parallel$.

   Let the basis function of $\gF_\bot$ be $h_1, h_2, \dots, h_N$, the above problem can be reparameterized as 
   \begin{eqal*}
      \argmin_{\vect \theta \in \R^N} \frac{1}{n} \| \mat X \vect \theta - \vect y\|^2
   \end{eqal*}
   where $\mat X \in \R^{n\times N}: X_{i} = h_j(\vect x_i), \vect y =  \vect y_{0\bot} + \vect \epsilon, \vect y_{0\bot} = [f_{0\bot}(x_1), \dots, f_{0\bot}(x_n)], \vect \epsilon = [\epsilon_1, \dots, \epsilon_n]$. This problem has a closed-form solution 
   \begin{eqal*}
      \vect \theta = (\mat X^T \mat X)^{-1} \mat X^T \vect y
   \end{eqal*} 
   Observe that $f_{0\bot} \in \gF_\bot$, let $\vect y_{0\bot} = \mat X \vect \theta^*$,The MSE of this problem can be computed by 
   \begin{eqal*}
      L(\hat f_\bot) &= \frac{1}{n}\|\mat X \vect \theta  - \vect y_{0\bot}\|^2 
      = \frac{1}{n}\|\mat X(\mat X^T \mat X)^{-1} \mat X^T (\mat X \vect \theta^* + \vect \epsilon) - \mat X \vect \theta^*\|^2 \\
      &= \frac{1}{n}\|\mat X(\mat X^T \mat X)^{-1} \mat X^T  \vect \epsilon\|^2 \\
   \end{eqal*}

   Observing that $\Pi := \mat X(\mat X^T \mat X)^{-1} \mat X^T$ is an idempotent and independent projection whose rank is $N$, and that $\E[\vect \epsilon \vect \epsilon^T] = \sigma^2 \mat I$, we get 
   \begin{eqal*}
      \MSE_\bot (\hat f_\bot) & = \E[L(\hat f_\bot)] = \frac{1}{n} \|\Pi \vect \epsilon \|^2 = \frac{1}{n}\tr (\Pi \vect \epsilon \vect \epsilon^T) = \frac{\sigma^2}{n} \tr(\Pi)
   \end{eqal*}
   % The MSE of this problem is known \citep[Section 2.7]{liang2016statistical}:
   which concludes that 
   \begin{eqal}
      \MSE_\bot(\hat f) 
      % - \min_{f \in \gH} MSE_\bot(f) 
      = O\Big(\frac{N}{n}\sigma^2\Big) .
      \label{eq:errbot}
   \end{eqal}
   
   See also \citep[Proposition 1]{hsu2011analysis}.
%   See also \citep[Section 2.7]{liang2016statistical}.

   \textbf{Next we study $\MSE_\parallel(\hat f)$.} 
   % For any $f_{j\parallel} \in \tilde \gF_\parallel, f_\bot \in \gH$, and for any fixed $f_0$, denote (with slightly abuse of notation) $f_j = f_{j\parallel} + f_\bot$, and 
   Denote $\tilde\sigma_\parallel^2=\frac{1}{n}\sum_{i=1}^n \epsilon_{i\parallel}^2, E = \max_i |\epsilon_i|$. 
   % From Gaussian tail bound, the distribution of $\epsilon_i$ satisfy
   % \begin{eqal*}
   %    P[|\epsilon_i| \geq t] \leq 2\exp\Big(-\frac{t^2}{2\sigma^2}\Big),
   % \end{eqal*}
   % using the union bound, 
   % \begin{eqal*}
   %    P[E\geq t] = P\Big[\max_{1 \leq i \leq n } |\epsilon_i| \geq t \Big] \leq 2n\exp\Big(-\frac{t^2}{2\sigma^2}\Big),
   % \end{eqal*}
   % The expectation of $E$ can be computed by
   % \begin{eqal*}
   %    \E[E] = \int_{t=0}^\infty P[E\geq t] dt = 
   % \end{eqal*}
   Using Jensen's inequality and union bound, we have 
   \begin{eqal*}
      \exp(t\E[E]) &\leq \E[\exp(tE)] = \E[\max\exp(t|\epsilon_i|)]
      \leq \sum_{i=1}^n \E[\exp(t|\epsilon_i|)] 
      \leq 2n\exp(t^2\sigma^2/2)
   \end{eqal*}
   Taking expectation over both sides, we get 
   \begin{eqal*}
      \E[E] \leq \frac{\log 2n}{t} + \frac{t\sigma^2}{2}
   \end{eqal*}
   maximizing the right hand side over $t$ yields
   \begin{eqal*}
      \E[E] \leq \sigma\sqrt{2\log 2n}.
   \end{eqal*}

   % From \citet[Lemma C.1]{schmidt2020nonparametric}, $\E[E] \leq (3\log n + 1)\sigma$. 
   % which can be bounded by
   % \begin{eqal*}
   %    P(E > t) &\leq n P(|\epsilon_1| > t) 
   %    \leq \frac{1}{\sqrt{2\pi}}\frac{n\sigma}{t}\exp\Bigl(-\frac{t^2}{2\sigma^2}\Bigr)
   % \end{eqal*}
   % so with probability at least $1-\delta_e$,
   % \begin{eqal*}
   %    E \leq \tilde O(\sigma \sqrt{\log (n/\delta_e)}).
   % \end{eqal*}
   Let $\tilde \gF_\parallel$ be the covering set of $\gF_\parallel = \{f_\parallel: f \in \gF\}$.
   For any $\tilde f_\parallel \in \tilde \gF_\parallel$,
   \begin{eqal*}
      L_\parallel(f_j) - \hat L_\parallel(f_j) 
      &= \frac{1}{n} \sum_{i=1}^n (f_{j\parallel}(\vect x_i) - f_{0\parallel}(\vect x_i))^2 - \frac{1}{n} \sum_{i=1}^n (\tilde f_{\parallel}(\vect x_i) - y_{i\parallel})^2 + \frac{n-N}{n} \sigma^2\\
      %-------------------
      &= \frac{1}{n} \sum_{i=1}^n \epsilon_{i\parallel} (2\tilde f_{\parallel}(\vect x_i) - f_{0\parallel}(\vect x_i) - y_{i\parallel}) + \frac{n-N}{n} \sigma^2\\
      %------------------
      &= \frac{1}{n} \sum_{i=1}^n \epsilon_{i} (2\tilde f_{\parallel}(\vect x_i) - f_{0\parallel}(\vect x_i) - y_{i\parallel}) + \frac{n-N}{n} \sigma^2\\
      %-------------------
      &= \frac{1}{n} \sum_{i=1}^n \epsilon_i (2\tilde f_{\parallel}(\vect x_i) - 2f_{0\parallel}(\vect x_i)) 
      + \frac{n-N}{n} \sigma^2 - \tilde\sigma_\parallel^2  \\
   \end{eqal*}
   The first term can be bounded using Bernstein's inequality: let $h_i = \epsilon_i(f_{j\parallel}(\vect x_i) - f_{0\parallel}(\vect x_i))$, by definition $|h_i| \leq 2EF$, 
   \begin{eqal*}
      \Var[h_i] 
      & = \E[\epsilon_i^2(\tilde f_{\parallel}(\vect x_i) - f_{0\parallel}(\vect x_i))^2]\\
      %------------------
      & = (\tilde f_{\parallel}(\vect x_i) - f_{0\parallel}(\vect x_i))^2 \E[\epsilon_i^2]\\
      %------------------
      & = (\tilde f_{\parallel}(\vect x_i) - f_{0\parallel}(\vect x_i))^2 \sigma^2\\
      % & \leq \sigma^2 \Big(L_\parallel(f_j) - \frac{n-N}{n} \sigma^2\Big)
   \end{eqal*}
   using Bernstein's inequality, for any $\tilde f_\parallel \in \tilde \gF_\parallel$, with probably at least $1-\delta_p$,
   \begin{eqal*}
      \frac{1}{n} \sum_{i=1}^n \epsilon_i (2\tilde f_{\parallel}(\vect x_i) - 2f_{0\parallel}(\vect x_i)) 
      &= \frac{2}{n} \sum_{i=1}^n h_i\\
      %--------------------------------
      &\leq \frac{2}{n}\sqrt{2\sum_{i=1}^n \big(\tilde f_{\parallel}(\vect x_i) - f_{0\parallel}(\vect x_i)\big)^2 \sigma^2\log(1/\delta_p)}
      + \frac{8EF\log(1/\delta_p)}{3n} \\
      %-------------------------
      &= 2\sqrt{\Big(L_\parallel(\tilde f_{\parallel}) - \frac{n-N}{n} \sigma^2\Big) \frac{2\sigma^2\log(1/\delta_p)}{n}}
      + \frac{8EF\log(1/\delta_p)}{3n} \\
      %-------------------------
      &\leq \epsilon \Big(L_\parallel(\tilde f_{\parallel}) - \frac{n-N}{n} \sigma^2\Big) + \frac{8\sigma^2\log(1/\delta_p)}{n\epsilon} + \frac{8EF\log(1/\delta_p)}{3n}
   \end{eqal*}
   the last inequality holds true for all $\epsilon > 0$.
   The union bound shows that with probably at least $1-\delta$, for all $\tilde f_\parallel \in \tilde \gF_\parallel$,
   \begin{eqal*}
      % \frac{1}{n} \sum_{i=1}^n \epsilon_i (2f_{j\parallel}(\vect x_i) - 2f_{0\parallel}(\vect x_i)) 
      L_\parallel(\tilde f_{\parallel}) - \hat L_\parallel(\tilde f_{\parallel})
      &\leq \epsilon \Big(L_\parallel(\tilde f_{\parallel}) - \frac{n-N}{n} \sigma^2\Big) 
      + \frac{8\sigma^2\log(\gN(\gF_\parallel, \delta)/\delta_p)}{n\epsilon} + \frac{8EF\log(\gN(\gF_\parallel, \delta)/\delta_p)}{3n}\\
      &\quad+ \frac{n-N}{n} \sigma^2 - \tilde\sigma_\parallel^2.
   \end{eqal*} 
   By rearanging the terms and using the definition of $L(\tilde f_{\parallel})$, we get
   \begin{eqal*}
      (1-\epsilon)\Big(L_\parallel(\tilde f_{\parallel}) - \frac{n-N}{n} \sigma^2\Big)
      \leq \hat L_\parallel(\tilde f_{\parallel})  
      + \frac{8\sigma^2\log(\gN(\gF_\parallel, \delta)/\delta_p)}{n\epsilon} + \frac{8EF\log(\gN(\gF_\parallel, \delta)/\delta_p)}{3n}
      % + \frac{n-N}{n} \sigma^2 
      - \tilde\sigma_\parallel^2.
   \end{eqal*}
   Taking the expectation (over $\gD$) on both sides, and notice that $\E[\tilde \sigma_\parallel^2] = \frac{n-N}{n} \sigma^2$. Furthermore, for any random variable $X, \E[X] = \int_{-\infty}^\infty xdP(X \leq x)$, we get 
   \begin{eqal}
      &\max_{\tilde f_\parallel \in \tilde \gF_\parallel}
      \Big((1-\epsilon) \MSE_\parallel(\tilde f_\parallel) - \E[\hat L_\parallel(\tilde f_{\parallel})] \Big)\\
      %---------------------------
      &\leq \Bigl(\frac{8\sigma^2}{n\epsilon} 
      + \frac{8F\sigma\sqrt{2\log2n}}{3n}\Bigr)\Big(\log\gN(\gF_\parallel, \delta) - \int_{\delta=0}^1\log(\delta_p)d\delta_p\Big) - \frac{n-N}{n} \sigma^2\\
      %------------------------------
      &= \Bigl(\frac{8\sigma^2}{n\epsilon} 
      + \frac{8F\sigma\sqrt{2\log2n}}{3n}\Bigr)(\log\gN(\gF_\parallel, \delta) +1) - \frac{n-N}{n} \sigma^2.
      \label{eq:sb1}
   \end{eqal}
   where the integration can be computed by replacing $\delta$ with $e^x$. Though it is not integrable under Riemann integral, it is integrable under Lebesgue integration. 

   Similarly, let $\check f_\parallel = \argmin_{f \in \gF_\parallel} L_\parallel(f)$,
   \begin{eqal*}
      L_\parallel(\check f_\parallel) - \hat L_\parallel(\check f_\parallel) 
      &= \frac{1}{n} \sum_{i=1}^n \epsilon_i (2\check f_\parallel(\vect x_i) - 2f_{0\parallel}(\vect x_i)) 
      + \frac{n-N}{n} \sigma^2 - \tilde\sigma_\parallel^2  \\
   \end{eqal*}
   with probably at least $1-\delta_q$, for any $\epsilon > 0$,
   \begin{eqal*}
      &-\frac{1}{n} \sum_{i=1}^n \epsilon_i (2\check f_{\parallel}(\vect x_i) - 2 f_{0\parallel}(\vect x_i)) 
      \leq \epsilon \Big(L_\parallel(\check f_{\parallel}) - \frac{n-N}{n} \sigma^2\Big) 
      + \frac{8\sigma^2\log(1/\delta_p)}{n\epsilon} 
      + \frac{8EF\log(1/\delta_p)}{3n}, \\
      %---------------------
      % MSE_\parallel(\tilde f_{\parallel}) 
      % &\geq \frac{1}{1+\epsilon}\Big(\hat L_\parallel(\tilde f_{\parallel})  
      % - \frac{2\log(1/\delta_q)}{n\epsilon} - \frac{8EF\log(1/\delta_q)}{n}
      % - \tilde\sigma_\parallel^2\Big).
      &\hat L_\parallel(\check f_\parallel) 
      \leq (1+\epsilon)\Big(L_\parallel(\check f_\parallel) - \frac{n-N}{n} \sigma^2\Big)
      + \frac{8\sigma^2\log(1/\delta_p)}{n\epsilon} 
      + \frac{8EF\log(1/\delta_q)}{3n}
      + \tilde\sigma_\parallel^2.
      % - \frac{n-N}{n} \sigma^2.
   \end{eqal*}
   Taking the expectation on both sides,
   \begin{eqal}
      \E[\hat L_\parallel(\check f_\parallel)] \leq (1+\epsilon) \MSE_\parallel(\check f_\parallel) 
      + \frac{8\sigma^2}{n\epsilon} 
      + \frac{8F\sigma\sqrt{2\log2n}}{3n}
      + \frac{n-N}{n} \sigma^2.
      \label{eq:sb2}
   \end{eqal}
   Finally, let $\hat f_{*} := \argmin_{f \in \tilde \gF_\parallel} \sum_{i=1}^n (\hat f_\parallel(\vect x_i) - f(\vect x_i))^2$ be the projection of $\hat f_\parallel$ in its $\delta$-covering space,
   % Using the union bound, with probability at least $1-\delta_p - \delta_q$,
   \begin{eqal*}
      % L_\parallel(\hat f) - \hat L_\parallel(\hat f) 
      % &= \frac{1}{n} \sum_{i=1}^n \epsilon_{i\parallel} (2\hat f_{\parallel}(\vect x_i) - f_{0\parallel}(\vect x_i) - y_{i\parallel}) + \frac{n-N}{n} \sigma^2\\
      % %-----------------------
      % & \leq \frac{1}{n} \sum_{i=1}^n \epsilon_{i\parallel} (2 f_{j^*}(\vect x_i) - f_{0\parallel}(\vect x_i) - y_{i\parallel}) + \frac{n-N}{n} \sigma^2 + 2\delta E\\
      % %----------------------
      % &= L_\parallel(f_{j^*}) - \hat L_\parallel(f_{j^*})  + 2\delta E\\
      % %----------------------
      % &= MSE_\parallel(f_{j^*}) - \hat L_\parallel(f_{j^*})  + 2\delta E + \frac{n-N}{n} \sigma^2\\
      \MSE_\parallel(\hat f_\parallel) 
      &= \E\Big[\frac{1}{n}\sum_{i=1}^n (\hat f_\parallel(\vect x_i) - f_{0\parallel}(\vect x_i))^2 \Big]\\
      %--------------------------
      &= \E\Big[\frac{1}{n}\sum_{i=1}^n (\hat f_{*}(\vect x_i) - f_{0\parallel}(\vect x_i))^2 
      + \frac{1}{n}\sum_{i=1}^n (\hat f_\parallel(\vect x_i) - \hat f_{*}(\vect x_i))(\hat f_\parallel(\vect x_i) + \hat f_{*}(\vect x_i) - 2 f_{0\parallel}(\vect x_i))\Big]
      \\
      %-------------------------
      &\leq \E\Big[\frac{1}{n}\sum_{i=1}^n (\hat f_{*}(\vect x_i) - f_{0\parallel}(\vect x_i))^2\Big] + 4F\delta \\
      & = \MSE_\parallel (\hat f_{*}(\vect x_i)) + 4F\delta, \\
   \end{eqal*}
   and similarly 
   \begin{eqal}
      \hat L_\parallel(\hat f_*) \leq \hat L_\parallel(\hat f_\parallel) + (4F+2E)\delta.
      \label{eq:lcov}
   \end{eqal}
   We can conclude that
   \begin{eqal*}
      \MSE_\parallel(\hat f_\parallel) 
      %----------------------------
      &\leq \frac{1}{1-\epsilon}\Big(\E[\hat L_\parallel(\hat f_{*})]  
      +  \Bigl(\frac{8\sigma^2}{n\epsilon} 
      + \frac{8F\sigma\sqrt{2\log2n}}{3n}\Bigr)(\log\gN(\gF_\parallel, \delta)+1)  - \frac{n-N}{n} \sigma^2 \Big)\\
      &\quad + 4F\delta \\
      %----------------------
      &\leq \frac{1}{1-\epsilon}\Big(\E[\hat L_\parallel(\hat f_\parallel)] + (4F + \sigma\sqrt{8\log2n})\delta  \\
      &\quad +  \Bigl(\frac{8\sigma^2}{n\epsilon} 
      + \frac{8F\sigma\sqrt{2\log2n}}{3n}\Bigr)(\log\gN(\gF_\parallel, \delta)+1) - \frac{n-N}{n} \sigma^2 \Big)
      + 4F\delta \\
      %----------------------
      &\leq \frac{1}{1-\epsilon}\Big(\E[\hat L_\parallel(\check f_\parallel)] + (4F + \sigma\sqrt{8\log2n})\delta \\
      &\quad +  \Bigl(\frac{8\sigma^2}{n\epsilon} 
      + \frac{8F\sigma\sqrt{2\log2n}}{3n}\Bigr)(\log\gN(\gF_\parallel, \delta)+1) - \frac{n-N}{n} \sigma^2 \Big)
      + 4F\delta \\
      % &\leq \frac{1}{1-\epsilon}\Big(\hat L_\parallel(\tilde f)  
      % + \frac{2\log(\gN(\gF_\parallel, \delta)/\delta_p)}{n\epsilon} + \frac{8EF\log(\gN(\gF_\parallel, \delta)/\delta_p)}{n}
      % + \frac{n-N}{n} \sigma^2 - \tilde\sigma_\parallel^2\Big)\\
      % & \quad + 4F\delta + \frac{n-N}{n} \sigma^2\\
      %--------------------
      & \leq \frac{1+\epsilon}{1-\epsilon} \MSE_\parallel(\check f_\parallel)
      + \frac{1}{n}\Bigl(\frac{8\sigma^2}{\epsilon} 
      + \frac{8F\sigma\sqrt{2\log2n}}{3}\Bigr)\Big(\frac{\log\gN(\gF_\parallel, \delta)+2}{1-\epsilon}\Big) \\
      &\quad  + \Big(4F + \frac{4F + \sigma\sqrt{8\log 2n}}{1-\epsilon}\Big)\delta,
      %-----------------------
      % L(\hat f) - \hat L(\hat f)
      % &= L_\parallel(\hat f) - \hat L_\parallel(\hat f) 
      % + L_\bot(\hat f) - \hat L_\bot(\hat f)
   \end{eqal*}
   where the first line comes from \autoref{eq:sb1}, and second comes from \autoref{eq:lcov}, the thid line is because $\hat f_\parallel = \argmin_{f \in \gF_\parallel} \hat L_\parallel(f)$, and the last line comes from \autoref{eq:sb2}.
   We also use that fact that $\hat L_\parallel(\hat f) \leq \hat L_\parallel(f), \forall f$.
   Noticing that $\MSE(\hat f) = \MSE_\parallel(\hat f) + \MSE_\bot(\hat f)$, combining this with \autoref{eq:errbot} finishes the proof.
\end{proof}

\modif{
\begin{lemma}
   \label{lemma:cons2regu}
   Assume that these exists $C_1, C_2 >1$ (which may depend on the target function), for all $P'>0$, there exists $\lambda > 0$, such that the soltion to the regularized optimization problem \autoref{eq:l2r}, denoted as $\tilde f$, satisfy 
   \begin{eqal*}
      C_1 P' \leq \|\{\tilde a_j\}\|_{2/L}^{2/L} \leq C_2 P',
   \end{eqal*}
   then the MSE of the regularized optimization problem satisfy 
   \begin{eqal*}
      \MSE(\tilde f) \leq C \MSE(\hat f)
   \end{eqal*}
   where $C$ is a constant that depends on $C_1, C_2$, $\hat f$ is the solution to the constrained optimzation problem \autoref{eq:l2c}, and 
   \begin{eqal*}
      \lambda \lesssim \frac{\MSE(\hat f)}{P'} \lesssim n^{-(1-2/L)}
   \end{eqal*}
\end{lemma}
}
%    Let $\hat f$ be the empirical risk minimize of a constrained problem:
%    \begin{eqal*}
%       \hat f = \argmin_{f} \hat L(f), \textrm{ s.t. } R(f) \leq P
%    \end{eqal*}
%    where both $\hat L(f)$ and $R(f)$ are nonnegative for all $f$, and $R(\hat f) > 0$, and let 
%    \begin{eqal}
%       \label{eq:errn}
%       M(f) := c_1 \min_{f \in \gF} L(f) + c_2 R(f)^{c_3} \log(R(f)) + c_4
%    \end{eqal}
%    be the target function we want to bound, then for all $P>0$, let $\lambda = \frac{\hat L(\hat f)}{R(\hat f)}$, then solution to the empirical risk minimize of regularized problem 
%    \begin{eqal*}
%       \tilde f = \argmin_{f} \hat L(f) + \lambda R(f)
%    \end{eqal*}
%    satisfy 
%    \begin{eqal*}
%       M(\tilde f) \leq 2^{ c_3}M(\hat f) + c_2\log 2.
%    \end{eqal*}

\modif{
% Notice that \autoref{eq:errn} is simplification of \autoref{eq:errorrate} and \autoref{eq:cnbound} by taking $d, w$ and $L$ as constants, and has already been proved, and the trailing $O(1/n)$ term is neglectable as $\MSE_{\textrm{constrained}}(n)$ converges slower than it.
\begin{proof}
   The MSE of the regularized problem can be achieved by taking our assumtion into \autoref{eq:errdecom}. We only need to prove the selection of $\lambda$.
   We apply the decomposition as in \autoref{prop:msecov}, and only need to consider $\gF_\parallel$, as $\gF_\bot$ is not imfluenced by regularization or constrained. From the definition of $\tilde f$ and $\lambda$, we have
   \begin{eqal*}
      % &\hat L( \tilde f) + \lambda \|\{\tilde a_j\}\|_{2/L}^{2/L} \leq \hat L( \hat f) + \lambda \|\{\hat a_j\}\|_{2/L}^{2/L} = 2\hat L( \hat f)
      \hat L(\tilde f) + \lambda \|\{\tilde a_j\}\|_{2/L}^{2/L} &\leq \hat L(\hat f) +  \lambda \|\{\hat a_j\}\|_{2/L}^{2/L}, \\
      %----------------------------------
      \hat L_\parallel(\tilde f) + \lambda \|\{\tilde a_j\}\|_{2/L}^{2/L} &\leq L_\parallel(\hat f) +  \lambda \|\{\hat a_j\}\|_{2/L}^{2/L}
   \end{eqal*}
   From \autoref{prop:msecov}, we get 
   \begin{eqal}
      \label{eq:errwr}
      &(1-\epsilon)\MSE(\tilde f) - O\left(\frac{\log \gN(\|\{\tilde a_j\}\|_{2/L}^{2/L}, \delta)}{n} \right) + \lambda \|\{\tilde a_j\}\|_{2/L}^{2/L} \\
      %--------------------------
      &\leq (1+\epsilon) \MSE(\hat f) + O\left(\frac{\log \gN(\|\{\hat a_j\}\|_{2/L}^{2/L}, \delta)}{n}\right) + \lambda \|\{\hat a_j\}\|_{2/L}^{2/L}
   \end{eqal}
   Observing that $\MSE(\tilde f) \geq 0$, and $\frac{\log \gN(\|\{\hat a_j\}\|_{2/L}^{2/L}, \delta)}{n} \eqsim MSE(\hat f)$ for the optimally chosen $P'$, taking the assumtion into the inequality proves the choice of $\lambda$.
   % where $\{\hat a\}$ are the coefficients in $\hat f$.
   % Because both $L(\tilde f)$ and $R(\tilde f)$ are nonnegative, we have 
   % \begin{eqal*}
   %    % &\hat L( \tilde f) \leq 2\hat L( \hat f), 
   %    % \quad P'' := \|\{\tilde a_j\}\|_{2/L}^{2/L} \leq \frac{2\hat L( \hat f)}{\lambda} = 2P'
   %    R(\tilde f) \leq \frac{1}{\lambda} (L(\hat f) + \lambda R(\hat f)) = 2R(\hat f).
   % \end{eqal*}
   % Because the feasible region of the constrained optimization problem is the subset of the regularized optimization problem,
   % \begin{eqal*}
   %    \min_{f_\textrm{regularized}} L(f) \leq \min_{f_\textrm{constrained}} L(f)
   % \end{eqal*} 
   % Taking them into \autoref{eq:errn} finishes the proof.
\end{proof}
}

\modif{
   \begin{remark}
      Define $R(\lambda) := R(\argmin \hat L(f) + \lambda R(f))$, where $R(f) = \|\{a_j\}\|_{2/L}^{2/L}$ is the regularizer term of a parallel NN ($f$).
      Notice that $R(\lambda)$ is a non-increasing function of $lambda$ (as proved below), the assumption in \autoref{lemma:cons2regu} is equivalent to that if $R(\lambda)$ contains any uncontinuous points, then the uncontinuous points should not be larger than $\frac{C_2}{C_1}$ in ratio.
      On the other hand,
      if $\lambda$ is chosen as $\lambda = O(\frac{\MSE(\hat f)}{P'})$, then from \autoref{eq:errwr}, we get 
      \begin{eqal*}
         \lambda \|\{\tilde a_j\}\|_{2/L}^{2/L} &\leq
         O(\MSE(\hat f)) +  O\left(\frac{\log \gN(\|\{\hat a_j\}\|_{2/L}^{2/L}, \delta)}{n}\right) \\
         & \leq O(\MSE(\hat f)) + \frac{1}{n}\tilde O((\|\{\hat a_j\}\|_{2/L}^{2/L})^{\frac{1}{1-2/L}})
      \end{eqal*}
      If the constant term in $\lambda$ is large enough, the above inequality yields two sets of solutions: 
      $$
         \|\{\tilde a_j\}\|_{2/L}^{2/L} \leq  O\Bigg(\|\{\hat a_j\}\|_{2/L}^{2/L} + \frac{1}{\lambda} \MSE(\hat f) \Bigg) =  O(\|\{\hat a_j\}\|_{2/L}^{2/L}),
      $$ 
      and 
         $$\|\{\tilde a_j\}\|_{2/L}^{2/L} \geq \tilde{O}\big((n\lambda)^{\frac{1-2/L}{2/L}}\big).
      $$
      In the first case, one can easily see from \autoref{eq:errwr} that $\MSE(\tilde f) \leq O(\MSE(\hat f))$, which says that the MSE of the regularized problem is close to the minimax rate; in the later case, the generalization gap of the regularized problem is bounded by $O(n^{\frac{1-2/L}{2/L}}\lambda^{L/2})$, which is much larger than the former case. 
      So a sufficient condition of the above assumption is that the model does not overfit significantly (by orders of magnitude) more than the constrained version.
      In our experiment, we find that the latter case is very difficult to happen, possibly because of the implicit regularization during training, and the connection between $\lambda$ and effective degree of freedom is actually smooth.
      Notably, as $L$ gets larger, in the second case $\|\{\tilde a_j\}\|_{2/L}^{2/L}$ increases exponentially with $L$ (the constant terms depends at most polynomially on $L$), which suggests that the latter case is less likely to happen for deep neural networks.
   \end{remark}
}

\modif{
   \begin{claim}
      For fixed $\gD$, the regularized problem satisfy that $R(\lambda)$ as defined above is strictly non-increasing with $\lambda$. 
   \end{claim}
   \begin{proof}
      We provide a short proof by contradiction: 
      suppose that there exists $lambda_1 <\lambda_2$, and the solution satisfy $(f_1) <R(f_2)$ where $R(f) = \|\{a_j\}\|_{2/L}^{2/L}$ is the regularizer term of a parallel NN, $f_1, f_2$ are the solution to the regularized problem with $\lambda = \lambda_1, \lambda_2$ respectively. 
      Then by definition of $f_1, f_2$, we have $\hat L(f_1)+\lambda_1 R(f_1) \leq  \hat L(f_2) + \lambda_1 R(f_2)$ , so $\lambda_1 \geq \frac{\hat L(f_1) - \hat L(f_2)}{R(f_2) -R(f_1)}$; $\hat L(f_2)+\lambda_2 R(f_2) \leq \hat L(f_1) + \lambda_2 R(f_1)$, so $ \lambda_2 \leq \frac{\hat L(f_1) - \hat L(f_2)}{R(f_2) -R(f_1)}$ which is controversal to our assumption that $\lambda_1 <\lambda_2$. 
   \end{proof}
}

\section{More discussion about the main result}
\label{sec:adddis}
\noindent\textbf{Representation learning and adaptivity.} The results also shed a light on the role of representation learning in DNN's ability to adapt. Specifically, different from the two-layer NN in \citep{parhi2021banach},
% we explained in Section~\ref{sec:warmup}, 
which achieves the minimax rate of $BV(m)$ by choosing appropriate activation functions using each $m$, each subnetwork of a parallel NN can learn to approximate the spline basis of an arbitrary order, which means that if we choose $L$ to be sufficiently large, such Parallel NN with optimally tuned $\lambda$ is simultaneously near optimal for $m = 1,2,3,\dots$. In fact, even if different regions of the space has 
%not only different levels of smoothness within the same order, but also 
different \emph{orders} of smoothness,
%  like in the example we gave in Figure~\ref{fig:illus}, 
the paralle NN will still be able to learn appropriate basis functions in each local region. To the best of our knowledge, this is a property that none of the classical nonparametric regression methods possess.

\noindent\textbf{Synthesis v.s. analysis methods.} Our result could also inspire new ideas in estimator design. There are two families of methods in non-parametric estimation. One called \emph{synthesis} framework which focuses on constructing appropriate basis functions to encode the contemplated structures and regress the data to such basis, e.g., wavelets \citep{donoho1998minimax}. The other is called \emph{analysis} framework which uses analysis regularization on the data directly (see, e.g., RKHS methods \citep{scholkopf2001learning} or trend filtering \citep{tibshirani2014adaptive}).  It appears to us that parallel NN is doing both simultaneously. It has a parametric family capable to synthesizing an $O(n)$ subset of an exponentially large family of basis, then \emph{implicitly} use sparsity-inducing analysis regularization to select the relevant basis functions. In this way the estimator does not actually have to explicitly represent that exponentially large set of basis functions, thus computationally more efficient. 

\noindent\textbf{Random design problem.}
This paper focuses on the fixed design problem such that the results are comparable to that in nonparametic regression. One can easily apply the technique in this paper to achieve the estimation error bound on the random design problem:
\begin{theorem}
   \label{thm:rand}
   Under the same condition as \autoref{thm:main}, the solution $\hat{f}$ parameterized by \autoref{eq:l2} satisfies

   \begin{eqal*}
      &\E_\gD \E_f \MSE(\hat{f}) \leq \tilde O \Bigg(\Big(\frac{w^{4-4/L}L^{2-4/L}}{n^{1-2/L}}\Big)^\frac{2\alpha/d}{2\alpha/d+1-2/(pL)}  + e^{-c_6 L}\Bigg)
   \end{eqal*}
   where $\tilde O$ shows the scale up to a logarithmic factor, and $c_6$ is the constant defined in \autoref{thm:pnormapp}, $\E_\gD$ indicates that the expectation is taken with respect to the training set $\gD$, $\E_f$ indicates
   that the expectation is taken with respect to the domain of $f$.
\end{theorem}

The proof is similar to that of \autoref{thm:main}. The main difference lays in the proof of the estimation error. For $f_\bot$ part, the estimation error can be bounded using VC-dimension, which is 1. For $f_\parallel$ part, the estimation error can be bounded using its covering number, e.g. Lemma 8 in \citet{schmidt2020nonparametric}.

% Finally, if $\lambda$ is optimally chosen, then it achieves a smaller MSE than this particular $\lambda'$, which completes the proof.

%finishes the proof. 

\yws{Comment 3: I am a bit confused by the discussion below. There is no guarantee that when you penalize $p$-norm, you will get an exactly $\bar{M}$-sparse approximation (at least we did not prove it and I think we did not need to prove it). The $\bar{M}$-sparse approximation is used for the approximation only? The actual solution might not be $\bar{M}$-sparse, but will be at least as good even if it is fully dense.}

% have 
% $O(n^\frac{1-2/L}{2\alpha/d + 1 - 2/(pL)})$ 
% $\bar M$
% number of ``active'' subnetworks.

\section{Detailed experimental setup}
\label{sec:expdet}
\subsection{Target Functions}
% The piecewise cubic function used in \autoref{fig:exp}(a)-(c) is 
% \begin{eqal*}
%    f(x) &= 20(x-0.3)^3 -x -80 (x-0.6)_+^3 + 160(x-0.7)_+^3-320(x-0.8)_+^3 \\
%    &+320(x-0.85)_+^3 + 160(x-0.9)_+^3
% \end{eqal*}
% where $x_+ := \max(x, 0)$. 
The doppler function used in \autoref{fig:exp}(d)-(f) is  
\begin{eqal*}
   f(x) = \sin(4/(x+0.01)) + 1.5.
\end{eqal*}
The ``vary'' function used in \autoref{fig:exp}(g)-(i) is 
\begin{eqal*}
    f(x) &= M_1(x/0.01)+M_1((x-0.02)/0.02)+M_1((x-0.06)/0.03)\\
    &+M_1((x-0.12)/0.04)+M_3((x-0.2)/0.02)+M_3((x-0.28)/0.04)\\
    &+M_3((x-0.44)/0.06)+M_3((x-0.68)/0.08),
\end{eqal*}
where $M_1, M_3$ are first and third order Cardinal B-spline bases functions respectively.
% where $x_+ := \max(x, 0)$. 
We uniformly take 256 samples from 0 to 1 in the piecewise cubic function experiment, and uniformly 1000 samples from 0 to 1 in the doppler function and ``vary'' function experiment.
We add zero mean independent (white) Gaussian noise to the observations. The standard derivation of noise is
% 0.05 in the piecewise cubic function experiment,
0.4 in the doppler function experiment and 0.1 in the ``vary'' function experiment.

\subsection{Training/Fitting Method}
% In the experiments, the nerual network has a total depth of 10 and each subnetwork has a width of 10. 
% In the piecewise cubic function experiment, 100 subnetworks are used and in the doppler function experiment, 500 subnetworks are used. 
% In both experiments, the neural network contains 500 subnetworks.
In the piecewise polynomial function (``vary'') experiment, the depth of the PNN $L=10$, the width of each subnetwork $w=10$, and the model contains $M=500$ subnetworks. 
The depth of NN is also 10, and the width is 240 such that the NN and PNN have almost the same number of parameters.
In the doppler function experiment, the depth of the PNN $L=12$, the width of each subnetwork $w=10$, and the model contains $M=2000$ subnetworks, because this problem requires a more complex model to fit.
The depth of NN is 12, and the width is 470.  
We used Adam optimizer with learning rate of $10^{-3}$. 
We first train the neural network layer by layer without weight decay. Specifically, we start with a two-layer neural network with the same number of subnetworks and the same width in each subnetwork, then train a three layer neural network by initializing the first layer using the trained two layer one, until the desired depth is reached. After that, we turn the weight decay parameter and train it until convergence. 
In both trend filtering and smoothing spline experiment, the order is 3, and in wavelet denoising experiment, we use sym4 wavelet with soft thresholding.
We implement the trend filtering problem according to \citet{tibshirani2014adaptive} using CVXPY, and use MOSEK to solve the convex optimization problem. 
We directly call R function $smooth.spline$ to solve smoothing spline.

\subsection{Post Processing}
% The degree of freedom of parallel neural network is defined as the number of subnetworks whose weights are nonzero. Empirically, we compute $\sum_{\ell=1}^L \log \|W^{(\ell)}_j\|_F$ for each $j$, and count a subnetwork as ``zero'' if $\sum_{\ell=1}^L \log \|W^{(\ell)}_j\|_F \leq \max_{j'} \sum_{\ell=1}^L \log \|W^{(\ell)}_{j'}\|_F-50$ or equivalently $\prod_{\ell=1}^L  \|W^{(\ell)}_j\|_F \leq e^{-50} \max_{j'} \prod_{\ell=1}^L  \|W^{(\ell)}_{j'}\|_F$.
The degree of freedom of smoothing spline is returned by the solver in R, which is rounded to the nearest integer when plotting. 
To estimate the degree of freedom of trend filtering, for each choice of $\lambda$, we repeated the experiment for 10 times and compute the average number of nonzero knots as estimated degree of freedom.
For neural networks, we use the definition \citep{tibshirani2015degrees}:
\begin{eqal}
   2\sigma^2 \textrm{df} = \E\|\vect y'-\hat {\vect y}\|_2^2 - \E\|\vect y - \hat {\vect y}\|_2^2
   \label{eq:df}
\end{eqal}
where $\textrm{df}$ denotes the degree of freedom, $\sigma^2$ is the variance of the noise, $\vect y$ are the labels, $\hat {\vect y}$ are the predictions and $\vect y'$ are independent copy of $y$. 
We find that estimating \autoref{eq:df} directly by sampling leads to large error when the degree of freedom is small. 
Instead, we compute   
\begin{eqal}
   2\sigma^2 \hat{\textrm{df}} = \hat\E\|\vect y_0-\hat {\vect y}\|_2^2 - \hat \E\|\vect y - \hat {\vect y}\|_2^2 + \hat \E\|\vect y - \bar y_0\|_2^2 - \|\vect y_0 - \bar y_0\|_2^2
   \label{eq:dfe}
\end{eqal}
where $\hat{\textrm{df}}$ is the estimated degree of freedom, $\E$ denotes the empirical average (sample mean), $\vect y_0$ is the target function and $\bar y_0$ is the mean of the target function in its domain. 
\begin{proposition}
   The expectation of \autoref{eq:dfe} over the dataset $\gD$ equals \autoref{eq:df}.
\end{proposition}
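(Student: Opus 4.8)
The plan is to take the population expectation $\E_\gD$ of the right-hand side of \autoref{eq:dfe} and show that it collapses exactly to the right-hand side of \autoref{eq:df}. Write the training labels as $\vect y = \vect y_0 + \vect{\epsilon}$ and the independent copy as $\vect y' = \vect y_0 + \vect{\epsilon}'$, where $\vect{\epsilon}, \vect{\epsilon}'$ are independent $\gN(\vect 0, \sigma^2 \mat I)$ vectors and $\hat{\vect y} = \hat{\vect y}(\vect y)$ is a measurable function of the training labels $\vect y$ alone. Since $\hat\E$ denotes an average over i.i.d. realizations of the dataset, $\E_\gD[\hat\E[\,\cdot\,]] = \E[\,\cdot\,]$ by linearity, so applying $\E_\gD$ to \autoref{eq:dfe} gives
\[
   \E_\gD[2\sigma^2 \hat{\textrm{df}}] = \E\|\vect y_0 - \hat{\vect y}\|_2^2 - \E\|\vect y - \hat{\vect y}\|_2^2 + \E\|\vect y - \bar y_0\|_2^2 - \|\vect y_0 - \bar y_0\|_2^2 .
\]
Comparing with \autoref{eq:df}, the term $-\E\|\vect y - \hat{\vect y}\|_2^2$ is common to both, so after cancelling it I only need to prove the single identity
\[
   \E\|\vect y_0 - \hat{\vect y}\|_2^2 + \E\|\vect y - \bar y_0\|_2^2 - \|\vect y_0 - \bar y_0\|_2^2 = \E\|\vect y' - \hat{\vect y}\|_2^2 .
\]

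I would establish this via two short second-moment expansions. First, writing $\vect y' - \hat{\vect y} = (\vect y' - \vect y_0) + (\vect y_0 - \hat{\vect y})$ and expanding the square, the cross term is $2\,\E[(\vect{\epsilon}')^\top(\vect y_0 - \hat{\vect y})]$; because $\vect{\epsilon}'$ is independent of $\hat{\vect y}$ (which depends only on $\vect{\epsilon}$) and has mean zero, this factorizes as $2\,\E[\vect{\epsilon}']^\top \E[\vect y_0 - \hat{\vect y}] = 0$, and since $\E\|\vect{\epsilon}'\|_2^2 = n\sigma^2$ we obtain $\E\|\vect y' - \hat{\vect y}\|_2^2 = n\sigma^2 + \E\|\vect y_0 - \hat{\vect y}\|_2^2$. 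Second, since $\vect y_0$ and $\bar y_0$ are deterministic and $\E[\vect{\epsilon}] = \vect 0$, expanding $\vect y - \bar y_0 = (\vect y_0 - \bar y_0) + \vect{\epsilon}$ yields $\E\|\vect y - \bar y_0\|_2^2 = \|\vect y_0 - \bar y_0\|_2^2 + n\sigma^2$, i.e. $\E\|\vect y - \bar y_0\|_2^2 - \|\vect y_0 - \bar y_0\|_2^2 = n\sigma^2$.

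Substituting both computations into the reduced identity, its left side equals $\E\|\vect y_0 - \hat{\vect y}\|_2^2 + n\sigma^2 = \E\|\vect y' - \hat{\vect y}\|_2^2$, which is exactly its right side. Hence $\E_\gD[2\sigma^2\hat{\textrm{df}}] = \E\|\vect y' - \hat{\vect y}\|_2^2 - \E\|\vect y - \hat{\vect y}\|_2^2 = 2\sigma^2\,\textrm{df}$ by the definition in \autoref{eq:df}, which proves that $\hat{\textrm{df}}$ is unbiased.

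I do not expect a genuine obstacle, since everything reduces to mean-zero and variance identities for the Gaussian noise. The one step that must be handled with care is the vanishing of the cross term in the first expansion: it relies on the structural fact that $\hat{\vect y}$ is computed from the training sample $\vect y$ and is therefore statistically independent of the fresh copy $\vect y'$. Conceptually, the terms $\E\|\vect y - \bar y_0\|_2^2 - \|\vect y_0 - \bar y_0\|_2^2$ act as an unbiased, lower-variance surrogate for the constant $n\sigma^2$ that a naive Monte-Carlo estimate of \autoref{eq:df} would otherwise incur through the noisy copy $\vect y'$, which is precisely the variance-reduction motivation for using \autoref{eq:dfe}.
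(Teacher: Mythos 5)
Your proof is correct and follows essentially the same route as the paper's: both reduce the claim to the two Gaussian identities $\E\|\vect y - \bar y_0\|_2^2 - \|\vect y_0 - \bar y_0\|_2^2 = n\sigma^2$ (mean-zero noise against deterministic quantities) and $\E\|\vect y' - \hat{\vect y}\|_2^2 = \E\|\vect y_0 - \hat{\vect y}\|_2^2 + n\sigma^2$ (independence of the fresh noise $\vect\epsilon'$ from $\hat{\vect y}$). The only cosmetic difference is that the paper obtains the first identity via the difference-of-squares factorization $(\vect y - \vect y_0)^\top(\vect y + \vect y_0 - 2\bar y_0)$ while you expand the square directly, which is the same computation.
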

\begin{proof}
\begin{eqal*}
   2\sigma^2 \hat{\textrm{df}} &= \E_\gD[\hat\E\|\vect y_0-\hat {\vect y}\|_2^2 - \hat \E\|\vect y - \hat {\vect y}\|_2^2 + \hat \E\|\vect y - \bar y_0\|_2^2 - \|\vect y_0 - \bar y_0\|_2^2] \\
   &= \E\|\vect y_0-\hat {\vect y}\|_2^2 - \E\|\vect y - \hat {\vect y}\|_2^2 +  \E_\gD[\hat \E[(\vect y - \vect y_0)(\vect y + \vect y_0 - 2\bar y_0)]]\\
   &= \E\|\vect y_0-\hat {\vect y}\|_2^2 - \E\|\vect y - \hat {\vect y}\|_2^2 + \E\Big[\sum_{i=1}^n \epsilon_i(2y_i + \epsilon_i - 2\bar y_0)\Big]\\
   &= \E\|\vect y_0-\hat {\vect y}\|_2^2 - \E\|\vect y - \hat {\vect y}\|_2^2 + n\sigma^2\\
   &= \E\|\vect y'-\hat {\vect y}\|_2^2 - \E\|\vect y - \hat {\vect y}\|_2^2
\end{eqal*}
where $\gD$ denotes the dataset. In the third line, we make use of the fact that $\E[\epsilon_i]=0, \E[\epsilon_i^2] = \sigma^2$, and in the last line, we make use of $\E[\epsilon'_i]=0, \E[{\epsilon'_i}^2] = \sigma^2$, and $\epsilon'_i$ are independent of $y_i$ and $y_{0, i}$
\end{proof}
One can easily check that 
% this estimation is an unbiased estimator to the definition \autoref{eq:df}, and 
a ``zero predictor'' (a predictor that always predict $\bar y_0$, and it always predicts 0 if the target function has zero mean) always has an estimated degree of freedom of 0.

In \autoref{fig:exp}(h)(i), we take the  minimum MSE over different choices of $\lambda$, and plot the average over 10 runs.
Due to optimization issue, sometimes the  neural networks are stuck at bad local minima and the empirical loss is larger than the global minimum by orders of magnitude. 
To deal with this problem, in \autoref{fig:exp}(h)(i), we manually detect these results by removing the experiments where the MSE is larger than 1.5 times the average MSE under the same setting, and remove them before computing the average.

\subsection{More experimental results}

\subsubsection{Regularization weight vs degree-of-freedom}
As we explained in the previous section, the degree of freedom is the exact information-theoretic measure of the generalization gap. A Larger degree-of-freedom implies more overfitting.

\begin{figure}[h]
    \centering
    \subcaptionbox{}{\includegraphics[width=0.4\textwidth]{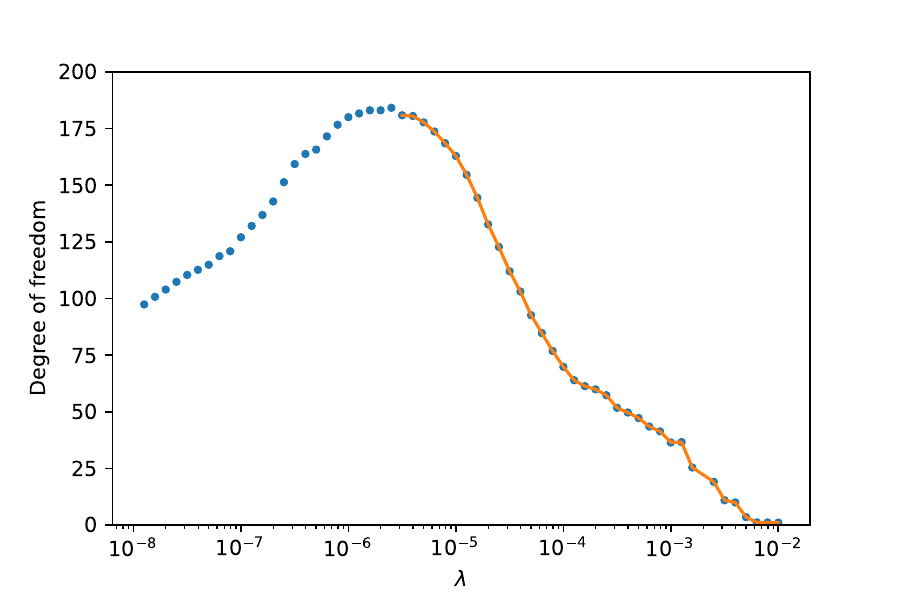}}
    \subcaptionbox{}{\includegraphics[width=0.4\textwidth]{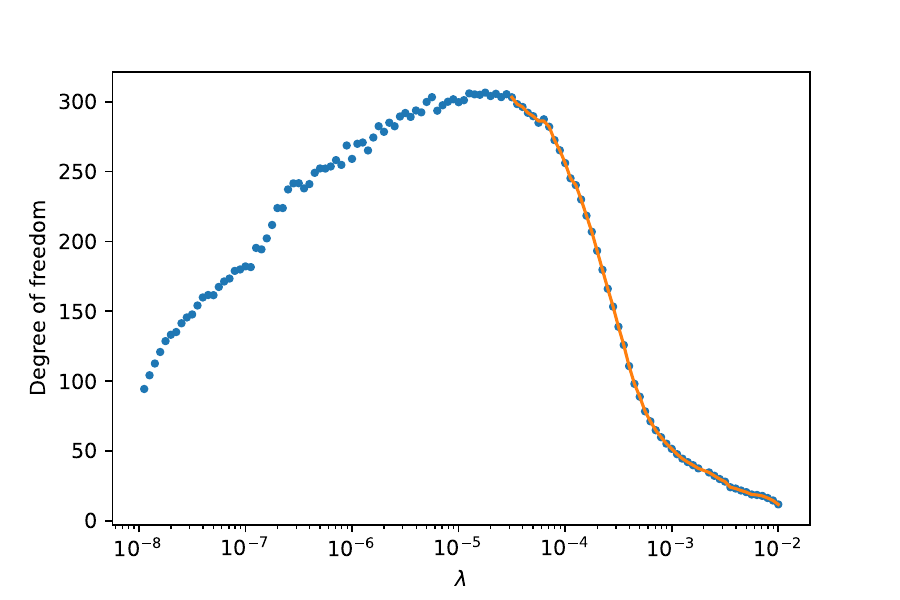}}\\
    \subcaptionbox{}{\includegraphics[width=0.4\textwidth]{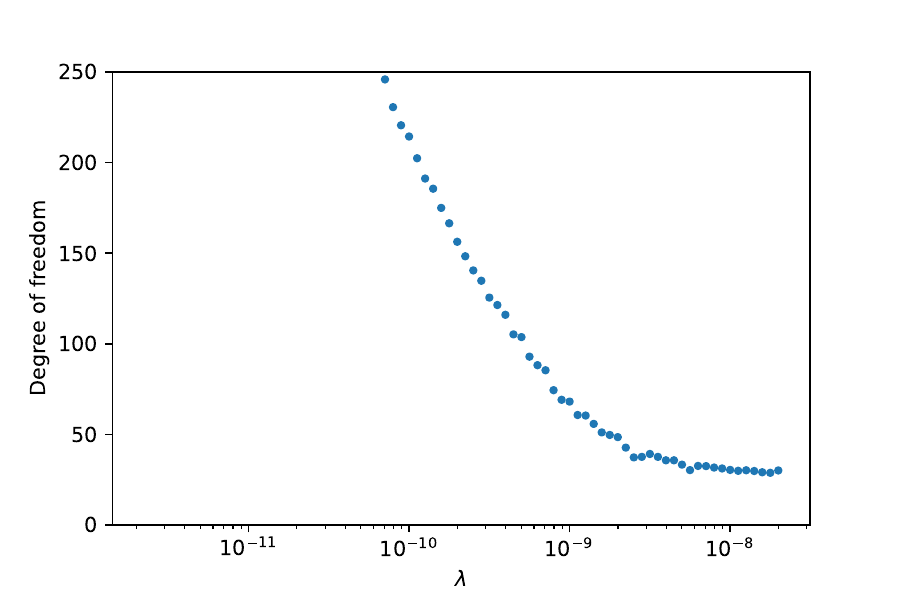}}
    \subcaptionbox{}{\includegraphics[width=0.4\textwidth]{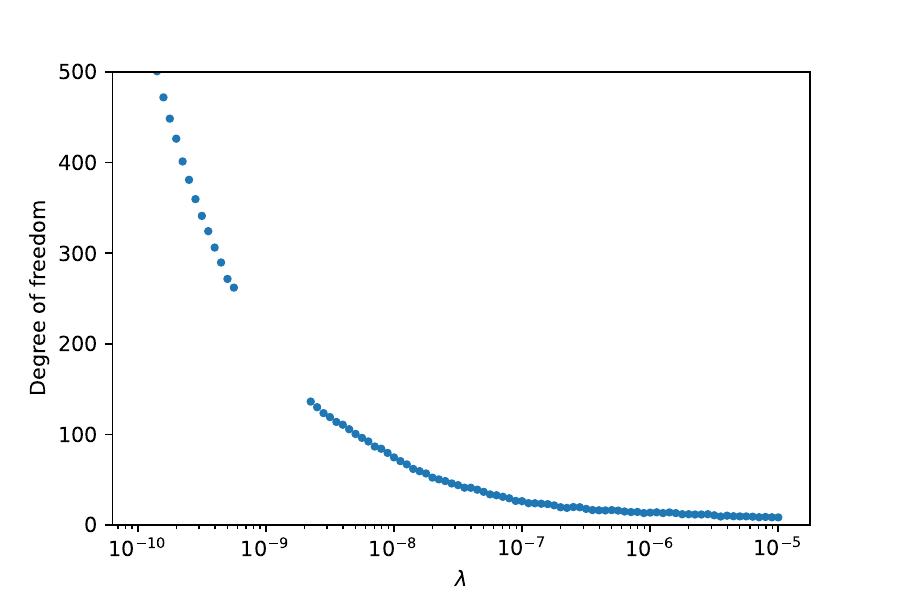}}
    \caption{The relationship between degree of freedom and the  scaling factor of the regularizer $\lambda$. The solid line shows the result after denoising. (a)(b)in a parallel NN. (c)(d) In trend filtering. (a)(c): the ``vary'' function. (b)(d) the doppler function.}
    \label{fig:df}
\end{figure}
In figure \autoref{fig:df}, we show the relationship between the estimated degree of freedom and the scaling factor of the regularizer $\lambda$ in a parallel neural network and in trend filtering.
As is shown in the figure, generally speaking as $\lambda$ decreases towards $0$, the degree of freedom should increase too.
However, for parallel neural networks, if $\lambda$ is very close to $0$, the estimated degree of freedom will not increase although the degree of freedom is much smaller than the number of parameters --- actually even smaller than the number of subnetworks.
Instead, it actually decreases a little. 
This effect has not been observed in other nonparametic regression methods, e.g. trend filtering, which overfits every noisy datapoint perfectly when $\lambda \rightarrow 0$. But for the neural networks, even if we do not regularize at all, the among of overfitting is still relatively mild $30 / 256$ vs $80/1000$. 
In our experiments using neural networks, when $\lambda$ is small, we denoise the estimated  degree of freedom using isotonic regression. 
% dof  at lambda --> 0 vs the total number of data points.   max dof should be n.

We do not know the exact reason of this curious observation. Our hypothesis is that it might be related to issues with optimization, i.e., the optimizer ends up at a local minimum that generalizes better than a global minimum; or it could  be connected to the ``double descent'' behavior of DNN \citep{nakkiran2021deep} under over-parameterization.

\subsubsection{Detailed numerical results}

% \begin{figure}[t]
%    \subcaptionbox{}{\includegraphics[trim=0 20 0 20, clip,width=0.33\textwidth]{figs/hill_nn.pdf}}
%    \subcaptionbox{}{\includegraphics[trim=0 20 0 20, clip,width=0.33\textwidth]{figs/hill_smooth.pdf}}
%    \subcaptionbox{}{\includegraphics[trim=0 20 0 20, clip,width=0.33\textwidth]{figs/hill_pnn.pdf}}\\
%    \subcaptionbox{}{\includegraphics[trim=0 20 0 20, clip,width=0.33\textwidth]{figs/hill_tf.pdf}}
%    \subcaptionbox{}{\includegraphics[trim=0 20 0 20, clip,width=0.33\textwidth]{figs/hill_wavelet.pdf}}
%    \subcaptionbox{}{\includegraphics[trim=0 20 0 20, clip,width=0.33\textwidth]{figs/hill_pnn_subnet.pdf}}
%    \caption{More experiments results of piecewise cubic function.}
%    \label{fig:extrahill}
% \end{figure}

\begin{figure}[t]
   \subcaptionbox{}{\includegraphics[trim=0 20 0 20, clip,width=0.33\textwidth]{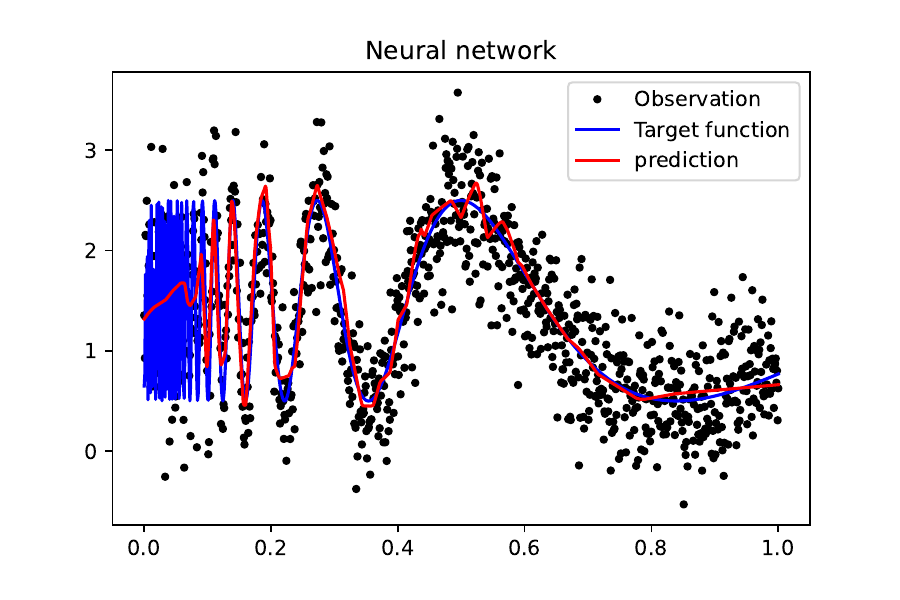}}
   \subcaptionbox{}{\includegraphics[trim=0 20 0 20, clip,width=0.33\textwidth]{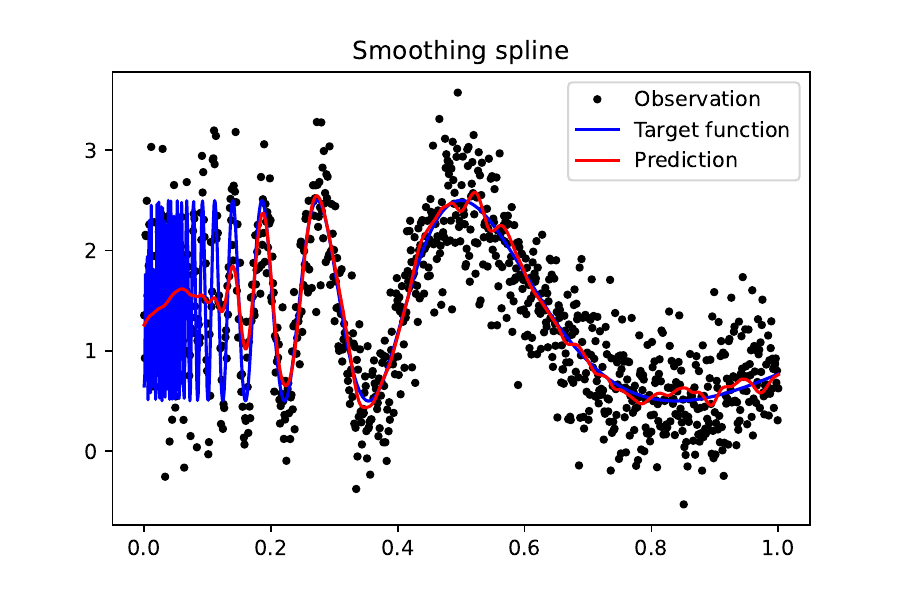}}
   \subcaptionbox{}{\includegraphics[trim=0 20 0 20, clip,width=0.33\textwidth]{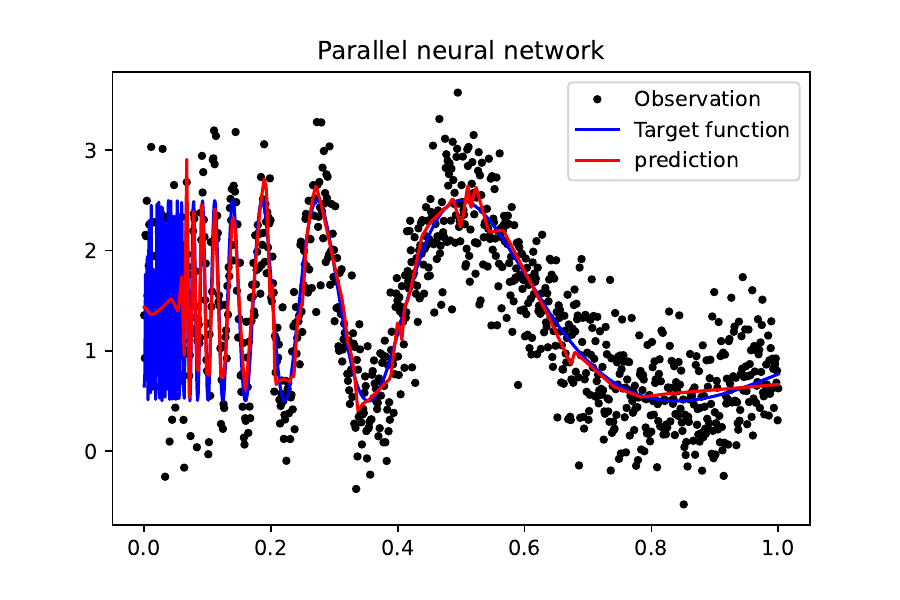}}\\
   \subcaptionbox{}{\includegraphics[trim=0 20 0 20, clip,width=0.33\textwidth]{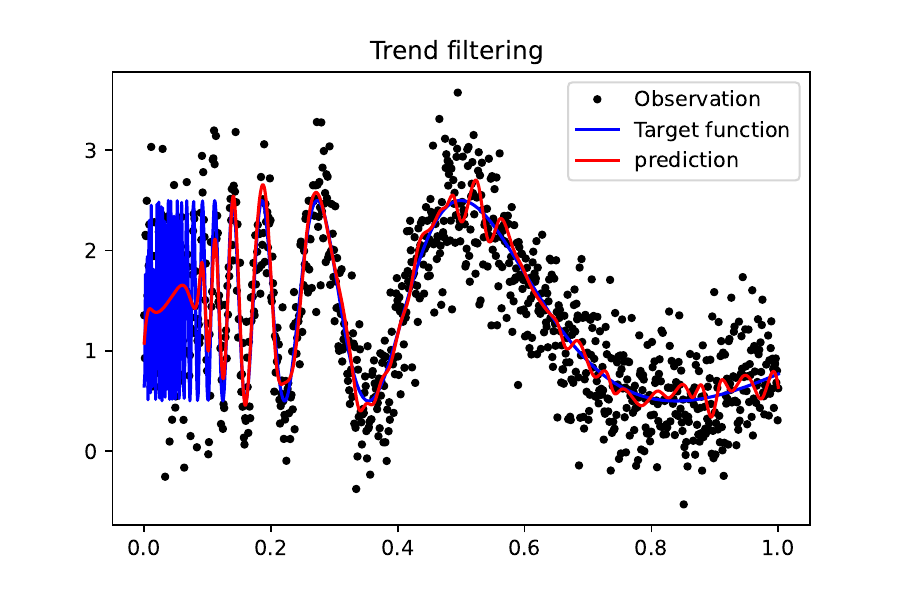}}
   \subcaptionbox{}{\includegraphics[trim=0 20 0 20, clip,width=0.33\textwidth]{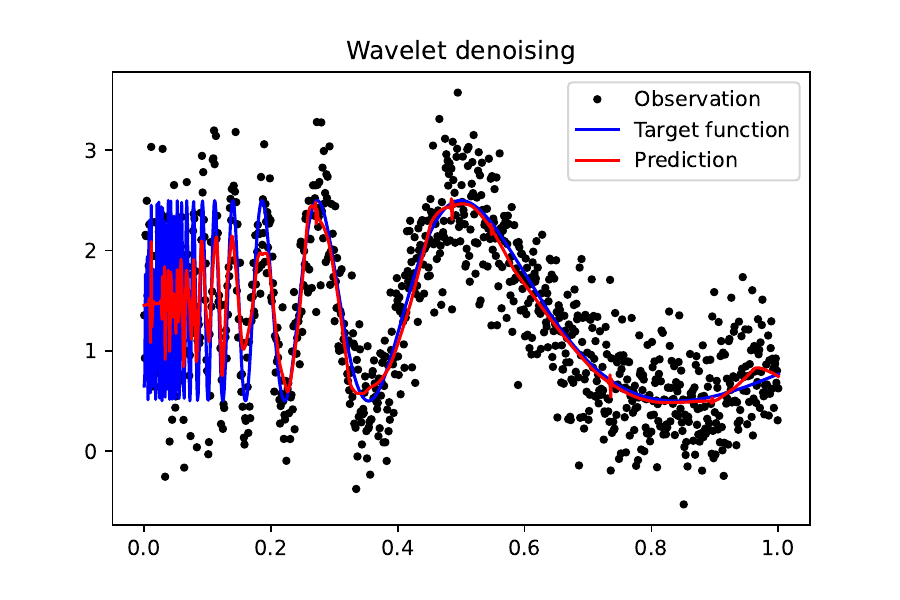}}
   \subcaptionbox{}{\includegraphics[trim=0 20 0 20, clip,width=0.33\textwidth]{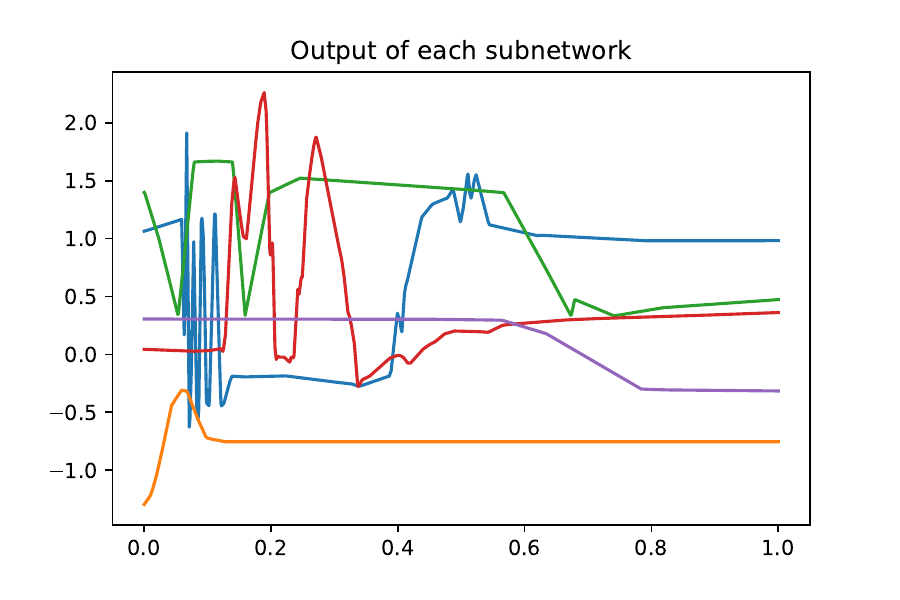}}
   \caption{More experiments results of Doppler function.}
   \label{fig:extradopler}
\end{figure}

\begin{figure}[t]
   \subcaptionbox{}{\includegraphics[trim=0 20 0 20, clip,width=0.33\textwidth]{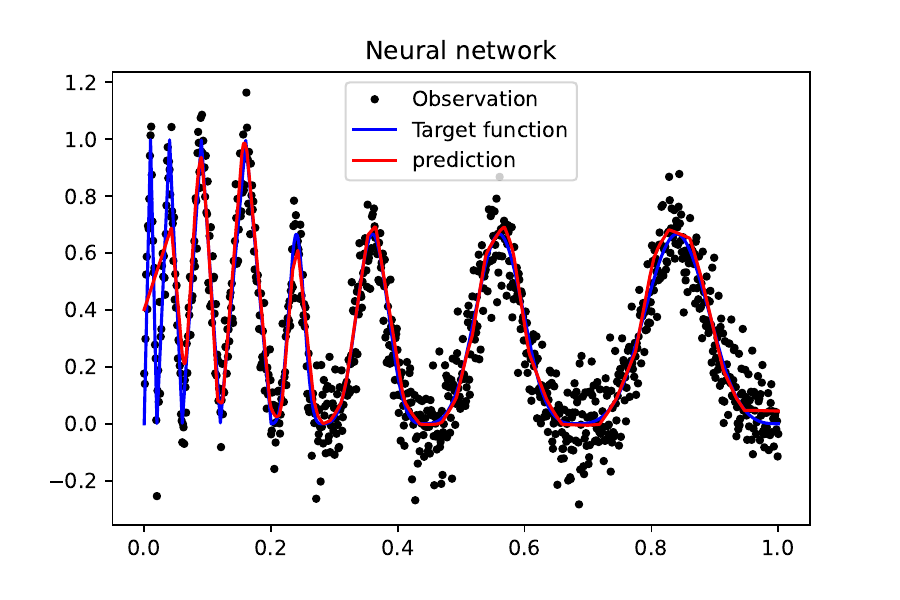}}
   \subcaptionbox{}{\includegraphics[trim=0 20 0 20, clip,width=0.33\textwidth]{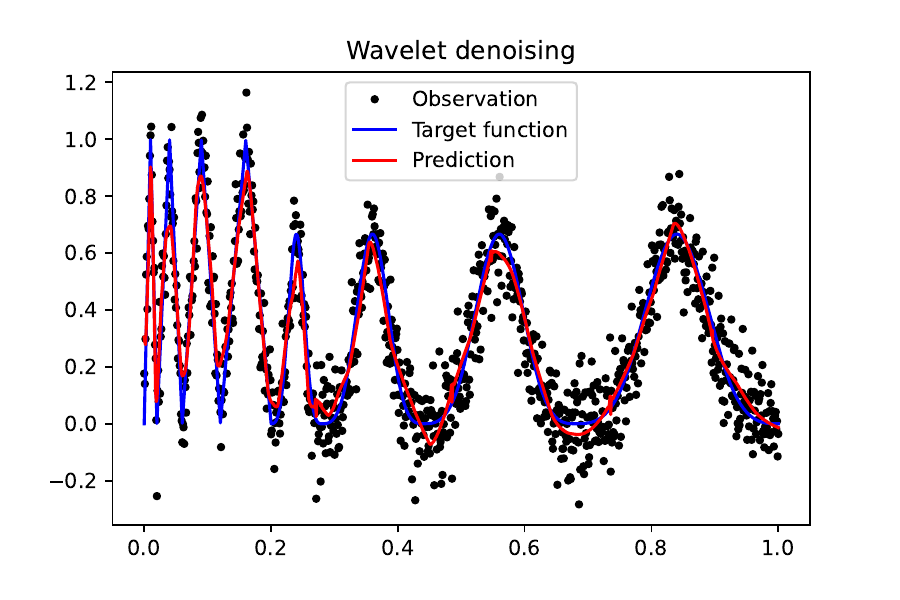}}
   \subcaptionbox{}{\includegraphics[trim=0 20 0 20, clip,width=0.33\textwidth]{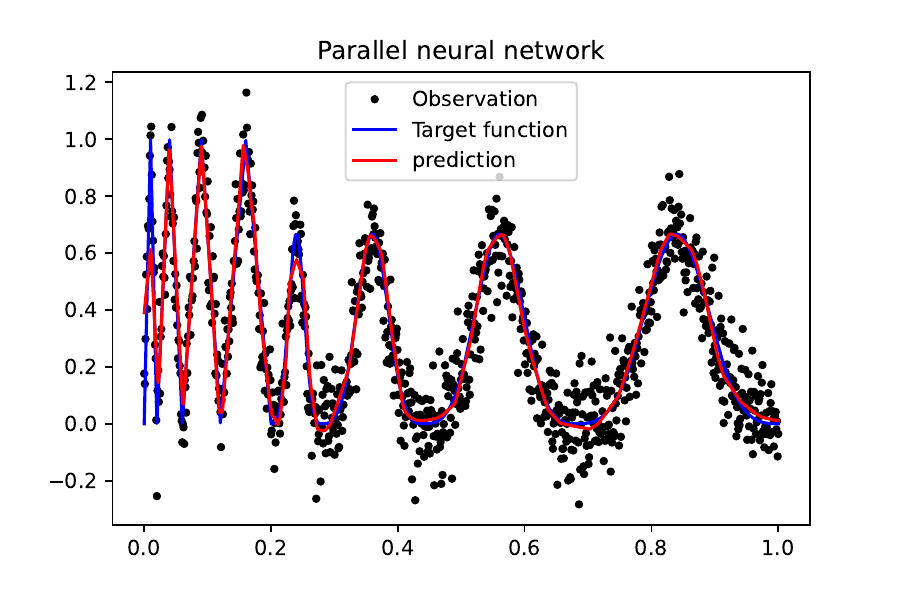}}\\
   \subcaptionbox{}{\includegraphics[trim=0 20 0 20, clip,width=0.33\textwidth]{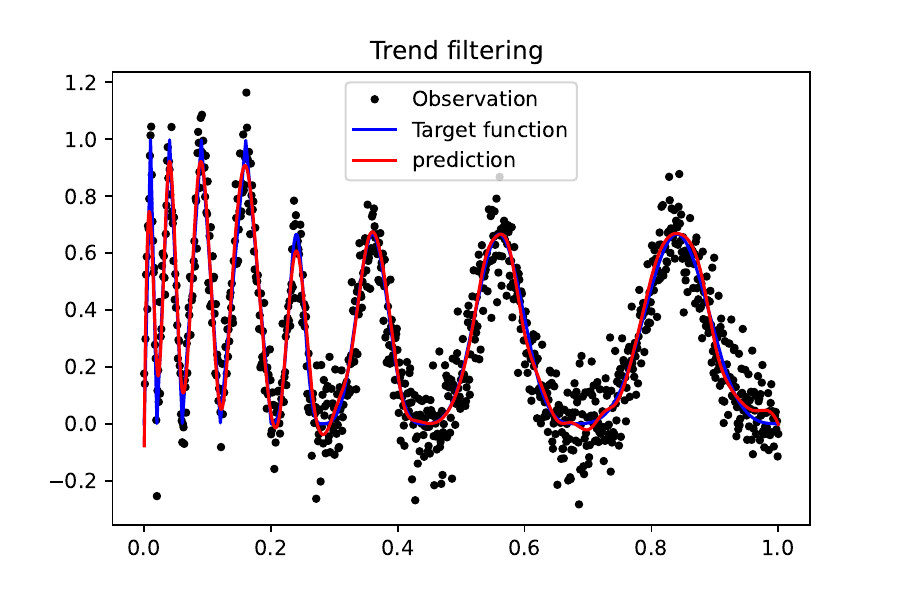}}
   \subcaptionbox{}{\includegraphics[trim=0 20 0 20, clip,width=0.33\textwidth]{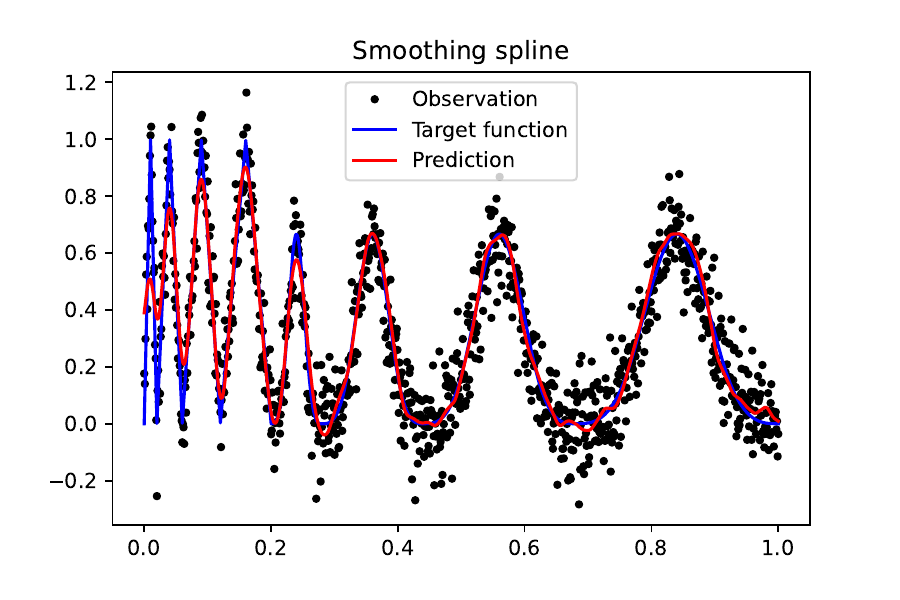}}
   \subcaptionbox{}{\includegraphics[trim=0 20 0 20, clip,width=0.33\textwidth]{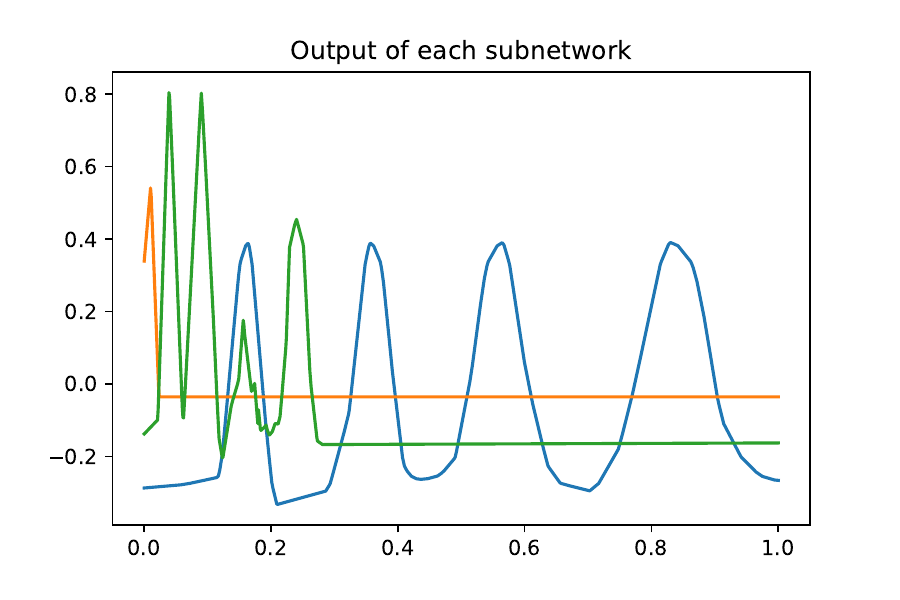}}
   \caption{More experiments results of the ``vary'' function.}
   \label{fig:extravary4}
\end{figure}

In order to allow the readers to view our result in detail,
we plot the numerical experiment results of each method separately in \autoref{fig:extradopler} and \autoref{fig:extravary4}.

\subsubsection{Practical equivalence between the weight-decayed two-layer NN and L1-Trend Filtering}

In this section we investigate the equivalence of two-layer NN and the locally adaptive regression splines from Section~\ref{sec:warmup}. In the special case when $m=1$ the special regularization reduces to weight decay and the non-standard truncated power activation becomes ReLU.  We compare L1 trend filtering \citep{kim2009ell_1} (shown to be equivalent to locally adaptive regression splines by \citet{tibshirani2014adaptive}) and an overparameterized  version of the neural network for all regularization parameter $\lambda >0$, i.e., a regularization path. The results are shown in Figure~\ref{fig:nntf}.  It is clear that as the weight decay increases, it induces sparsity in the number of knots it selects similarly to L1-Trend Filtering, and the regularization path matches up nearly perfectly even though NNs are also learning knots locations.

\begin{figure}[h]
    \centering
    \includegraphics[width=0.4\textwidth]{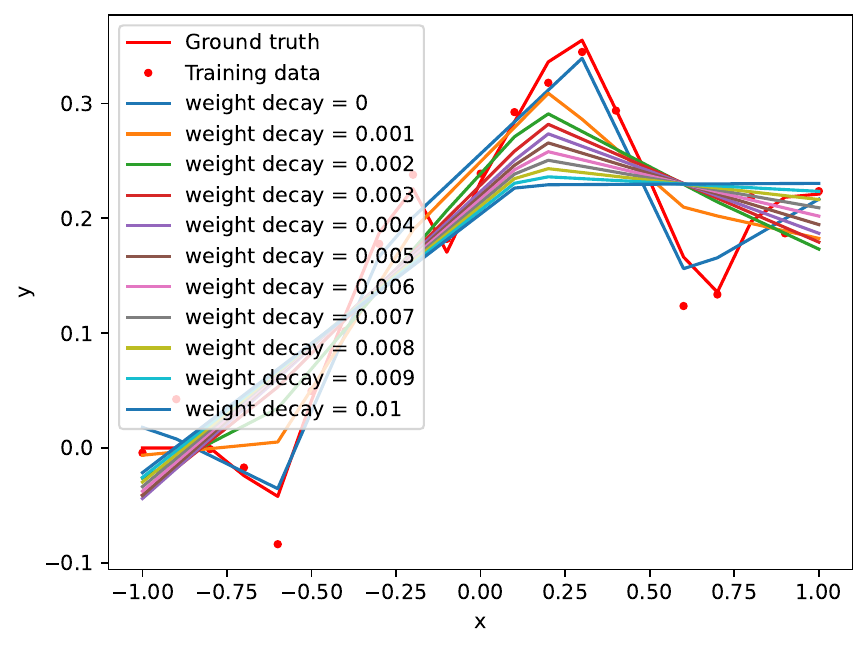}
    \includegraphics[width=0.4\textwidth]{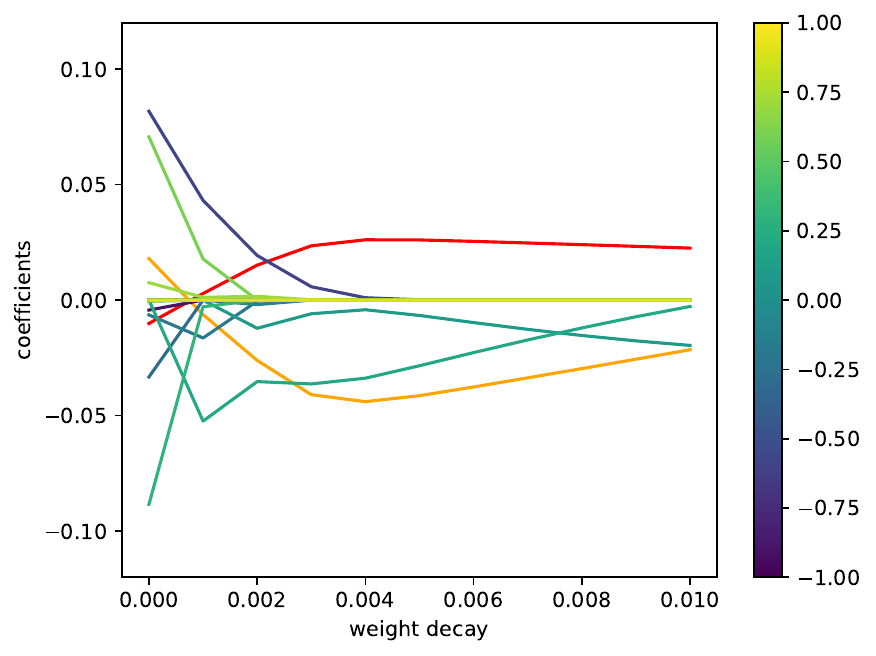}\\
    \includegraphics[width=0.4\textwidth]{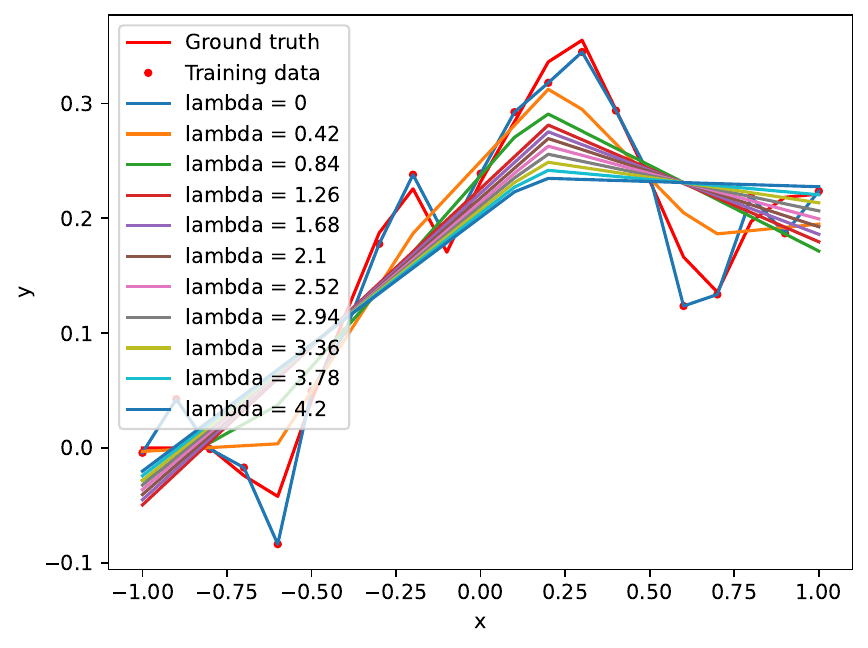}
    \includegraphics[width=0.4\textwidth]{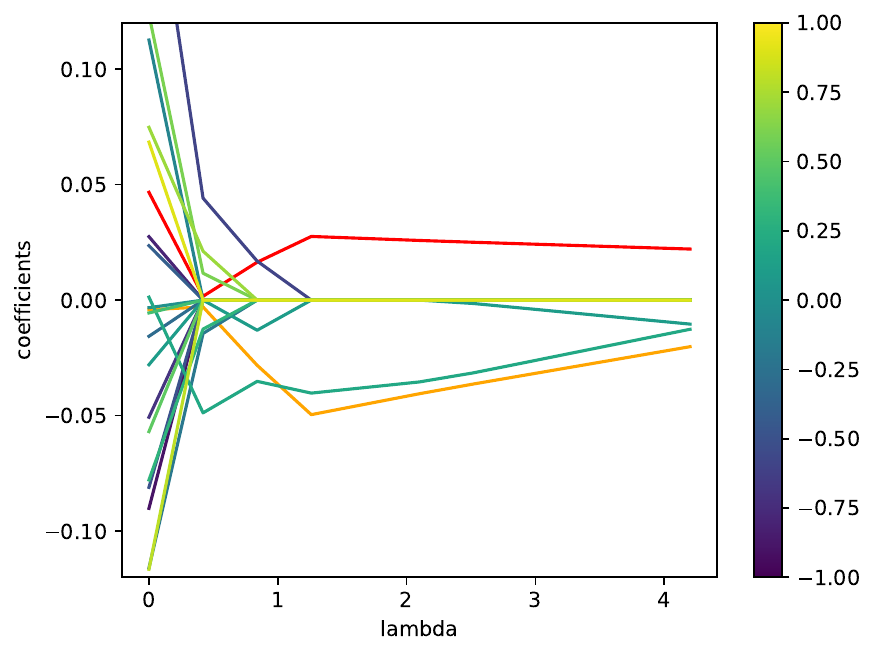}
    \vspace{-12pt}
    \caption{Comparison of the \textbf{weight decayed ReLU neural networks (Top row)} and \textbf{L1 Trend Filtering (Bottom row)} with different  regularization parameters. 
The left column shows the fitted functions and the right column shows the \emph{regularization path} (in the flavor of \cite{friedman2010regularization}) of the coefficients of the truncated power basis at individual data points (the free-knots learned by NN are snapped to the nearest input $x$ to be comparable). }
    \label{fig:nntf}
\end{figure}

\end{document}